\documentclass[a4paper]{article}

\usepackage{amsmath}
\usepackage{amssymb}
\usepackage{mathtools}
\usepackage{amsthm}
\usepackage{enumitem}
\usepackage{xcolor}
\usepackage{soul}
\usepackage{tikz}
\usepackage{url}
\usepackage[subnum]{cases}
\usepackage{bm}
\usepackage[]{lineno}
\usepackage[numbers,sort&compress]{natbib}
\PassOptionsToPackage{algo2e,ruled}{algorithm2e}
\RequirePackage{algorithm2e}
\usepackage{multirow}
\usepackage{hyperref}

\theoremstyle{plain}
\newtheorem{theorem}{Theorem}[section]
\newtheorem{proposition}[theorem]{Proposition}
\newtheorem{lemma}[theorem]{Lemma}
\newtheorem{corollary}[theorem]{Corollary}
\theoremstyle{definition}
\newtheorem{definition}[theorem]{Definition}

\theoremstyle{remark}
\newtheorem{remark}[theorem]{Remark}

\newcommand{\nbb}{\mathbb{N}}

\newcommand{\bw}{\mathbf{w}}
\newcommand{\prox}{\mbox{Prox}}

\newcommand{\ibb}{\mathbb{I}}
\newcommand{\xcal}{\mathcal{X}}
\newcommand{\wcal}{\mathcal{W}}

\newcommand{\zcal}{\mathcal{Z}}
\newcommand{\acal}{\mathcal{A}}
\newcommand{\pbb}{\mathbb{P}}
\newcommand{\mcal}{\mathcal{M}}
\newcommand{\ncal}{\mathcal{N}}

\newcommand{\ycal}{\mathcal{Y}}

\newcommand{\ebb}{\mathbb{E}}

\newcommand{\bv}{\mathbf{v}}

\newcommand{\rbb}{\mathbb{R}}

\numberwithin{equation}{section}

\linespread{1.2}

\title{Stability and Generalization of Stochastic Optimization with Nonconvex and Nonsmooth Problems}

\textheight 23.6cm
\textwidth 15.6cm
\oddsidemargin 0.3in
\evensidemargin 0.3in
\topmargin  -0.4in

\author{%
  Yunwen Lei$^{1}$\\[1.2pt]
  $^1$Department of Mathematics, The University of Hong Kong\\[1.2pt]
  \texttt{leiyw@hku.hk}
}

\begin{document}

\maketitle

\begin{abstract}%
Stochastic optimization has found wide applications in minimizing objective functions in machine learning, which motivates a lot of theoretical studies to understand its practical success. Most of existing studies focus on the convergence of optimization errors, while the generalization analysis of stochastic optimization is much lagging behind. This is especially the case for nonconvex and nonsmooth problems often encountered in practice. In this paper, we initialize a systematic stability and generalization analysis of stochastic optimization on nonconvex and nonsmooth problems. We introduce novel algorithmic stability measures and establish their quantitative connection on the gap between population gradients and empirical gradients, which is then further extended to study the gap between the Moreau envelope of the empirical risk and that of the population risk. To our knowledge, these quantitative connection between stability and generalization in terms of either gradients or Moreau envelopes have not been studied in the literature. We introduce a class of sampling-determined algorithms, for which we develop bounds for three stability measures. Finally, we apply these results to derive error bounds for stochastic gradient descent and its adaptive variant, where we show how to achieve an implicit regularization by tuning the step sizes and the number of iterations.
\end{abstract}

%


\vspace*{-0.210cm}
\section{Introduction}
\vspace*{-0.131cm}

Stochastic optimization has become the workhorse behind many successful applications of machine learning (ML)~\citep{zhang2004solving,bottou2018optimization}. The basic idea is to introduce randomness into the design of optimization algorithms to speed up the learning process by using the sum structure of objective functions in ML. A representative algorithm is the stochastic gradient descent (SGD). As an iterative algorithm, SGD first randomly selects a single example from a training dataset to build a stochastic gradient, and then moves along the negative direction of this stochastic gradient to get the next iterate. Due to its cheap computation cost and simplicity, SGD is especially interesting to solve large-scale and complex learning problems. In the last decade, SGD has been improved in various directions from the viewpoint of Nesterov acceleration~\citep{nesterov1983method}, variance reduction~\citep{johnson2013accelerating,schmidt2017minimizing,defazio2014saga,fang2018near} and adaptive learning rates~\citep{duchi2010adaptive,kingma2015adam,zhou2018convergenceb}.

Motivated by the increasing popularity, researchers have studied the theoretical behavior of stochastic optimization. Depending on the property of objective functions, one can measure the progress of optimization in terms of different performance metrics. For strongly convex problems, one can use the distance between the output model and the best model as the performance measure since there is only a unique minimizer~\citep{bottou2018optimization,DBLP:conf/colt/ZhangZ19}. For convex problems, one can develop convergence rates in terms of functional suboptimality gap since there may exist several models with the same global function value~\citep{zhang2004solving}. For nonconvex and smooth problems, one can measure the performance through the magnitude of gradients since an algorithm is only guaranteed to find a local minimum~\citep{ghadimi2013stochastic}. The performance metric becomes more tricky for nonconvex and nonsmooth problems~\citep{davis2019stochastic,davis2021graphical}. For an objective function $\psi$, neither the functional suboptimality gap $\psi(\bw_t)-\inf\psi(\bw)$, nor the stationarity measure, $\text{dist}(0,\partial\psi(\bw_t))$, necessarily decay to zero along the optimization process~\citep{davis2019stochastic}. Here $\bw_t$ denotes an iterate of the algorithm, $\partial\psi(\bw_t)$ denotes the subdifferential and $\text{dist}$ denotes the Euclidean distance function. Recently, \citet{davis2019stochastic} proposed to use the Moreau envelope $\psi_\lambda(\bw)=\inf_{\bv}\big\{\psi(\bv)+\frac{1}{2\lambda}\|\bv-\bw\|_2^2\big\}$ as a useful potential function to study stochastic optimization for weakly convex problems~\footnote{A function is weakly convex if eigenvalues of Hessian matrices are lower bounded by a negative value.}. An intuitive understanding is that a small gradient $\|\nabla\psi_\lambda(\bw_t)\|_2$ implies that $\bw_t$ is near some point that is nearly stationary for the problem $\min_{\bw}\psi(\bw)$, which motivates the use of the performance measure $\|\nabla\psi_\lambda(\bw_t)\|_2$ for weakly convex problems. Weakly convex problems form an importance class of nonconvex and nonsmooth problems, with instantiations in various application domains such as phase retrieval, robust principal component analysis, covariance matrix estimation and sparse dictionary learning~\citep{davis2019stochastic}.

Most of existing studies focus on the convergence behavior of stochastic optimization algorithms from the perspective of optimization, i.e., how the trained model would behave on training examples. However, in ML we are more interested in the prediction behavior from the perspective of learning~\citep{mohri2012foundations}, i.e., how these models would behave on testing examples, which is much less studied for stochastic optimization. The gap between training and testing is a central topic in statistical learning theory (SLT). There are two major approaches to study the generalization gap: a \emph{uniform convergence} approach based on the complexity analysis of hypothesis spaces~\citep{bartlett2002rademacher} and an \emph{algorithmic stability} approach based on the sensitivity analysis of algorithms~\citep{bousquet2002stability} (for simplicity we always mean algorithmic stability when mentioning stability). Uniform convergence analysis applies to nonconvex problems, which, however, often leads to a square-root dependency on the dimensionality and therefore unfavorable for high-dimensional learning problems~\citep{feldman2019high}. Stability analysis can yield dimension-free bounds, which, however, often requires strong assumptions on loss functions such as convexity or smoothness. For example, most of the algorithmic stability analysis of stochastic optimization requires a convexity and a smoothness assumption~\citep{hardt2016train,kuzborskij2018data}. The smoothness assumption is removed in the recent study~\citep{lei2020fine,bassily2020stability}. In particular, the paper~\citep{bassily2020stability} develops matching lower bounds for convex and nonsmooth problems. For nonconvex problems, one typically requires a Polyak-\L ojasiewicz (PL) condition to get nontrivial error bounds of SGD~\citep{charles2018stability}. In the general nonconvex case, the stability analysis of SGD requires very small step sizes to get meaningful stability bounds~\citep{hardt2016train,kuzborskij2018data}, for which one cannot get meaningful optimization error bounds within reasonable computations. The strong assumption restricts the application domain of stability analysis for nonconvex and nonsmooth problems, which are often encountered in practice. To our knowledge, there is no stability analysis of stochastic optimization for problems that are simultaneously nonconvex and nonsmooth without restrictive assumptions such as the PL condition.

In this paper, we initialize the stability and generalization analysis of stochastic optimization for weakly convex problems, where the objective functions are nonconvex and nonsmooth. As a warm up, we first consider convex and nonsmooth problems, then nonconvex and smooth problems, and finally move onto weakly convex problems. As indicated before, we require to use different metrics to measure the generalization performance, which also asks for different stability measures as well as a different connection between stability and generalization. Our contributions are as follows.  Comparisons between our results and existing results are given in Table \ref{tab:generalization} and Table \ref{tab:error}.

\begin{enumerate}[label=(\alph*),wide, labelwidth=!, labelindent=0pt]
  \item We introduce a stability measure called uniform stability in gradients, and establish its quantitative relationship to the generalization measured by gradients for smooth problems. In particular, we show that the gap between population and empirical gradients can be bounded by our stability measure plus $O(1/\sqrt{n})$, where $n$ is the sample size.
  \item We consider a specific class of nonconvex and nonsmooth problems called weakly convex problems, for which we measure the performance of trained models by Moreau envelopes. We develop, to our best knowledge, the first connection between argument stability and the generalization gap measured by Moreau envelopes. 
  \item We introduce the concept of sampling-determined algorithms, for which we establish stability bounds in terms of either function values, gradients or arguments.
  \item We apply our results to SGD and its adaptive variant. For nonconvex and smooth problems, we develop stability-based risk bounds without the PL condition. We also develop the first risk bounds in terms of Moreau envelops for weakly convex problems. 
\end{enumerate}


\vspace*{-0.210cm}
\section{Related Work\label{sec:related}}
\vspace*{-0.131cm}

In this section, we review the related work on generalization analysis. We will focus on two approach: the algorithmic stability approach and the uniform convergence approach.

\noindent\textbf{Algorithmic Stability}. We first review the related work on algorithmic stability. Algorithmic stability is a fundamental concept in SLT to measure the sensitivity of an algorithm up to a perturbation of the training dataset, which is closely related to learnability~\citep{shalev2010learnability,rakhlin2005stability}. There are various algorithmic stability concepts. Some stability concepts measure the sensitivity in terms of function values, e.g., uniform stability~\citep{bousquet2002stability}, hypothesis stability~\citep{bousquet2002stability,elisseeff2005stability}, Bayes stability~\citep{li2019generalization} and on-average stability~\citep{shalev2010learnability,kuzborskij2018data}, while others measure the sensitivity in terms of output models, e.g., argument stability~\citep{liu2017algorithmic} and on-average argument stability~\citep{lei2020fine}. A most widely used stability concept is the uniform stability~\citep{bousquet2002stability}, which can imply almost optimal generalization bounds with high probability~\citep{feldman2019high,bousquet2020sharper,klochkov2021stability}. Second moment bounds of generalization error for uniformly stable algorithms were developed \citep{bousquet2002stability}, and improved recently by considering a ``leave-one-out'' estimate~\citep{feldman2018generalization}. The celebrated connection between stability and generalization motivates the discussion of stability for many specific algorithms, including regularization algorithms~\citep{bousquet2002stability,attia2022uniform}, stochastic optimization algorithms~\citep{hardt2016train,chen2018stability,kuzborskij2018data,charles2018stability,mou2018generalization}, iterative hard thresholding~\citep{yuan2021stability}, structured prediction~\citep{london2016stability}, meta learning~\citep{maurer2005algorithmic} and transfer learning~\citep{kuzborskij2018data}. In particular, the influential work gives the first stability analysis of SGD applied to convex and smooth problems~\citep{hardt2016train}. The smoothness assumption is removed in the recent study~\citep{lei2020fine,bassily2020stability}, and a tight lower bound on the stability of SGD was developed~\citep{bassily2020stability}. Stability analysis can be also used to study the convergence of optimization error for multi-epoch SGD~\citep{koren2022benign}.

\noindent\textbf{Uniform Convergence}. Machine learning models may achieve good performance on the training dataset but bad generalization behavior, which motivates the generalization analysis by the uniform convergence approach to study the difference between training and testing over the whole hypothesis space. Initially, the uniform convergence was mainly studied in terms of \emph{function values}~\citep{bartlett2002rademacher}, which, however, is not appropriate to stochastic optimization with nonconvex loss functions. The underlying reason is that an algorithm can only guarantee to find a local minimizer (one cannot get convergence rate of training errors to the that of the best model). Then, the uniform convergence of function values~\citep{lei2021generalization} fail to show the convergence of testing errors to that of the best model. Instead, one has to turn to other performance measures such as the gradients of risks for smooth problems~\citep{ghadimi2013stochastic} and the gradients of Moreau envelope for weakly convex problems~\citep{davis2019stochastic}. In particular, \citet{ghadimi2013stochastic} gave the first nonasymptotical convergence rate of the gradient norm. Motivated by this observation, the uniform convergence for gradients have been recently studied~\citep{mei2018landscape,foster2018uniform,lei2021learning,davis2021graphical}. The work \citep{mei2018landscape} initialized the discussion on the uniform convergence of gradients by characterizing the complexity of function spaces with covering numbers, which was extended to the uniform convergence in terms of Rademacher complexities~\citep{foster2018uniform}. These discussions are devoted to control the uniform deviation between gradients of empirical and population risks under a smoothness condition. For nonsmooth problems, the gradients are not well defined since the functions may not be differentiable. This problem was recently addressed by considering the gradients of Moreau envelope of empirical/population risks~\citep{davis2021graphical}, which are appropriate stationary measures for weakly convex problems. Specifically, the uniform deviation of gradients for the Moreau envelope between empirical and population risks was studied based on covering numbers~\citep{davis2021graphical}.

Other than the above two approaches, there are also interesting discussions on generalization analysis by using tools in integral operators~\citep{smale2007learning,guo2017learning,mucke2019beating,pillaud2018statistical} and information theory~\citep{russo2016controlling,xu2017information,neu2021information,neu2022generalization}.

%

\vspace*{-0.210cm}
\section{Problem Setup\label{sec:problem}}
\vspace*{-0.131cm}

Let $\rho$ be a probability measure defined on a sample space $\zcal:=\xcal\times\ycal$, from which a dataset $S=\{z_1,\ldots,z_n\}$ are independently drawn. Based on $S$, we wish to build a model $h:\xcal\mapsto\ycal$ for prediction. We consider a parametric learning setting where the model is determined by a parameter $\bw$ in a parameter space $\wcal\subset\rbb^d$. The performance of a model $\bw$ on an example $z$ can be quantified by a loss function $f:\wcal\times\zcal\mapsto\rbb_+$. The training and testing behavior of $\bw$ then can be measured by the empirical risk $F_S(\bw):=\frac{1}{n}\sum_{i=1}^{n}f(\bw;z_i)$ and the population risk $F(\bw):=\ebb_Z[f(\bw;Z)]$,
where $\ebb_Z$ denotes the expectation w.r.t. $Z$. Let $\bw^*=\arg\min_{\bw\in\wcal}F(\bw)$ be the model with the minimal population risk in $\wcal$.
Let $A$ be a randomized learning algorithm and $A(S)$ be the output model when applying $A$ to the dataset $S$.
In this paper, we are interested in the quality of $A(S)$ in prediction under different performance measures.
We require necessary definitions on Lipschitz continuity, smoothness and convexity. Let  $\|\cdot\|_2$ denote the Euclidean norm and $\nabla g(\bw)$ denote a subgradient of $g$ at $\bw$. If $g$ is differentiable then $\nabla g(\bw)$ becomes the gradient of $g$ at $\bw$.
\begin{definition}
Let $g:\wcal\mapsto\rbb$. Let $L,\rho,G>0$.
\begin{enumerate}[label=(\alph*),wide, labelwidth=!, labelindent=0pt]
  \item We say $g$ is $L$-smooth if
  $
    \big\|\nabla g(\bw)-\nabla g(\bw')\big\|_2\leq L\|\bw-\bw'\|_2,\forall \bw,\bw'\in\wcal.
  $
  \item We say $g$ is convex if
  $
  g(\bw)\geq g(\bw')+\langle\bw-\bw',\nabla g(\bw')\rangle,\forall\bw,\bw'\in\wcal.
  $
  We say $g$ is $\rho$-weakly-convex if $\bw\mapsto g(\bw)+\frac{\rho}{2}\|\bw\|_2^2$ is convex, and $\rho$-strongly convex if $\bw\mapsto g(\bw)-\frac{\rho}{2}\|\bw\|_2^2$ is convex.
  \item We say $g$ is $G$-Lipschitz if
  $
  |g(\bw)-g(\bw')|\leq G\|\bw-\bw'\|_2,\forall\bw,\bw'\in\wcal.
  $
\end{enumerate}
\end{definition}

Weakly convex functions are widespread in applications with a common source being the composite function class: $g(\bw):=h(c(\bw))$, where $h:\rbb^m\mapsto\rbb$ is convex and $G$-Lipschitz and $c:\rbb^d\mapsto\rbb^m$ has $\beta$-Lipschitz continuous Jacobians~\citep{davis2019stochastic}.
Concrete examples include robust phase retrieval, covariance matrix estimation, sparse dictionary learning, robust PCA and conditional value-at-risk.
We will use error decomposition to study the generalization behavior of learning models. Depending on the property of learning tasks, we will introduce different error decompositions.

For \textbf{convex learning} problems, a learning algorithm can be guaranteed to produce a model with a small empirical error. Therefore, we quantify the behavior of a model by the associated population risk. A standard approach to studying the population risk is to decompose it into two error terms~\citep{bousquet2008tradeoffs}
\begin{equation}
\ebb_{S,A}\big[F(A(S))\big]-F(\bw^*)=\ebb_{S,A}\big[F(A(S))-F_S(A(S))\big]
+\ebb_{S,A}\big[F_S(A(S))-F_S(\bw^*)\big],\label{decomposition}
\end{equation}
where we have used $\ebb_{S,A}[F_S(\bw^*)]=F(\bw^*)$ since $\bw^*$ is independent of $A$ and $S$. We refer to the term $F(A(S))-F_S(A(S))$ in \eqref{decomposition} as the generalization error since it is related to the generalization from the training behavior to testing behavior. The second term $F_S(A(S))-F_S(\bw^*)$ is called the optimization error since it quantifies how well the algorithm minimizes the empirical risk. We will apply stability analysis to study the generalization error, and tools in optimization theory to study the optimization error.

For \textbf{nonconvex and smooth} learning problems, a learning algorithm can only be guaranteed to produce an approximate stationary point, i.e., a point $\bw$ with a small $\|\nabla F_S(\bw)\|_2$. In this case, the population risk is not a reasonable quality measure since there may be many local minimizers with different risks. As an alternative, we use the population gradient norm as the performance measure. We use the following error decomposition
\begin{equation}\label{decomposition-grad}
  \ebb_{S,A}\big[\|\nabla F(\bw)\|_2\big]\leq \ebb_{S,A}\big[\|\nabla F(\bw)-\nabla F_S(\bw)\|_2\big] + \ebb_{S,A}\big[\|\nabla F_S(\bw)\|_2\big].
\end{equation}
We call the first term $\|\nabla F(\bw)-\nabla F_S(\bw)\|_2$ the generalization error for smooth problems, and $\|\nabla F_S(\bw)\|_2$ the optimization error (empirical gradient norm). We will introduce a stability concept as well as its connection to generalization to study the generalization error for nonconvex problems. The optimization error is well studied in the literature~\citep{ghadimi2013stochastic}.

For \textbf{weakly convex} learning problems, we cannot measure the quality of a model by gradients since the function may not be differentiable. An elegant performance measure is in terms of the Moreau envelope. Intuitively, Moreau envelope of $f$ is a smoothed approximation of $f$. An illustration of the Moreau envelope  was given in Fig. 1 of  \citet{davis2019stochastic}.
\begin{definition}[Moreau envelope]
For any $\lambda>0$ and $\psi:\wcal\mapsto\rbb$, we define the Moreau envelope (with parameter $\lambda$) $\psi_\lambda:\wcal\mapsto\rbb$ by
\[
\psi_\lambda(\bw)=\min_{\bv\in\rbb^d}\big\{\psi(\bv)+1/(2\lambda)\|\bw-\bv\|_2^2\big\}
\]
and the proximal operator $\prox_{\lambda\psi}:\rbb^d\mapsto\rbb^d$ by
\[
\prox_{\lambda\psi}(\bw)=\arg\min_{\bv\in\rbb^d}\big\{\psi(\bv)+1/(2\lambda)\|\bw-\bv\|_2^2\big\}.
\]
\end{definition}
Standard results show that as long as $\psi$ is $\rho$-weakly-convex and $\lambda<1/\rho$, the envelope $\psi_\lambda$ is strongly smooth with the gradient given by $\nabla\psi_\lambda(\bw)=\lambda^{-1}\big(\bw-\prox_{\lambda\psi}(\bw)\big)$, where $\nabla\psi_\lambda(\bw)$ denotes $\nabla(\psi_\lambda)(\bw)$. For smooth $\psi$, the norm of $\nabla\psi_\lambda(\bw)$ is proportional to the magnitude of the true gradient $\nabla\psi$. For nonsmooth $\psi$, it was shown that $\|\nabla\psi_\lambda(\bw)\|_2$ has an intuitive interpretation in terms of near-stationarity of the target problem $\min_{\bw}\psi(\bw)$~\citep{davis2019stochastic}. Therefore, we use $\|\nabla F_{1/(2\rho)}(\bw)\|_2$ to quantify the generalization behavior of $\bw$ for $\rho$-weakly-convex $F$ ($F_{1/(2\rho)}$ means the Moreau envelope of $F$ with the parameter $1/(2\rho)$). We need the following error decomposition in this case
\begin{equation}\label{decomposition-envelope}
  \ebb_{S,A}\big[\|\nabla F_{1/(2\rho)}(\bw)\|_2\big]\leq \ebb_{S,A}\big[\|\nabla F_{1/(2\rho)}(\bw)- \nabla F_{S,1/(2\rho)}(\bw)\|_2\big]
   + \ebb_{S,A}\big[\|\nabla F_{S,1/(2\rho)}(\bw)\|_2\big],
\end{equation}
where we denote $F_{S,1/(2\rho)}:=(F_S)_{1/(2\rho)}$. We call the first term $\|\nabla F_{1/(2\rho)}(\bw)-\nabla F_{S,1/(2\rho)}(\bw)\|_2$ the generalization error for weakly-convex (possibly nonsmooth) problems, and $\|\nabla F_{S,1/(2\rho)}(\bw)\|_2$ the optimization error. We will introduce a novel connection between argument stability and generalization to study the generalization error for weakly-convex problems. The optimization error on $\|\nabla F_{S,1/(2\rho)}(\bw)\|_2$ is well studied in the literature~\citep{davis2019stochastic}.

We summarize our results and give comparisons with existing results in Table \ref{tab:generalization} and Table \ref{tab:error}. We consider two classes of problems: smooth \& nonconvex problems, and weakly convex \& nonsmooth problems. Table \ref{tab:generalization} considers the generalization gap, while Table \ref{tab:error} considers the error bounds for SGD.
\begin{table}[htbp]\centering
\begin{tabular}{|c|c|c|}
  \hline
  Problems & Reference & Bounds \\ \hline
   \multirow{2}{*}{smooth \& nonconvex} & Mei et al (2018)  &  $O(\sqrt{d(\log n)/n})$ \\
   & Thm.~\ref{thm:stab-gen-grad} (our work) &  $O(\epsilon+n^{-\frac{1}{2}})$ \\ \hline
   \multirow{2}{*}{\parbox{3cm}{weakly convex \& nonsmooth}} & Davis and Drusvyatskiy (2021) &  $O(\sqrt{d/n})$ \\
   & Thm.~\ref{thm:gen-argument} (our work) &  $O(\sqrt{\epsilon}+n^{-\frac{1}{2}})$ \\
  \hline
\end{tabular}
\caption{Generalization bounds. For smooth and nonconvex problems, the generalization bounds are derived for $\|\nabla F(A(S))-\nabla F_S(A(S))\|_2$. For weakly convex and nonsmooth problems, the generalization bounds are derived for $\big\|\nabla F_{S,1/(2\rho)}(A(S))-\nabla F_{1/(2\rho)}(A(S))\big\|_2$. The existing generalization bounds are based on uniform convergence approach, and admit a square-root dependency on the dimension. Our generalization bounds depend on the stability parameter $\epsilon$.\label{tab:generalization}}
\end{table}

\begin{table}[htbp]
\begin{tabular}{|c|c|c|}
  \hline
  Problems & Reference & Bounds \\ \hline
  \multirow{2}{*}{smooth \& nonconvex} & Ghadimi and Lan (2013) & $\|\nabla F_S(\mathbf{w}_r))\|=O(T^{-\frac{1}{4}})$ \\
   & Prop.~\ref{prop:sgd-nonconvex} (our work) & $\|\nabla F(\mathbf{w}_r)\|_2=O(n^{-\frac{1}{6}})$ \\ \hline
  \multirow{2}{*}{\parbox{3cm}{weakly convex \& nonsmooth}} &  Davis and Drusvyatskiy (2019) & $\|\nabla F_{S,1/(2\rho)}(\mathbf{w}_r)\|_2=O(T^{-\frac{1}{4}})$ \\
   & Prop.~\ref{prop:sgd-wc} (our work) & $\|\nabla F_{1/(2\rho)}(\mathbf{w}_r)\|_2=O(n^{-\frac{1}{6}})$  \\
  \hline
\end{tabular}
\caption{Error bounds for SGD. The existing analysis considers the performance of SGD on the empirical risk $F_S$, while our results consider the performance of SGD on the population risk $F$. Here $T$ is the number of iterations and $\mathbf{w}_r$ is a randomly selected SGD iterate.\label{tab:error}}
\end{table}

\vspace*{-0.210cm}
\section{Stability and Generalization\label{sec:stab-gen}}
\vspace*{-0.131cm}

\subsection{Connecting Stability and Generalization}
\vspace*{-0.066cm}
Algorithmic stability measures the insensitiveness on an algorithm under a perturbation of a training dataset by a single example.
The uniform stability and uniform argument stability were discussed in the literature~\citep{bousquet2002stability}.
To tackle the performance measure in terms of gradient norms for nonconvex learning problems, we introduce a \emph{uniform stability in gradients}.
We say $S,S'$ are neighboring datasets if they differ by at most a single example.
\begin{definition}[Uniform Stability\label{def:unif-stab}]
  Let $A$ be a randomized algorithm.
  We say $A$ is $\epsilon$-uniformly-stable in \emph{function values} if for all neighboring datasets $S,S'$, we have
  \begin{equation}\label{unif-stab}
  \sup_z\ebb_A\big[f(A(S);z)-f(A(S');z)\big]\leq\epsilon.
  \end{equation}
  We say $A$ is $\epsilon$-uniformly-argument-stable if for all neighboring datasets $S,S'$, we have
  \begin{equation}\label{argument-stab}
  \ebb_A\big[\|A(S)-A(S')\|_2\big]\leq\epsilon.
  \end{equation}
  We say $A$ is $\epsilon$-uniformly-stable \emph{in gradients} if for all neighboring datasets $S,S'$, we have
  \begin{equation}\label{grad-stab}
  \sup_z\ebb_A\big[\|\nabla f(A(S);z)-\nabla f(A(S');z)\|_2^2\big]\leq\epsilon^2.
  \end{equation}
\end{definition}

\begin{remark}\label{rem:grad-stab}\normalfont
The motivation of introducing the gradient-based stability is to use it to study the generalization performance for nonconvex problems. For nonconvex problems, an optimization algorithm generally only finds a local minimizer, and therefore one cannot use the function value to measure the convergence (the local minimizer the algorithm finds may be far away from the global minimizer and therefore the convergence in function values do not make much sense). In this case, one often studies the convergence of $\nabla F_S$ in the optimization community~\citep{ghadimi2013stochastic}.
To use this convergence to study the behavior of $A(S)$ in prediction, we need to address $\|\nabla F(A(S))-\nabla F_S(A(S))\|_2$, which, as we will see, can be achieved by stability in gradients. In summary, the stability on gradients allows us to incorporate the existing optimization error bounds to study the prediction performance as measured by $\|\nabla F(A(S))\|_2$.
\end{remark}

The connection between uniform stability in function values and generalization is given in the following lemma~\citep{shalev2010learnability,hardt2016train}. 
\begin{lemma}[Generalization via Stability in Function Values\label{lem:gen-stab}]
  Let $A$ be $\epsilon$-uniformly stable in function values. Then
  $
  \big|\ebb_{S,A}\big[F_S(A(S))-F(A(S))\big]\big|\leq\epsilon.
  $
\end{lemma}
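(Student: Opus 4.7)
The plan is to use the classical ghost-sample symmetrization argument that is standard in the stability literature. First, I would introduce an independent ghost sample $S'=\{z_1',\ldots,z_n'\}$ drawn from the same distribution $\rho^n$, and rewrite the population risk as an expectation over the ghost sample: since $A(S)$ depends only on $S$, we have
\[
\ebb_{S,A}\bigl[F(A(S))\bigr]=\ebb_{S,S',A}\Bigl[\tfrac{1}{n}\sum_{i=1}^{n}f(A(S);z_i')\Bigr].
\]
This lets us express both $\ebb[F(A(S))]$ and $\ebb[F_S(A(S))]$ as an average over $i=1,\ldots,n$ of expectations of loss terms that differ only in which sample the loss is evaluated on.

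Next, for each $i$ I would invoke a renaming/coupling step. Let $S^{(i)}$ denote the dataset obtained from $S$ by replacing $z_i$ with $z_i'$. Since $(S,S')$ is exchangeable under swapping the pair $(z_i,z_i')$, we have
\[
\ebb_{S,S',A}\bigl[f(A(S);z_i')\bigr]=\ebb_{S,S',A}\bigl[f(A(S^{(i)});z_i)\bigr].
\]
Combining with the previous identity gives the decomposition
\[
\ebb_{S,A}\bigl[F(A(S))-F_S(A(S))\bigr]=\tfrac{1}{n}\sum_{i=1}^{n}\ebb_{S,S',A}\bigl[f(A(S^{(i)});z_i)-f(A(S);z_i)\bigr].
\]

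Finally, I would invoke the uniform-stability hypothesis. Because $S$ and $S^{(i)}$ are neighboring datasets (they differ in the $i$-th coordinate), \eqref{unif-stab} applied at $z=z_i$ yields
\[
\ebb_A\bigl[f(A(S^{(i)});z_i)-f(A(S);z_i)\bigr]\leq\epsilon
\]
pointwise in $(S,S')$, and hence the full expectation is at most $\epsilon$. Averaging over $i$ preserves the bound, so $\ebb_{S,A}[F(A(S))-F_S(A(S))]\leq\epsilon$. Repeating the argument with the roles of $S$ and $S^{(i)}$ reversed (equivalently, applying the stability bound with the sign flipped, which is valid because \eqref{unif-stab} is a supremum over $z$ applied to an unsigned difference after relabeling) yields the matching lower bound, giving the absolute-value conclusion. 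None of the steps is really an obstacle; the only subtlety worth flagging is being careful that the supremum over $z$ in \eqref{unif-stab} is what allows us to set $z=z_i$ after conditioning, and that exchangeability is used on the joint distribution of $(S,S')$ before any expectation over $A$'s internal randomness is taken.
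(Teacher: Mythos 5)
Your proposal is correct and follows essentially the same route as the paper: introduce the ghost sample, use exchangeability of $(z_i,z_i')$ to rewrite $\ebb[F(A(S))]$ as $\frac{1}{n}\sum_i\ebb[f(A(S^{(i)});z_i)]$, and then apply the stability bound to the neighboring pair $S,S^{(i)}$ at $z=z_i$. If anything, your handling of the two-sided bound (swapping the roles of $S$ and $S^{(i)}$ to get the absolute value) is slightly more careful than the paper's, which places the absolute value inside the expectation directly.
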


Our first result is a connection between generalization and stability in gradients. This result cannot be derived by using the standard arguments in the literature \citep{shalev2010learnability,hardt2016train} since one can not exchange the summation operator and norm. We will give more explanations in the proof, which is given in Section \ref{sec:proof-stab-gen-grad}.
\begin{theorem}[Generalization via Stability in Gradients]\label{thm:stab-gen-grad}
  Let $A$ be $\epsilon$-uniformly-stable in gradients. Assume for any $z$, the function $f(\bw;z)$ is differentiable. Then
  \begin{equation}\label{stab-gen-grad-a}
    \ebb_{S,A}\big[\|\nabla F(A(S))-\nabla F_S(A(S))\|_2\big] \leq 4\epsilon
    +\sqrt{n^{-1}\ebb_S\Big[\mathbb{V}_Z(\nabla f(A(S);Z))\Big]},
  \end{equation}
  where
  $
  \mathbb{V}_Z(\nabla f(A(S);Z))=\ebb_Z\big[\|\nabla f(A(S);Z)-\ebb_Z[\nabla f(A(S);Z)]\|_2^2\big]
  $
  is the variance of $\nabla f(A(S);Z)$ as a function of the random variable $Z$.
\end{theorem}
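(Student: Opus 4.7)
The standard proof of Lemma~\ref{lem:gen-stab} swaps $z_i$ with an iid copy $z_i'$ inside the identity $F_S - F = \tfrac{1}{n}\sum_i(f(\cdot; z_i) - \ebb_Z f(\cdot; Z))$ and invokes uniform stability, but this cannot be transported to gradients because the Euclidean norm $\|\cdot\|_2$ does not commute with the summation over $i$. My plan is therefore to reduce the claim to a second-moment bound via Jensen's inequality, $\ebb\|T\|_2 \leq \sqrt{\ebb\|T\|_2^2}$ with $T := \nabla F(A(S)) - \nabla F_S(A(S))$, and to control $\ebb\|T\|_2^2$ with a two-stage ``replace-by-independent-copy'' surgery that turns the summands into mean-zero and pairwise-independent quantities.

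Introduce $\tilde h_i := \nabla F(A(S^{(i)})) - \nabla f(A(S^{(i)}); z_i)$, where $S^{(i)}$ is $S$ with $z_i$ replaced by an iid copy $\tilde z_i$. Since $z_i$ is now independent of $A(S^{(i)})$, one has $\ebb[\tilde h_i \mid A(S^{(i)}), S^{(-i)}] = 0$ and in particular $\ebb\|\tilde h_i\|_2^2 = V := \ebb_S[\mathbb{V}_Z(\nabla f(A(S); Z))]$. Writing $h_i := \nabla F(A(S)) - \nabla f(A(S); z_i)$, decompose $T = T_1 + T_2$ with $T_1 := \tfrac{1}{n}\sum_i \tilde h_i$ and $T_2 := \tfrac{1}{n}\sum_i(h_i - \tilde h_i)$. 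Each summand of $T_2$ is a gradient difference between $A(S)$ and $A(S^{(i)})$ at the same point; the triangle inequality, Jensen applied to $\nabla F = \ebb_Z[\nabla f(\cdot; Z)]$, and gradient stability yield $\ebb\|T_2\|_2 \leq 2\epsilon$.

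For $T_1$, expand $\ebb\|T_1\|_2^2 = V/n + n^{-2}\sum_{i \neq j}\ebb\langle\tilde h_i, \tilde h_j\rangle$. To handle the off-diagonal terms, introduce a further proxy $\tilde{\tilde h}_i := \nabla F(A(S^{(ij)})) - \nabla f(A(S^{(ij)}); z_i)$, where $S^{(ij)}$ replaces both $z_i$ and $z_j$ by iid copies. Conditional on everything outside $(z_i, z_j)$, the pair $(\tilde{\tilde h}_i, \tilde{\tilde h}_j)$ is centered and independent (each depends only on its own sample), so $\ebb\langle\tilde{\tilde h}_i, \tilde{\tilde h}_j\rangle = 0$. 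The residuals $\tilde h_i - \tilde{\tilde h}_i$ and $\tilde h_j - \tilde{\tilde h}_j$ are gradient differences between algorithmic outputs on neighboring datasets (differing only at position $j$ and $i$, respectively), so gradient stability controls them in $L^2$ by $2\epsilon$ each. Expanding $\langle\tilde h_i, \tilde h_j\rangle$ bilinearly around $(\tilde{\tilde h}_i, \tilde{\tilde h}_j)$ and applying Cauchy-Schwarz to the three surviving cross terms, then absorbing the mixed $\sqrt{V}\epsilon$ contributions via AM-GM, gives $\ebb\|T_1\|_2^2 \leq V/n + O(\epsilon^2)$. Taking square roots yields $\ebb\|T_1\|_2 \leq \sqrt{V/n} + 2\epsilon$, and together with $\ebb\|T_2\|_2 \leq 2\epsilon$ this produces the claimed $\ebb\|T\|_2 \leq 4\epsilon + \sqrt{V/n}$.

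The main obstacle is the cross-term step. A naive Cauchy-Schwarz $|\ebb\langle\tilde{\tilde h}_i, \tilde h_j - \tilde{\tilde h}_j\rangle| \leq \sqrt{\ebb\|\tilde{\tilde h}_i\|_2^2 \cdot \ebb\|\tilde h_j - \tilde{\tilde h}_j\|_2^2} \leq 2\sqrt{V}\,\epsilon$ would introduce a $\sqrt{V}\epsilon$ term that fits neither the $4\epsilon$ slot nor the $\sqrt{V/n}$ slot of the target bound. The remedy is to replace the second factor by the conditional variance of $\tilde h_j - \tilde{\tilde h}_j$ with respect to $z_i$, which is $O(\epsilon^2)$ via gradient stability, exploiting the fact that $\tilde{\tilde h}_i$ is centered in $z_i$ so only the fluctuation of $\tilde h_j - \tilde{\tilde h}_j$ in $z_i$ contributes; a suitable AM-GM then matches the stated additive $4\epsilon$ rather than a $V^{1/4}\epsilon^{1/2}$ constant.
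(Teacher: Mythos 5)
Your overall architecture is the same as the paper's (it is the decomposition of \citet{bousquet2020sharper} adapted to gradients): peel off two stability-controlled remainders worth $2\epsilon$ in total, reduce the centered part $T_1$ to a second-moment computation, and decouple the off-diagonal terms $\ebb\langle\tilde h_i,\tilde h_j\rangle$ by introducing the doubly-resampled dataset $S^{(ij)}$. The diagonal term, the bound $\ebb\|T_2\|_2\leq 2\epsilon$, and the bound $\ebb\langle\tilde h_i-\tilde{\tilde h}_i,\tilde h_j-\tilde{\tilde h}_j\rangle\leq 4\epsilon^2$ are all fine and match the paper. The gap is in the step you yourself flag as the main obstacle. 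Your remedy --- replacing $\tilde h_j-\tilde{\tilde h}_j$ by its fluctuation in $z_i$ and bounding that fluctuation's conditional variance by $O(\epsilon^2)$ via stability --- does not repair anything: Cauchy--Schwarz then still gives $|\ebb\langle\tilde{\tilde h}_i,\tilde h_j-\tilde{\tilde h}_j\rangle|\leq\sqrt{V}\cdot O(\epsilon)$, because the conditional variance is of the \emph{same} order $\epsilon^2$ as the unconditional second moment you started with. And no AM--GM converts a term of size $\sqrt{V}\epsilon$ into $V/n+O(\epsilon^2)$; writing $\sqrt{V}\epsilon\leq \tfrac{V}{2a}+\tfrac{a\epsilon^2}{2}$ forces $a\gtrsim n$ to fit the $V/n$ slot, which leaves an $n\epsilon^2$ contribution and hence an extra $\sqrt{n}\,\epsilon$ after taking square roots --- not the stated $4\epsilon$.

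The correct resolution, and the one the paper uses, is that the two middle bilinear terms are \emph{exactly zero}; the centering you need is in $z_j$, not $z_i$. Indeed $\tilde{\tilde h}_i$ is a function of $S^{(ij)}$ and $z_i$ only, hence independent of $z_j$, while both $\tilde h_j$ and $\tilde{\tilde h}_j$ satisfy $\ebb_{z_j}[\,\cdot\mid\text{rest}]=0$ because their defining datasets $S^{(j)}$ and $S^{(ij)}$ exclude $z_j$. Therefore
\begin{equation*}
\ebb\big[\langle\tilde{\tilde h}_i,\tilde h_j-\tilde{\tilde h}_j\rangle\big]
=\ebb\Big[\big\langle\tilde{\tilde h}_i,\ \ebb_{z_j}[\tilde h_j-\tilde{\tilde h}_j]\big\rangle\Big]=0,
\end{equation*}
and symmetrically $\ebb[\langle\tilde h_i-\tilde{\tilde h}_i,\tilde{\tilde h}_j\rangle]=0$ by conditioning on $z_i$. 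Together with $\ebb[\langle\tilde{\tilde h}_i,\tilde{\tilde h}_j\rangle]=0$ this gives the exact identity $\ebb[\langle\tilde h_i,\tilde h_j\rangle]=\ebb[\langle\tilde h_i-\tilde{\tilde h}_i,\tilde h_j-\tilde{\tilde h}_j\rangle]\leq 4\epsilon^2$ (this is precisely the paper's identity $\ebb[\langle\xi_i(S),\xi_j(S)\rangle]=\ebb[\langle\xi_i(S)-\xi_i(S_j),\xi_j(S)-\xi_j(S_i)\rangle]$ in your notation), after which $\ebb\|T_1\|_2^2\leq V/n+4\epsilon^2$ and the claimed bound follows. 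So the skeleton of your argument is sound and the theorem is reachable from it, but the cross-term cancellation has to be proved as an exact identity by conditioning on the other index, not patched by a conditional-variance Cauchy--Schwarz.
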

\begin{remark}\normalfont
  Note the left-hand side of Eq. \eqref{stab-gen-grad-a} can be addressed by the uniform convergence
  of gradients $\sup_{\bw\in\wcal}\|\nabla F(\bw)-\nabla F_S(\bw)\|_2$, which was established in terms of covering numbers~\citep{mei2018landscape} and Rademacher complexities~\citep{foster2018uniform}. These bounds generally involve a square-root dependency on the dimension of $\wcal$. As a comparison, Theorem \ref{thm:stab-gen-grad} considers the convergence of empirical gradients to population gradients at the output model $A(S)$. Therefore, it implies dimension-free bounds which would be effective for high-dimensional learning problems.
\end{remark}

Our second result is a connection between the uniform argument-stability and generalization measured by the Moreau envelope for weakly convex problems. Theorem \ref{thm:gen-argument} shows that the difference between empirical and population gradients of the Moreau envelope at $A(S)$ can be bounded by the uniform argument stability of $A$. With this result, we can transfer the existing bound on $\|\nabla F_{S,1/(2\rho)}\|_2$ to $\|\nabla F_{1/(2\rho)}\|_2$ on the performance of models for prediction. The proof of Theorem \ref{thm:gen-argument} is totally different from that of Theorem \ref{thm:stab-gen-grad}. The proof is given in Section \ref{sec:proof-gen-argument}. 
\begin{theorem}[Generalization via Uniform Argument Stability\label{thm:gen-argument}]
Let $A$ be $\epsilon$-argument stable.
Assume for any $z$, the function $f(\bw;z)$ is $G$-Lipschitz continuous.
Assume for any $S$, the function $F_S$ is $\rho$-weakly-convex and $F$ is $\rho$-weakly-convex.
Then
\begin{equation}\label{gen-argument}
\ebb\big[\big\|\nabla F_{S,1/(2\rho)}(A(S))-\nabla F_{1/(2\rho)}(A(S))\big\|_2\big]\leq
\frac{4G}{\sqrt{n}}+\sqrt{32G\epsilon\rho}.
\end{equation}
\end{theorem}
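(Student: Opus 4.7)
\textbf{Proof Plan for Theorem \ref{thm:gen-argument}.}

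The plan is to translate the left-hand side into the difference of two proximal points with the same input but different objectives, and then control that difference by a strong-convexity argument combined with the argument stability of $A$.

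First, let $\lambda=1/(2\rho)$ and write, using the identity $\nabla\psi_\lambda(\bw)=\lambda^{-1}(\bw-\prox_{\lambda\psi}(\bw))$ stated in the Problem Setup,
\[
\nabla F_{S,\lambda}(A(S))-\nabla F_{\lambda}(A(S))=2\rho\bigl(\hat{\bu}(S)-\hat{\bv}(S)\bigr),\qquad \hat{\bu}(S):=\prox_{\lambda F}(A(S)),\ \hat{\bv}(S):=\prox_{\lambda F_S}(A(S)).
\]
Because $F,F_S$ are $\rho$-weakly convex and $\rho\|A(S)-\bv\|_2^2$ is $2\rho$-strongly convex in $\bv$, the two proximal sub-problems $\Phi(\bv)=F(\bv)+\rho\|A(S)-\bv\|_2^2$ and $\Phi_S(\bv)=F_S(\bv)+\rho\|A(S)-\bv\|_2^2$ are each $\rho$-strongly convex, minimized at $\hat{\bu}(S),\hat{\bv}(S)$ respectively. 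Applying the strong-convexity lower bound at the minimizers to both $\Phi$ and $\Phi_S$ and adding the two resulting inequalities (the quadratic terms cancel) yields
\[
\rho\,\|\hat{\bu}(S)-\hat{\bv}(S)\|_2^2\ \le\ (F_S-F)(\hat{\bu}(S))-(F_S-F)(\hat{\bv}(S)).
\]

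Next, I take expectation and bound each term on the right-hand side via the standard symmetrization/replacement trick: for any measurable map $S\mapsto g(S)$,
\[
\bigl|\ebb_S[(F_S-F)(g(S))]\bigr|\le \frac{1}{n}\sum_{i=1}^n \ebb\bigl[\,|f(g(S);z_i)-f(g(S^{(i)});z_i)|\,\bigr]\le G\,\max_i\ebb\bigl[\|g(S)-g(S^{(i)})\|_2\bigr],
\]
where $S^{(i)}$ denotes $S$ with $z_i$ replaced by an i.i.d.\ copy and the $G$-Lipschitzness of $f(\cdot;z)$ is used. It therefore remains to bound the ``stability'' of the auxiliary outputs $\hat{\bu}$ and $\hat{\bv}$.

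For $\hat{\bu}=\prox_{\lambda F}\circ A$: since $1/\lambda-\rho=\rho>0$, a standard monotonicity argument (using that $F+(\rho/2)\|\cdot\|_2^2$ is convex) shows that $\prox_{\lambda F}$ is $(1-\lambda\rho)^{-1}=2$-Lipschitz, so $\ebb\|\hat{\bu}(S)-\hat{\bu}(S^{(i)})\|_2\le 2\epsilon$. For $\hat{\bv}$, I split by triangle inequality
\[
\|\hat{\bv}(S)-\hat{\bv}(S^{(i)})\|_2\le \|\prox_{\lambda F_S}(A(S))-\prox_{\lambda F_S}(A(S^{(i)}))\|_2+\|\prox_{\lambda F_S}(A(S^{(i)}))-\prox_{\lambda F_{S^{(i)}}}(A(S^{(i)}))\|_2.
\]
The first term is $\le 2\epsilon$ in expectation by the same 2-Lipschitzness. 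For the second (sensitivity of the prox in the objective), I repeat the ``add two strong-convexity inequalities'' trick with the fixed input $\bw=A(S^{(i)})$ and the two objectives $F_S,F_{S^{(i)}}$, noting that $F_S-F_{S^{(i)}}=n^{-1}(f(\cdot;z_i)-f(\cdot;z'_i))$ is $(2G/n)$-Lipschitz; this gives a deterministic bound of $2G/(n\rho)$.

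Plugging the stability bounds back gives $\ebb[(F_S-F)(\hat{\bu}(S))-(F_S-F)(\hat{\bv}(S))]\le 4G\epsilon+2G^2/(n\rho)$, hence
\[
\ebb\|\hat{\bu}(S)-\hat{\bv}(S)\|_2^2\ \le\ \frac{4G\epsilon}{\rho}+\frac{2G^2}{n\rho^2}.
\]
Finally I apply Jensen's inequality and the subadditivity $\sqrt{a+b}\le\sqrt{a}+\sqrt{b}$ (or the coarser $\sqrt{a+b}\le\sqrt{2a}+\sqrt{2b}$ which produces the constants as stated), then multiply by $2\rho$ to recover the claimed bound $4G/\sqrt{n}+\sqrt{32G\epsilon\rho}$.

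The step I expect to be the main obstacle is the stability analysis of $\hat{\bv}(S)=\prox_{\lambda F_S}(A(S))$, since both the input $A(S)$ and the function $F_S$ change when one example is replaced; handling the functional perturbation requires the nontrivial strong-convexity argument with two different objectives and a fixed input. Choosing precisely $\lambda=1/(2\rho)$ is what keeps the proximal sub-problem $\rho$-strongly convex and the prox map $2$-Lipschitz, and is essential for the whole scheme to produce an $\sqrt{\epsilon}$-type rate rather than something worse.
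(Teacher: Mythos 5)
Your proposal is correct and follows essentially the same route as the paper's proof: it reduces the gradient gap to $2\rho\big\|\prox_{\lambda F_S}(A(S))-\prox_{\lambda F}(A(S))\big\|_2$, establishes the argument stability of the two prox-composed maps (your $2$-Lipschitzness of the prox and the $2G/(n\rho)$ objective-perturbation bound are precisely Lemmas \ref{lem:stab-tilde} and \ref{lem:weak-erm}), transfers between $F_S$ and $F$ via the replacement trick of Lemma \ref{lem:gen-stab}, and closes with strong convexity of the proximal subproblem. The only difference is cosmetic: you add the two strong-convexity inequalities symmetrically so the quadratic terms cancel, whereas the paper uses one strong-convexity inequality together with the bare optimality of the empirical prox point; this gives you a factor-of-two-better intermediate bound on $\ebb[\|\bw_S-\tilde{\bw}_S\|_2^2]$ but the same final constants.
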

\begin{remark}\normalfont
  For $\rho$-weakly convex $f$, the uniform convergence $\sup_{\bw\in\wcal}\|\nabla F_{S,1/(2\rho)}(\bw)-\nabla F_{1/(2\rho)}(\bw)\|_2$ was studied in terms of the covering number of $\wcal$~\citep{davis2021graphical}, which generally involves a square-root dependency on the dimensionality. For example, if $\wcal$ is a ball in $\rbb^d$, then the following result was established~\citep{davis2021graphical}
  \begin{equation}\label{unif-davis}
  \sup_{\bw\in\wcal}\|\nabla F_{S,1/(2\rho)}(\bw)-\nabla F_{1/(2\rho)}(\bw)\|_2=O\big(G\sqrt{d/n}\big).
  \end{equation}
  The underlying reason to consider a uniform convergence is noting the dependency of $A(S)$ in Eq. \eqref{gen-argument} on $S$. We address this dependency by giving a bound in terms of the argument stability of $A$.
  Theorem \ref{thm:gen-argument} yields dimension-free bounds since it only considers the convergence of $\nabla F_{S,1/(2\rho)}$ to $\nabla F_{1/(2\rho)}$ at the particular output model $A(S)$. 
\end{remark}

Finally, we give a high-probability bound on the generalization gap measured by the Moreau envelope. The proof is given in Section \ref{sec:proof-gen-argument}. 
\begin{theorem}[High-probability Bound via Uniform Argument Stability\label{thm:gen-argument-hp}]
Let $A$ be $\epsilon$-argument stable almost surely, i.e., $\|A(S)-A(S')\|_2\leq \epsilon$ for any neighboring $S,S'$.
Assume for any $z$, the function $f(\bw;z)$ is $G$-Lipschitz continuous and $f(A(S);z)=O(1)$ almost surely.
Assume for any $S$, the function $F_S$ is $\rho$-weakly-convex and $F$ is $\rho$-weakly-convex.
For any $\delta\in(0,1)$, the following inequality holds with probability at least $1-\delta$
\begin{multline*}
\!\!\!\!\!\!\big\|\nabla F_{S,1/(2\rho)}(A(S))\!-\!\nabla F_{1/(2\rho)}(A(S))\big\|_2\!=\!
O\Big(\big(Gn^{-\frac{1}{2}}\!+\!\sqrt{G\epsilon\rho}\big)\sqrt{\log(n)\log(1/\delta)}\!+\!\big(n^{-1}\rho^{2}\log(1/\delta)\big)^{\frac{1}{4}}\Big).
\end{multline*}
\end{theorem}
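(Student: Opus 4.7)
The plan is to reduce the gradient difference of the Moreau envelopes to a proximal-map difference, then bound that via strong convexity together with concentration for uniformly stable algorithms. Using $\nabla\psi_\lambda(\bw)=\lambda^{-1}(\bw-\prox_{\lambda\psi}(\bw))$ with $\lambda=1/(2\rho)$, and writing $\hat\bv_S:=\prox_{F_S/(2\rho)}(A(S))$ and $\hat\bv:=\prox_{F/(2\rho)}(A(S))$, we have
\[
\|\nabla F_{S,1/(2\rho)}(A(S))-\nabla F_{1/(2\rho)}(A(S))\|_2=2\rho\|\hat\bv_S-\hat\bv\|_2,
\]
so the task reduces to controlling $\|\hat\bv_S-\hat\bv\|_2$. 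Since $F_S$ and $F$ are $\rho$-weakly convex, the functions $g_S(\bv):=F_S(\bv)+\rho\|\bv-A(S)\|_2^2$ and $g(\bv):=F(\bv)+\rho\|\bv-A(S)\|_2^2$ are $\rho$-strongly convex with minimizers $\hat\bv_S$ and $\hat\bv$. Adding the two strong-convexity inequalities $g_S(\hat\bv)-g_S(\hat\bv_S)\ge\tfrac{\rho}{2}\|\hat\bv-\hat\bv_S\|_2^2$ and $g(\hat\bv_S)-g(\hat\bv)\ge\tfrac{\rho}{2}\|\hat\bv-\hat\bv_S\|_2^2$ cancels the quadratic terms and yields
\[
\rho\|\hat\bv_S-\hat\bv\|_2^{2}\le \bigl[F(\hat\bv_S)-F_S(\hat\bv_S)\bigr]-\bigl[F(\hat\bv)-F_S(\hat\bv)\bigr].
\]
Hence the target quantity is at most $2\sqrt{\rho}\sqrt{\Delta_1+\Delta_2}$ with $\Delta_1:=|F(\hat\bv_S)-F_S(\hat\bv_S)|$ and $\Delta_2:=|F(\hat\bv)-F_S(\hat\bv)|$.

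Next I would establish uniform argument stability (almost surely) of the two data-dependent maps. For $\hat\bv$, only the base point depends on $S$, and $\prox_{F/(2\rho)}$ is $\tfrac{1}{1-\lambda\rho}=2$-Lipschitz under $\rho$-weak convexity of $F$, so $\hat\bv$ is $2\epsilon$-argument stable. For $\hat\bv_S$ I would use a triangle inequality that separates the two coupled sources of perturbation: changing the base point through $A(S)$ (again $2$-Lipschitz in the base point), and changing the function from $F_S$ to $F_{S^{(i)}}$. A standard minimizer-perturbation argument applied to $g_S$ and $g_{S^{(i)}}$ gives $\|\prox_{F_S/(2\rho)}(\bw)-\prox_{F_{S^{(i)}}/(2\rho)}(\bw)\|_2\le\tfrac{\lambda}{1-\lambda\rho}\cdot\mathrm{Lip}(F_S-F_{S^{(i)}})=(1/\rho)\cdot(2G/n)=2G/(n\rho)$. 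Combining, $\hat\bv_S$ is $\bigl(2\epsilon+2G/(n\rho)\bigr)$-argument stable, and precomposing with $G$-Lipschitz $f$ turns this into uniform stability in function values with parameters $2G\epsilon$ and $2G\epsilon+2G^2/(n\rho)$ respectively. The uniform boundedness assumption $f(A(S);z)=O(1)$ extends to $f(\hat\bv_S;z)$ and $f(\hat\bv;z)$ via the Lipschitzness of $f$ plus the obvious a priori bound $\|\hat\bv_S-A(S)\|_2,\|\hat\bv-A(S)\|_2=O(1/\sqrt{\rho})$ that follows from testing the defining minimization at $\bv=A(S)$.

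Finally I would apply a Bousquet--Klochkov--Zhivotovskiy-type high-probability bound for uniformly stable algorithms: if $B(S)$ is $\epsilon_n$-uniformly stable in function values with the loss almost surely of order $O(1)$, then with probability at least $1-\delta$,
\[
|F_S(B(S))-F(B(S))|\le C\bigl(\epsilon_n\log(n)\log(1/\delta)+n^{-1/2}\sqrt{\log(1/\delta)}\bigr).
\]
Using this for $B=\hat\bv$ and $B=\hat\bv_S$, a union bound gives
\[
\Delta_1+\Delta_2\le C\bigl((G\epsilon+G^{2}/(n\rho))\log(n)\log(1/\delta)+n^{-1/2}\sqrt{\log(1/\delta)}\bigr).
\]
Multiplying by $\rho$, taking the square root, and using $\sqrt{a+b+c}\le\sqrt{a}+\sqrt{b}+\sqrt{c}$ decomposes the bound into the three advertised summands $\sqrt{G\epsilon\rho}\,\sqrt{\log(n)\log(1/\delta)}$, $Gn^{-1/2}\sqrt{\log(n)\log(1/\delta)}$, and $(\rho^{2}\log(1/\delta)/n)^{1/4}$.

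The main obstacle is the stability analysis of $\hat\bv_S$: it is perturbed through two coupled channels (the function $F_S$ and the base point $A(S)$), so one needs both a base-point Lipschitz estimate and a function-perturbation Lipschitz estimate for the proximal map, and then combine them additively. A secondary subtlety is verifying the boundedness hypothesis at $\hat\bv$ and $\hat\bv_S$ rather than at $A(S)$, which I would handle by the a priori displacement bound noted above; with care this only affects constants in the final statement.
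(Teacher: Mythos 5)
Your proposal is correct and follows essentially the same route as the paper's proof: reduce to $2\rho\|\prox_{F_S/(2\rho)}(A(S))-\prox_{F/(2\rho)}(A(S))\|_2$, establish argument stability of the two proximal maps (with the extra $2G/(n\rho)$ term for the empirical one), convert to uniform stability in function values via $G$-Lipschitzness, and invoke the Bousquet--Klochkov--Zhivotovskiy high-probability bound before closing with strong convexity. The only (harmless) differences are cosmetic: you symmetrize the strong-convexity inequalities to cancel the quadratic terms, where the paper uses the optimality of the empirical proximal point plus one strong-convexity bound, and you explicitly verify the $O(1)$ boundedness of $f$ at the proximal points via the displacement bound, a detail the paper leaves implicit.
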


\vspace*{-0.066cm}
\subsection{Stability Bounds}
\vspace*{-0.066cm}

We now consider a class of randomized algorithms called sampling-determined algorithms for our stability analysis. We say a randomized algorithm $A$ is symmetric if its output is independent on the order of the elements in the training set.
\begin{definition}[Sampling-determined Algorithm]
  Let $A$ be a randomized algorithm which randomly chooses an index sequence $I(A)=\{i_t\}$ to build stochastic gradients.
  We say a symmetric algorithm $A$ is sampling-determined if the output model is determined by $\{z_j:j\in I(A)\}$. To be precise, $A(S)$ is independent of $z_j$ if $j\not\in I$. 
\end{definition}

An important property of sampling-determined algorithms is that these algorithms will produce the same model when applied to two neighboring datasets if the differing example is not selected in the algorithm.
For example, if two neighboring datasets differ by the first example and the index $1$ is not selected by the algorithm, then the algorithm would produce the same model when applied to these two neighboring datasets.
This property is critical for us to study the stability. The class of sampling-determined algorithms include several famous randomized algorithms. Below, we give some representative algorithms. The first algorithm is the SGD, which is a most simple and most popular stochastic optimization algorithm.
Let $\Pi_{\wcal}(\bw)$ denote the projection of $\bw$ onto $\wcal$. Note $\wcal$ can be $\rbb^d$ and in this case there is no projection.
\begin{definition}[Stochastic Gradient Descent]
Let $\bw_1=0\in\rbb^d$ be an initial point and $\{\eta_t\}_t$ be a sequence of positive step sizes. SGD updates models by
$
  \bw_{t+1}=\Pi_{\wcal}\big(\bw_t-\eta_t\nabla f(\bw_t;z_{i_t})\big),
$
where $\nabla f(\bw_t,z_{i_t})$ denotes a subgradient of $f$ w.r.t. the first argument and $i_t$ is independently drawn from the uniform distribution over $[n]:=\{1,2,\ldots,n\}$.
\end{definition}

The second algorithm is an adaptive variant of SGD, which introduces a sequence $\{b_t^2\}$ to store the accumulated gradient norm square~\citep{duchi2010adaptive,li2019convergence,ward2020adagrad}. We then set the step size as the reciprocal of $b_t$ multiplied by a parameter $\eta$~\citep{ward2020adagrad}. This algorithm has a nice advantage of being able to adapt the level of stochastic noise of the problem, and can achieve robust convergence without the need to fine-tune stepsize schedule.
\begin{definition}[AdaGrad-Norm] 
Let $\bw_1=0\in\rbb^d$, $b_0>0$ and $\eta>0$. At each iteration, we first draw $i_t$ from the uniform distribution over $[n]$ and update $\{b_t\},\{\bw_t\}$ by
\begin{equation}\label{adagrad}
     b_{t}^2 = b_{t-1}^2 + \|\nabla f(\bw_t;z_{i_t})\|_2^2,\qquad
     \bw_{t+1}  = \Pi_{\wcal}\Big(\bw_t - \frac{\eta}{b_{t}} \nabla f(\bw_t;z_{i_t})\Big).
\end{equation}
\end{definition}
\begin{remark}\normalfont
  Let $A$ be either SGD or AdaGrad-Norm  with $T$ iterations. Note $A(S)$ does not depend on $z_j$ if $j\in[n]$ is not selected in the implementation of $A$. Therefore, both SGD and AdaGrad-Norm are sampling-determined algorithms and $I(A)=\{i_1,\ldots,i_T\}$. It is also clear from the definition that Adam is a sampling-determined algorithm.
\end{remark}

\begin{remark}\label{rem:svrg}\normalfont
There are also some randomized algorithms that are not sampling-determined. A notable example is the stochastic variance reduction gradient (SVRG)~\citep{johnson2013accelerating}. Note that SVRG is implemented in epochs, for each of which we need to compute the full gradient at a reference point. Therefore, SVRG will produce different models when applied to neighboring datasets even if the differing example is not selected to compute a stochastic gradient. One can also check that other variance reduction algorithms are not sampling-determined, including  stochastic average gradient~\citep{schmidt2017minimizing} and SAGA~\citep{defazio2014saga}.
\end{remark}

The following theorem to be proved in Section \ref{sec:proof-stab-gen} (supplementary material) establishes the uniform stability bounds for sampling-determined algorithms. It shows that the uniform stability of a sampling-determined algorithm $A$ can be bounded by the probability of an index not selected in $I(A)$. We consider stability in function values (Part (a)), stability in gradients (Part (b)) and stability in arguments (Part (c)). The proof is motivated by the arguments in \citet{hardt2016train}.
\begin{theorem}\label{thm:sampling-determined}
  Let $A$ be a sampling-determined algorithm and $S,S'$ be neighboring datasets.
  \begin{enumerate}[label=(\alph*)]
    \item If $\sup_z\ebb_A[f(A(S);z)|n\in I(A)]\leq B$ for any $S$, then \[\!\!\!\!\sup_z\ebb_A\big[f(A(S);z)-f(A(S');z)\big]\leq 2B \cdot\mathrm{Pr}\{n\in I(A)\}.\]
    \item If $\sup_z\ebb_A[\|\nabla f(A(S);z)\|_2^2|n\in I(A)]\leq G^2$ for any $S$, then
    \[
      \sup_z\ebb_A\big[\|\nabla f(A(S);z)-\nabla f(A(S');z)\|_2^2\big]
      \leq 4G^2\cdot\mathrm{Pr}\{n\in I(A)\}.
    \]
    \item If $\ebb_A[\|A(S)\|_2|n\in I(A)]\leq R$ for any $S$, then $\ebb_A\big[\|A(S)-A(S')\|_2\big]\leq 2R\cdot\mathrm{Pr}\{n\in I(A)\}.$
  \end{enumerate}
\end{theorem}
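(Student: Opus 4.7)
The plan is to run $A$ on the two neighboring datasets $S$ and $S'$ under a coupling that shares the random sampling sequence $I(A)$, so that the only source of discrepancy between the two executions is the differing data point. Since $A$ is symmetric, without loss of generality I assume $S$ and $S'$ coincide except at index $n$. The sampling-determined hypothesis then provides the key dichotomy: on the event $\{n\notin I(A)\}$ the algorithm queries only examples common to $S$ and $S'$, so $A(S)=A(S')$ pointwise; on the complementary event $\{n\in I(A)\}$ the two outputs may differ, but this event only has probability $\mathrm{Pr}\{n\in I(A)\}$.

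Using this dichotomy, for any fixed $z$ and any functional $g(\cdot,\cdot,z)$ that vanishes whenever its two arguments are equal, I would decompose
\[
\ebb_A\big[g(A(S),A(S'),z)\big]
=\ebb_A\big[g(A(S),A(S'),z)\,\mathbf{1}_{n\in I(A)}\big],
\]
because the remaining piece is zero by the coupling. Instantiating $g$ differently yields the three parts. For part (a) I would use $f\ge 0$ to bound the signed difference by $f(A(S);z)-f(A(S');z)\le f(A(S);z)+f(A(S');z)$, factor out $\mathrm{Pr}\{n\in I(A)\}$ via the identity $\ebb_A[X\,\mathbf{1}_{n\in I(A)}]=\ebb_A[X\mid n\in I(A)]\cdot\mathrm{Pr}\{n\in I(A)\}$, and apply the hypothesis on conditional expectations to both datasets (legitimate since it is assumed for every dataset), giving the factor $2B$. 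For part (b) I would apply $\|a-b\|_2^2\le 2\|a\|_2^2+2\|b\|_2^2$ and invoke the hypothesis twice, producing the factor $4G^2$. For part (c) the corresponding inequality is the plain triangle bound $\|A(S)-A(S')\|_2\le\|A(S)\|_2+\|A(S')\|_2$, yielding the factor $2R$.

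The main obstacle is conceptual rather than computational, and lies in formalizing the coupling rigorously. The algorithm's randomness is encoded by the index sequence $I(A)=\{i_t\}$, and the coupling asserts that feeding the same realized sequence into $A$ on $S$ and on $S'$ forces the two outputs to coincide on $\{n\notin I(A)\}$, since on that event $\{z_j:j\in I(A)\}=\{z'_j:j\in I(A)\}$ and the sampling-determined property says that the output depends only on this set. Once the coupling and the resulting pointwise equality are in place, the remainder of the argument reduces to the elementary decomposition above applied to each of the three functionals.
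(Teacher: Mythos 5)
Your proposal is correct and follows essentially the same route as the paper's proof: condition on (equivalently, decompose over) the event $\{n\in I(A)\}$, use the sampling-determined property to get $A(S)=A(S')$ on the complement, and apply the elementary bounds ($f\ge 0$ for (a), $\|a-b\|_2^2\le 2\|a\|_2^2+2\|b\|_2^2$ for (b), the triangle inequality for (c)) together with the conditional-moment hypotheses on both datasets. The only cosmetic difference is that you phrase the shared randomness as an explicit coupling while the paper works directly with the conditional-expectation decomposition.
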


We derive Corollary~\ref{cor:gen} by computing $\mathrm{Pr}\{n\in I(A)\}$. The proof is given in~Section~\ref{sec:proof-stab-gen}. 
\begin{corollary}\label{cor:gen}
  Let $A$ be SGD or AdaGrad-Norm with $T$ iterations and $S,S'$ be neighboring datasets.
  \begin{enumerate}[label=(\alph*)]
    \item If $\sup_z\ebb_A[f(A(S);z)|n\in I(A)]\leq B,\forall S$, then $\sup_z\ebb_A\big[f(A(S);z)\!-\!f(A(S');z)\big]\leq \frac{2BT}{n}.$
    \item If $\sup_z\ebb_A[\|\nabla f(A(S);z)\|_2^2|n\in I(A)]\leq G^2$ for any $S$, then $\sup_z\ebb_A\big[\|\nabla f(A(S);z)-\nabla f(A(S');z)\|_2^2\big]\leq \frac{4G^2T}{n}.$
    \item If $\ebb_A[\|A(S)\|_2|n\in I(A)]\leq R$ for any $S$, then $\ebb_A\big[\|A(S)-A(S')\|_2\big]\leq \frac{2RT}{n}.$
  \end{enumerate}
\end{corollary}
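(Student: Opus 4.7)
The plan is to specialize Theorem~\ref{thm:sampling-determined} to the two specific algorithms at hand by upper-bounding the probability $\mathrm{Pr}\{n\in I(A)\}$. Since both SGD and AdaGrad-Norm draw each index $i_t$ independently from the uniform distribution over $[n]$, for every fixed $t\in\{1,\ldots,T\}$ we have $\mathrm{Pr}\{i_t=n\}=1/n$, and the sampling procedure does \emph{not} depend on the data, so this identity holds regardless of the (random) trajectory of iterates. The choice of index $n$ as the ``differing coordinate'' is without loss of generality by the symmetry of the algorithms; one could equally well track any fixed index.

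First I would write
\[
\{n\in I(A)\}=\bigcup_{t=1}^{T}\{i_t=n\}
\]
and apply the union bound to obtain $\mathrm{Pr}\{n\in I(A)\}\leq \sum_{t=1}^{T}\mathrm{Pr}\{i_t=n\}=T/n$. (One could also use the sharper identity $\mathrm{Pr}\{n\in I(A)\}=1-(1-1/n)^T$, but the union-bound form is exactly what the stated corollary requires.) Next I would plug this estimate into each of the three parts of Theorem~\ref{thm:sampling-determined} in turn: multiplying the bound $2B\cdot\mathrm{Pr}\{n\in I(A)\}$ in part (a) yields $2BT/n$, multiplying $4G^2\cdot\mathrm{Pr}\{n\in I(A)\}$ in part (b) yields $4G^2T/n$, and multiplying $2R\cdot\mathrm{Pr}\{n\in I(A)\}$ in part (c) yields $2RT/n$, which are exactly the three claims of the corollary.

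There is essentially no obstacle here; the only thing to verify carefully is that the probability $\mathrm{Pr}\{i_t=n\}=1/n$ is indeed unconditional. For SGD this is immediate since $i_t$ is drawn independently of everything else. For AdaGrad-Norm the step-size $\eta/b_t$ is data-dependent, but the sampling rule for $i_t$ itself is still an i.i.d.\ uniform draw from $[n]$ that is independent of the history, so the same identity holds. The only conceptual subtlety is recognizing that sampling-determinedness (as remarked after the definition) guarantees $A$ is symmetric, so the choice of ``differing index equals $n$'' incurs no loss of generality, allowing the bound on $\mathrm{Pr}\{n\in I(A)\}$ to control the uniform stability quantities in \eqref{unif-stab}, \eqref{grad-stab} and \eqref{argument-stab}.
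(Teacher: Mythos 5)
Your proposal is correct and follows essentially the same route as the paper: bound $\mathrm{Pr}\{n\in I(A)\}$ by the union bound $\sum_{t=1}^{T}\mathrm{Pr}\{i_t=n\}= T/n$ and substitute this into the three parts of Theorem~\ref{thm:sampling-determined}. Your additional remarks on the unconditional nature of the sampling for AdaGrad-Norm and on symmetry are sound but not needed beyond what the paper already records.
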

\begin{remark}\label{rem:symmetric}\normalfont
  Since we consider symmetric algorithms, the condition $n\in I(A)$ can be replaced by $i\in I(A)$ for any $i\in[n]$. Both Theorem \ref{thm:sampling-determined} and Corollary \ref{cor:gen} require boundedness assumptions on either function values, gradients and arguments, which hold immediately if we impose a projection operator on $A(S)$. Note we do not require a projection for each iterate. A projection for the final output $A(S)$ suffices for our analysis.
\end{remark}

\vspace*{-0.210cm}
\section{Applications to Stochastic Gradient Descent\label{sec:sgd}}
\vspace*{-0.131cm}

We now apply our stability results to SGD. We denote $B\asymp \widetilde{B}$ if there exist constants $c_1,c_2>0$ such that $c_1\widetilde{B}<B\leq c_2\widetilde{B}$. Recall $n$ is the sample size and $T$ is the iteration number.
We will consider different problem settings: convex and smooth cases, nonconvex and smooth cases, and weakly convex cases. All the proofs in this subsection can be found in Section \ref{sec:proof-sgd}. We will give applications to adaptive gradient descent in Section \ref{sec:adagrad}, and differentially private SGD in Section \ref{sec:dp-sgd}.

\noindent\textbf{Convex and Nonsmooth Problems}.
In Proposition~\ref{prop:sgd}, we show SGD applied to convex and nonsmooth problems can imply the excess population risk bounds $O(n^{-\frac{1}{3}})$ with $O(n^{\frac{2}{3}})$ iterations. The algorithm is computationally efficient in the sense that SGD with $T\asymp n^{\frac{2}{3}}$ iterations can at most imply optimization error bounds $O(1/\sqrt{T})=O(n^{-\frac{1}{3}})$. Therefore, our analysis implies excess risk bounds of the same order of optimization error bounds with the same computation complexity. There is no additional cost by going from optimization to generalization if we run $O(n^{\frac{2}{3}})$ iterations. This proposition is not a main result since our focus is on nonconvex case. We present it just as a byproduct. Recall $\bw^*$ is a minimizer of the population risk $F$ and we assume $\|\bw^*\|_2$ is finite.

\begin{proposition}[Convex and Nonsmooth Case\label{prop:sgd}]
Let $\{\bw_t\}_t$ be the sequence produced by SGD and $\ebb[\|\nabla f(\bw_t;z_{i_t})\|_2^2]\leq G^2$ for all $t\in[T]$. Let $A$ output $\bar{\bw}_T=\frac{1}{T}\sum_{t=1}^{T}\bw_t$. If $F_S$ is convex, $\eta_t=\eta$ and $\sup_z\ebb_A[f(A(S);z)|n\in I(A)]\leq B$, then
  \begin{equation}\label{sgd-a}
  \ebb_{S,A}[F(\bar{\bw}_T)]-F(\bw^*)=O\Big(\frac{T\eta^2G^2+\|\bw^*\|_2^2}{T\eta}\Big)+O\big(BT/n\big).
  \end{equation}
If
$\eta\!\asymp\! n^{-\frac{1}{3}}\|\bw^*\|_2/G,T\!\asymp\! n^{\frac{2}{3}}G\|\bw^*\|_2/B$ we have
$
\ebb[F(\bar{\bw}_T)]\!-\!F(\bw^*)\!=\!O(\big(G\|\bw^*\|_2+B\big)n^{-\frac{1}{3}}).
$
\end{proposition}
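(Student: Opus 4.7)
The plan is to apply the error decomposition \eqref{decomposition} to split $\ebb_{S,A}[F(\bar{\bw}_T)] - F(\bw^*)$ into a generalization error and an optimization error, and to handle each piece with the machinery already developed in the paper.

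First, for the \textbf{generalization error}, I would observe that since $F_S$ is convex and $F(\bw) = \ebb_S[F_S(\bw)]$, convexity is inherited by $F$, and the averaged iterate $\bar{\bw}_T$ is just a linear function of the iterates produced by SGD. The SGD algorithm itself is sampling-determined with $I(A) = \{i_1,\ldots,i_T\}$, and forming $\bar{\bw}_T$ at the end preserves sampling-determinacy and symmetry. Hence I can invoke Corollary~\ref{cor:gen}(a) with the boundedness assumption $\sup_z\ebb_A[f(A(S);z)\,|\,n\in I(A)]\leq B$ to conclude that $A$ is $(2BT/n)$-uniformly stable in function values. Lemma~\ref{lem:gen-stab} then yields
\begin{equation*}
  \big|\ebb_{S,A}[F(\bar{\bw}_T)-F_S(\bar{\bw}_T)]\big|\;\leq\;\frac{2BT}{n}\;=\;O(BT/n).
\end{equation*}

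Second, for the \textbf{optimization error}, since $F_S$ is convex and each stochastic gradient $\nabla f(\bw_t;z_{i_t})$ is an unbiased estimate of $\nabla F_S(\bw_t)$ with second moment bounded by $G^2$, I would invoke the standard analysis of constant-step-size projected SGD on convex problems. Starting from $\bw_1=0$, the one-step contraction $\|\bw_{t+1}-\bw^*\|_2^2\leq \|\bw_t-\bw^*\|_2^2 - 2\eta\langle \nabla f(\bw_t;z_{i_t}),\bw_t-\bw^*\rangle + \eta^2\|\nabla f(\bw_t;z_{i_t})\|_2^2$, combined with telescoping, unbiasedness, convexity, and Jensen's inequality applied to $\bar{\bw}_T$, gives
\begin{equation*}
  \ebb_{S,A}[F_S(\bar{\bw}_T)-F_S(\bw^*)]\;\leq\;\frac{\|\bw^*\|_2^2}{2T\eta}+\frac{\eta G^2}{2}\;=\;O\!\left(\frac{T\eta^2 G^2+\|\bw^*\|_2^2}{T\eta}\right).
\end{equation*}
Summing the two bounds proves \eqref{sgd-a}.

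Third, for the \textbf{rate}, I would plug in the proposed choices $\eta\asymp n^{-1/3}\|\bw^*\|_2/G$ and $T\asymp n^{2/3}G\|\bw^*\|_2/B$ and simplify: the term $\eta G^2$ contributes $O(G\|\bw^*\|_2 n^{-1/3})$, the term $\|\bw^*\|_2^2/(T\eta)$ contributes $O(Bn^{-1/3})$, and the stability term $BT/n$ contributes $O(G\|\bw^*\|_2 n^{-1/3})$, so the total is $O((G\|\bw^*\|_2+B)n^{-1/3})$ as claimed. These choices are precisely the ones that balance the $O(1/(T\eta))$ initial-distance term against the stability-driven $O(BT/n)$ term while keeping $\eta G^2$ of the same order.

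The only mildly nontrivial step is the \textbf{application of Corollary~\ref{cor:gen}(a) to the averaged iterate $\bar{\bw}_T$} rather than to a single iterate; I would justify this by noting that averaging is symmetric in $\{z_i\}$ and still depends only on the examples whose indices appear in $I(A)$, so $\bar{\bw}_T$ inherits sampling-determinacy. The boundedness hypothesis is imposed directly on $A(S) = \bar{\bw}_T$ in the proposition statement, so no further work is needed. All other steps are textbook: the generalization/stability connection is Lemma~\ref{lem:gen-stab}, the stability bound is Corollary~\ref{cor:gen}, and the SGD optimization bound is the classical one for convex nonsmooth objectives with bounded stochastic subgradient norms.
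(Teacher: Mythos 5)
Your proposal is correct and follows essentially the same route as the paper: the error decomposition \eqref{decomposition}, Corollary~\ref{cor:gen}(a) combined with Lemma~\ref{lem:gen-stab} for the generalization term, the standard constant-step-size convex SGD bound (the paper's Lemma~\ref{lem:opt-sgd}(a)) for the optimization term, and the same parameter balancing. Your explicit remark that averaging preserves sampling-determinacy is a point the paper leaves implicit, but otherwise the arguments coincide.
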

\begin{remark}\label{rem:sgd-a}\normalfont
  We compare Proposition \ref{prop:sgd} with existing results. The following excess risk bounds of SGD without smoothness assumptions were established~\citep{lei2020fine,bassily2020stability}
  \begin{equation}\label{lei-fine}
  \ebb_{S,A}[F(\bar{\bw}_T)]-F(\bw^*)=O\big(G^2\sqrt{T}\eta+T\eta G^2/n+\|\bw^*\|_2^2/(T\eta)\big).
  \end{equation}
  By setting $T\asymp n^2$ and $\eta\asymp T^{-\frac{3}{4}}\|\bw^*\|_2/G$, the above bound implies the excess risk bounds $\ebb_{S,A}[F(\bar{\bw}_T)]-F(\bw^*)=O(G\|\bw^*\|_2n^{-\frac{1}{2}})$. As a comparison, our analysis implies the bounds $O((G\|\bw^*\|_2+B)n^{-\frac{1}{3}})$. However, the bound \eqref{lei-fine} requires $O(n^2)$ iterations to achieve this optimal risk bounds, which is computationally expensive. As a comparison, our analysis requires $O(n^{\frac{2}{3}}G\|\bw^*\|_2/B)$ iterations to achieve the bound $O((G\|\bw^*\|_2+B)n^{-\frac{1}{3}})$. To achieve the bound $O(G\|\bw^*\|_2n^{-\frac{1}{3}})$, the existing analysis \citep{lei2020fine,bassily2020stability} requires to run SGD with $O(n^{\frac{4}{3}})$ iterations. Indeed, the right-hand-side of \eqref{lei-fine} is at least of the order of
  $
  O\big(G^2\sqrt{T}\eta+\|\bw^*\|_2^2/(T\eta)\big)\geq O(G\|\bw^*\|_2T^{-\frac{1}{4}}).
  $
  Setting $T^{-\frac{1}{4}}=n^{-\frac{1}{3}}$ gives the complexity requirement $T=n^{\frac{4}{3}}$, which is larger than the iteration complexity $n^{\frac{2}{3}}G\|\bw^*\|_2/B$ in Proposition  \ref{prop:sgd}.  Note we require an assumption $\sup_z\ebb_A[f(A(S);z)|n\in I(A)]\leq B$ in Proposition \ref{prop:sgd}, which is not required in \citet{lei2020fine,bassily2020stability}. The discussion in \citet{bassily2020stability} requires a Lipschitz assumption and imply high-probability bounds, while we require the assumption $\ebb[\|\nabla f(\bw_t;z_{i_t})\|_2^2]\leq G^2$ and derive bounds in expectation.  Furthermore, a tight lower bound on the stability is developed in \citet{bassily2020stability}. 

  Excess risk bounds of the order $O(G\|\bw^*\|_2n^{-\frac{1}{3}}\log n)$ were also established for SGD based on the uniform convergence approach~\citep{lin2016generalization}. Their discussions consider kernel methods and would imply dimension-dependent bounds if applied to general nonlinear models. As a comparison, our stability analysis always yields dimension-free bounds.
\end{remark}

\noindent\textbf{Nonconvex and Smooth Problems}.
We now consider the performance of SGD for nonconvex and smooth problems. In the remainder, we always let $r$ be randomly selected from the uniform distribution over $[T]$. We show SGD with $O(n^{\frac{2}{3}})$ iterations achieves the population gradient bound $\ebb_{S,A,r}\big[\|\nabla F(\bw_r)\|_2\big]=O(n^{-\frac{1}{6}})$. Again, this result shows considering generalization does not bring additional computation cost since SGD with $T\asymp n^{\frac{2}{3}}$ iterations is only guaranteed to achieve empirical gradient bounds $\ebb_{S,A,r}\big[\|\nabla F_S(\bw_r)\|_2\big]=O(n^{-\frac{1}{6}})$~\citep{ghadimi2013stochastic}. That is, with $n^{\frac{2}{3}}$ iterations, our population gradient bounds match the existing empirical gradient bounds.

\begin{proposition}[Nonconvex and Smooth Case]\label{prop:sgd-nonconvex}
Let $\{\bw_t\}_t$ be produced by SGD with $\eta_t=\eta$ and $\ebb[\|\nabla f(\bw_t;z_{i_t})\|_2^2]\leq G^2$ for all $t\in[T]$.
If $A(S)=\bw_r$, $F_S$ is $L$-smooth and \[\sup_z\ebb_A[\|\nabla f(A(S);z)\|_2^2|n\in I(A)]\leq G^2,\] then
\[
\ebb_{S,A,r}\big[\|\nabla F(\bw_r)\|_2\big] = O\Big(\frac{G\sqrt{T}}{\sqrt{n}} +  \frac{G\sqrt{T}\eta+1}{\sqrt{T\eta}}\Big).
\]
If $T\asymp n^{\frac{2}{3}}/G^{\frac{2}{3}},\eta\asymp 1/(G\sqrt{T})$, we get $\ebb\big[\|\nabla F(\bw_r)\|_2\big]=O(G^{\frac{2}{3}}n^{-\frac{1}{6}})$.
\end{proposition}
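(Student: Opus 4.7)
The plan is to apply the decomposition \eqref{decomposition-grad} at $\bw=\bw_r$, splitting the population-gradient norm into a generalization piece $\ebb[\|\nabla F(\bw_r)-\nabla F_S(\bw_r)\|_2]$ and an optimization piece $\ebb[\|\nabla F_S(\bw_r)\|_2]$, and to bound each separately.

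For the generalization piece, I would first invoke Corollary~\ref{cor:gen}(b): since SGD is sampling-determined and the assumed conditional second-moment bound on $\nabla f(A(S);z)$ holds, SGD is $\epsilon$-uniformly-stable in gradients with $\epsilon\le 2G\sqrt{T/n}$. Plugging this into Theorem~\ref{thm:stab-gen-grad} yields
\[
\ebb[\|\nabla F(\bw_r)-\nabla F_S(\bw_r)\|_2] \le 8G\sqrt{T/n}+\sqrt{n^{-1}\ebb_S[\mathbb{V}_Z(\nabla f(\bw_r;Z))]}.
\]
The variance under the square root is bounded by the second moment $\ebb_Z[\|\nabla f(\bw_r;Z)\|_2^2]=O(G^2)$ under the stated bounded-gradient assumption, so the entire generalization contribution is of order $G\sqrt{T/n}$.

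For the optimization piece, I would apply the classical Ghadimi--Lan analysis: from $L$-smoothness of $F_S$, the descent inequality
\[
F_S(\bw_{t+1})\le F_S(\bw_t)-\eta\langle\nabla F_S(\bw_t),\nabla f(\bw_t;z_{i_t})\rangle+\frac{L\eta^2}{2}\|\nabla f(\bw_t;z_{i_t})\|_2^2
\]
holds. Conditioning on $\bw_t$ and using $\ebb_{i_t}[\nabla f(\bw_t;z_{i_t})\mid\bw_t]=\nabla F_S(\bw_t)$, telescoping over $t=1,\ldots,T$, and dividing by $T\eta$ gives
\[
\ebb_r[\|\nabla F_S(\bw_r)\|_2^2]\le\frac{2(F_S(\bw_1)-\inf F_S)}{T\eta}+L\eta G^2.
\]
Jensen's inequality on the square root then produces $\ebb_r[\|\nabla F_S(\bw_r)\|_2]=O\bigl((T\eta)^{-1/2}+G\sqrt{\eta}\bigr)$, which rearranges to $O\bigl((G\sqrt{T}\eta+1)/\sqrt{T\eta}\bigr)$, matching the optimization term in the statement.

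Combining the two pieces proves the displayed bound. For the concrete rate, I would set $\eta\asymp 1/(G\sqrt{T})$ so that the two optimization sub-terms equalize at $O(\sqrt{G}/T^{1/4})$; then picking $T\asymp n^{2/3}/G^{2/3}$ balances this against the generalization contribution $G\sqrt{T/n}$ at the common value $G^{2/3}n^{-1/6}$. The main obstacle is essentially bookkeeping: feeding Corollary~\ref{cor:gen}(b) into Theorem~\ref{thm:stab-gen-grad}, verifying the variance term is indeed $O(G^2)$ from the stated conditional moment assumption (which uses SGD's symmetry over the sample), and solving the two-parameter balance in $(T,\eta)$, since all the heavy lifting has been done by the earlier stability results.
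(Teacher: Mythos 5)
Your proposal is correct and follows essentially the same route as the paper: the decomposition \eqref{decomposition-grad}, Corollary~\ref{cor:gen}(b) fed into Theorem~\ref{thm:stab-gen-grad} with the variance term bounded by $G/\sqrt{n}$, the Ghadimi--Lan optimization bound (which the paper cites as Lemma~\ref{lem:opt-sgd}(b) rather than rederiving), and the same two-stage balancing of $\eta$ and $T$.
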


\begin{remark}\label{rem:sgd-smooth}\normalfont
  We compare our bounds with existing results. For nonconvex, smooth and Lipschitz loss functions, the uniform stability bound of order $O(n^{-1}T^{\frac{Lc}{Lc+1}})$ was established for SGD with $\eta_t\leq c/t$~\citep{hardt2016train}. While this analysis gives nontrivial bounds on the generalization gap, the proposed step size is small to enjoy a good decay of optimization errors. Indeed, with this step size one can only derive optimization error bounds $\ebb_{S,A,r}\big[\|\nabla F_S(\bw_r)\|_2\big]=O(1/\log T)$. One cannot trade-off the generalization bounds $O(n^{-1}T^{\frac{Lc}{Lc+1}})$ and optimization error bounds $O(1/\log T)$ for a non-vacuous population gradient bound. Indeed, to get a non-vacuous bound, one requires $T=O(n^{\frac{Lc+1}{Lc}})$. However, in this case the optimization error bounds become $O(1/\log n)$, which are very slow. As a comparison, our discussion suggests a step size $\eta_t\asymp n^{-\frac{1}{3}}$ for a significantly better population risk bound $O(n^{-\frac{1}{6}})$. We should mention that the discussion in \citet{hardt2016train} considers the stability in function values, while we consider stability in gradients. High probability bounds on a weighted average of $\|\nabla F(\bw_t)\|_2^2$ were developed in \citet{lei2021learning}. Their discussions use a uniform convergence approach and therefore admits a square-root dependency on the dimensionality. As a comparison, Proposition \ref{prop:sgd-nonconvex} yields dimension-free bounds.
\end{remark}

We can improve population gradient bounds under a strong growth condition (SGC),
which connects the rates at which the stochastic gradients shrink to the full gradient~\citep{vaswani2019fast}.
\begin{definition}
We say SGC holds if
$
\frac{1}{n}\sum_{i=1}^{n}[\|\nabla f(\bw;z_i)\|_2^2] \leq \rho\|\nabla F_S(\bw)\|_2^2.
$
\end{definition}

Proposition \ref{prop:sgd-sgc} shows that the learning performance improves under the SGC condition.
\begin{proposition}[Nonconvex, Smooth and SGC Case\label{prop:sgd-sgc}]
Assume for all $z$, the function $\bw\mapsto f(\bw;z)$ is $L$-smooth and SGC holds. Let $\{\bw_t\}_t$ be produced by SGD with $\eta_t=1/(\rho L)$ and suppose $\ebb_{S,A}[\|\nabla f(\bw_t;z_{i_t})\|_2^2]\leq G^2$ for all $t\in[T]$.
If $A(S)=\bw_r$, $T\asymp \sqrt{L\rho n}/G$ and \[\sup_z\ebb_A[\|\nabla f(A(S);z)\|_2^2|n\in I(A)]\leq G^2,\] then $\ebb_{S,A}\big[\|\nabla F(\bw_r)\|_2\big]=O((L\rho G^2/n)^{\frac{1}{4}})$.
\end{proposition}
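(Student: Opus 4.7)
The plan is to use the error decomposition \eqref{decomposition-grad} applied to the output $A(S)=\bw_r$, controlling the two terms with Theorem \ref{thm:stab-gen-grad} (generalization via gradient stability) on one side and a SGC-enhanced convergence analysis on the empirical side, and then choosing $T$ to balance the two. Since Corollary \ref{cor:gen}(b) already gives gradient stability $\epsilon^2 \le 4G^2 T/n$ for SGD under the stated conditional bound, the generalization term is immediate: combining with Theorem \ref{thm:stab-gen-grad} and the obvious variance bound $\mathbb{V}_Z(\nabla f(A(S);Z)) \le G^2$ yields
\[
\ebb\bigl[\|\nabla F(\bw_r)-\nabla F_S(\bw_r)\|_2\bigr]
\;\le\; 8G\sqrt{T/n} + G/\sqrt{n} \;=\; O\bigl(G\sqrt{T/n}\bigr).
\]

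The core new work is the optimization bound. First I would use $L$-smoothness of $F_S$ on the SGD update $\bw_{t+1}=\bw_t-\eta\,\nabla f(\bw_t;z_{i_t})$ to obtain
\[
F_S(\bw_{t+1}) \le F_S(\bw_t) - \eta\,\langle \nabla F_S(\bw_t),\nabla f(\bw_t;z_{i_t})\rangle + \tfrac{L\eta^2}{2}\|\nabla f(\bw_t;z_{i_t})\|_2^2,
\]
and then take the expectation with respect to $i_t$. Unbiasedness kills the cross term down to $-\eta\|\nabla F_S(\bw_t)\|_2^2$, and SGC upgrades the quadratic term to at most $\tfrac{L\rho\eta^2}{2}\|\nabla F_S(\bw_t)\|_2^2$. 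With $\eta=1/(\rho L)$ this collapses the coefficient of $\|\nabla F_S(\bw_t)\|_2^2$ to $-\eta/2=-1/(2\rho L)$, giving
\[
\ebb_{i_t}[F_S(\bw_{t+1})] \;\le\; F_S(\bw_t) - \tfrac{1}{2\rho L}\|\nabla F_S(\bw_t)\|_2^2 .
\]
Telescoping over $t=1,\dots,T$ and using $F_S\ge 0$ yields $\frac{1}{T}\sum_{t=1}^T\ebb\|\nabla F_S(\bw_t)\|_2^2\le 2\rho L\,\ebb[F_S(\bw_1)]/T$; since $r$ is uniform on $[T]$, Jensen gives $\ebb\|\nabla F_S(\bw_r)\|_2\le \sqrt{2\rho L\,\ebb[F_S(\bw_1)]/T}=O(\sqrt{\rho L/T})$ (absorbing the initial risk into constants).

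Finally I would combine the two pieces to get
\[
\ebb\bigl[\|\nabla F(\bw_r)\|_2\bigr] \;=\; O\bigl(G\sqrt{T/n} + \sqrt{\rho L/T}\bigr),
\]
and minimize by equating the two terms, which gives $T^2\asymp \rho L n/G^2$, i.e.\ $T\asymp\sqrt{L\rho n}/G$, and a bound of order $(L\rho G^2/n)^{1/4}$ as claimed. The main obstacles are minor: verifying that the gradient-norm bound hypothesis is compatible with the SGC-based descent inequality (both involve $\ebb\|\nabla f(\bw_t;z_{i_t})\|_2^2\le G^2$, but only the SGC form is actually used in the telescoping, so no conflict arises), and being careful that Theorem \ref{thm:stab-gen-grad} applies to the randomized output $\bw_r$ — this is fine because the theorem is stated for arbitrary randomized $A$, and the stability bound from Corollary \ref{cor:gen}(b) holds for $A(S)=\bw_r$ by the same argument used for the deterministic last iterate.
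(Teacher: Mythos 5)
Your proposal is correct and follows essentially the same route as the paper: the decomposition \eqref{decomposition-grad} with Theorem \ref{thm:stab-gen-grad} and Corollary \ref{cor:gen}(b) for the generalization term, the SGC descent bound $\sum_{t=1}^T\ebb\|\nabla F_S(\bw_t)\|_2^2\le 2\rho L F_S(\bw_1)$ for the optimization term, and the same balancing of $T$. The only difference is that you derive the SGC optimization bound from scratch, whereas the paper invokes it as Lemma \ref{lem:opt-sgd}(c) (citing Vaswani et al.); your derivation matches that lemma exactly.
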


\noindent\textbf{Weakly Convex Problems}.
Finally, we consider weakly convex problems. Note we impose a bounded subgradient assumption $\ebb[\|\nabla f(\bw_t;z_{i_t})\|_2^2]\leq G^2$ as in \citet{davis2019stochastic}. In the appendix \ref{sec:proof-bound-var}, we will relax this assumption as $\ebb[\|\nabla f(\bw_t;z_{i_t})\|_2^2]\leq B_1\ebb[f(\bw_t;z_{i_t})]+B_2$ for some $B_1,B_2>0$ and derive the corresponding convergence rates of SGD. To our knowledge, this convergence analysis under the relaxed condition is new for SGD with weakly convex problems.

\begin{proposition}[Weakly-convex Case]\label{prop:sgd-wc}
Let $\{\bw_t\}_t$ be given by SGD with $\eta_t=\eta$ and $A(S)=\bw_r$. Assume $\ebb_{S,A}[\|\nabla f(\bw_t;z_{i_t})\|_2^2]\leq G^2, \ebb_A[\|A(S)\|_2|n\in I(A)]\leq R$.
If  $F_S$ is $\rho$-weakly convex, then
\[
\ebb_{S,A,r}\big[\|\nabla F_{1/(2\rho)}(\bw_r)\|_2\big] = O\Big(G\sqrt{\rho\eta}+\sqrt{GR\rho T/n}+1/\sqrt{T\eta}\Big).
\]
If $T\asymp n^{\frac{2}{3}}/(R^{\frac{2}{3}}\rho^{\frac{1}{3}})$ and $\eta\asymp 1/(G\sqrt{\rho T})$, we get $\ebb\big[\|\nabla F_{1/(2\rho)}(\bw_r)\|_2\big]=O(\sqrt{G}\rho^{\frac{1}{3}}R^{\frac{1}{6}}/n^{\frac{1}{6}})$.
\end{proposition}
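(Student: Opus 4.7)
The plan is to use the error decomposition \eqref{decomposition-envelope} with $\bw = \bw_r$, splitting the quantity $\|\nabla F_{1/(2\rho)}(\bw_r)\|_2$ into a generalization term $\|\nabla F_{1/(2\rho)}(\bw_r) - \nabla F_{S,1/(2\rho)}(\bw_r)\|_2$ and an optimization term $\|\nabla F_{S,1/(2\rho)}(\bw_r)\|_2$, and then bound the two terms separately. The generalization side is exactly where the stability machinery of this paper is designed to apply, while the optimization side should follow from the existing analysis of SGD for weakly convex problems in \citet{davis2019stochastic}.

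For the generalization term I would invoke Theorem~\ref{thm:gen-argument}. To apply it I need (i) weak convexity of both $F_S$ and $F$, which is assumed for $F_S$ and transfers to $F$ under the bounded-gradient hypothesis (weak convexity is preserved under expectations), (ii) a Lipschitz bound on $f(\cdot;z)$, which follows from the pointwise version of the bounded subgradient assumption with parameter $G$, and (iii) a uniform argument stability bound $\epsilon$ for SGD. The last ingredient is provided by Corollary~\ref{cor:gen}(c): under the conditional boundedness assumption $\ebb_A[\|A(S)\|_2 \mid n\in I(A)]\leq R$, SGD with $T$ iterations is $\epsilon$-argument stable with $\epsilon \leq 2RT/n$. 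Plugging this into \eqref{gen-argument} yields a generalization bound of order $G/\sqrt{n}+\sqrt{G\rho \cdot RT/n}$, which matches the middle term $\sqrt{GR\rho T/n}$ (absorbing the additive $G/\sqrt{n}$ into the remaining terms).

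For the optimization term, I would reuse the standard weakly-convex SGD analysis: picking $\hat{\bw}_t = \prox_{F_S/(2\rho)}(\bw_t)$ as a proximal companion and tracking $\ebb[\|\bw_{t+1} - \hat{\bw}_t\|_2^2]$, one obtains, under the constant step size $\eta_t = \eta$ and the second-moment bound $\ebb[\|\nabla f(\bw_t;z_{i_t})\|_2^2]\leq G^2$,
\[
\ebb_{S,A,r}\big[\|\nabla F_{S,1/(2\rho)}(\bw_r)\|_2^2\big] = O\Big(\rho\eta G^2 + \frac{1}{T\eta}\Big),
\]
as in \citet{davis2019stochastic}. Taking square roots (and using Jensen) gives the $G\sqrt{\rho\eta} + 1/\sqrt{T\eta}$ part of the bound. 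Combining the two pieces yields the displayed inequality, and balancing the three terms with the prescribed choices $T\asymp n^{2/3}/(R^{2/3}\rho^{1/3})$ and $\eta\asymp 1/(G\sqrt{\rho T})$ delivers the rate $O(\sqrt{G}\rho^{1/3}R^{1/6}n^{-1/6})$.

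The main obstacle I anticipate is not any single step but rather the interaction between the assumptions: Theorem~\ref{thm:gen-argument} is stated for pointwise $G$-Lipschitz $f$, while the optimization piece uses only a second-moment subgradient bound, so one has to be careful about which version of boundedness is being invoked at each stage (and ensure the conditional boundedness in Corollary~\ref{cor:gen}(c) is actually implied by projecting/bounding $A(S)$ rather than the iterates themselves, as emphasized in Remark~\ref{rem:symmetric}). Once these regularity conditions are aligned, the rest is algebraic tuning of $T$ and $\eta$.
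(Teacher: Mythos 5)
Your proposal follows essentially the same route as the paper's proof: the decomposition \eqref{decomposition-envelope}, the optimization bound from Lemma \ref{lem:opt-sgd}(d) (i.e., the Davis--Drusvyatskiy analysis), the argument-stability bound $\epsilon\leq 2RT/n$ from Corollary \ref{cor:gen}(c) fed into Theorem \ref{thm:gen-argument}, and the same tuning of $T$ and $\eta$. Your closing remark about reconciling the pointwise Lipschitz hypothesis of Theorem \ref{thm:gen-argument} with the second-moment gradient bound is a fair observation about a subtlety the paper itself passes over silently, but it does not change the argument.
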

\begin{remark}\normalfont
  For weakly convex problems, the convergence rate \[\ebb\big[\|\nabla F_{S,1/(2\rho)}(\bw_r)\|_2\big]=O(G^{\frac{1}{2}}\rho^{\frac{1}{4}}T^{-\frac{1}{4}})\] was established for SGD with $T$ iterations~\citep{davis2019stochastic}. This result is impressive since neither the Moreau envelope nor the proximal map of $F_S$ explicitly appear in the implementation of SGD. This result shows the behavior of SGD on training examples, which we extend to the generalization behavior of SGD on testing examples. Note our analysis requires to set $T\asymp n^{\frac{2}{3}}$ and therefore can only imply the bound of the order $O((G\rho)^{\frac{1}{3}}n^{-\frac{1}{6}})$. It would be interesting to further improve the risk bound here.

  Population risk bounds of gradient descent were recently studied for weakly convex problems~\citep{richards2021learning,richards2021stability}. Their discussions require the weak convexity parameter to be sufficiently small for meaningful generalization. As a comparison, our discussion does not require this assumption. Furthermore, their discussions consider smooth problems with Lipshictz continuous Hessians and focus on gradient descent~\citep{richards2021learning}, while our discussions apply to SGD with nonsmooth problems.
\end{remark}

\begin{remark}
  A drawback of our stability analysis is that it requires $T=o(n)$ to get non-vacuous stability bounds, and therefore can only imply suboptimal generalization bounds. Better generalization bounds can be obtained for one-pass SGD by applying the standard online-to-batch technique~\citep{hazan2016introduction,cesa2004generalization} to the existing optimization error bounds~\citep{ghadimi2013stochastic,davis2019stochastic}. For the one-pass SGD, each training example is used only once and then there is no necessity to consider the generalization issues. Our algorithm differs from the one-pass SGD since it is possible that a single training example is selected several times even if $T=o(n)$, and then there is still a need to consider the generalization issues. How to improve the stability analysis for nonconvex SGD in the case $T>n$ remains an interesting open question.
\end{remark}

\vspace*{-0.210cm}
\section{Conclusions\label{sec:conclusion}}
\vspace*{-0.131cm}

We provide a systematic study on the stability and generalization analysis of stochastic optimization for problems that can be either nonconvex or nonsmooth. We consider three stability measures: the stability by function values, the stability by gradients and the stability by arguments, which are used to study convex and nonsmooth problems, nonconvex and smooth problems, and weakly convex problems, respectively. We develop connection between  stability and generalization gap measured by gradients for either the population risks or the Moreau envelopes. We then develop bounds for these stability measures for a class of sampling-determined algorithms. As a combination of these stability bounds and the connection between stability and generalization, we develop error bounds for SGD and AdaGrad-Norm, with the performance measured by either functional suboptimality, stationarity by gradients or stationarity by Moreau envelopes. It is interesting to derive sharper generalization bounds for nonconvex learning via an algorithmic stability approach. It is also very interesting to develop lower bounds for learning with weakly convex problems.

\section*{Acknowledgments}

We thank Prof. Yiming Ying for interesting discussions. We are grateful to the anonymous reviewers and the area chair for their constructive comments and suggestions.

\setlength{\bibsep}{0.22cm}

\appendix

\numberwithin{equation}{section}
\numberwithin{theorem}{section}
\numberwithin{figure}{section}
\numberwithin{table}{section}
\renewcommand{\thesection}{{\Alph{section}}}
\renewcommand{\thesubsection}{\Alph{section}.\arabic{subsection}}
\renewcommand{\thesubsubsection}{\Roman{section}.\arabic{subsection}.\arabic{subsubsection}}
\setcounter{secnumdepth}{-1}
\setcounter{secnumdepth}{3}

\mbox{}
\medskip
\normalsize


\section{Proofs on Stability and Generalization\label{sec:proof}}
\subsection{Proof of Theorem \ref{thm:stab-gen-grad}\label{sec:proof-stab-gen-grad}}
In this section, we prove the connection between generalization and uniform stability measured by gradients. For brevity, we use $\ebb[\cdot]$ to denote $\ebb_{S,A}[\cdot]$. Before proving Theorem \ref{thm:stab-gen-grad}, we first present the proof of Lemma \ref{lem:gen-stab}. This result is known in the literature~\citep{shalev2010learnability,hardt2016train,kuzborskij2018data}. We give the proof for completeness and for showing that these arguments cannot be used to prove Theorem \ref{thm:stab-gen-grad}.

\begin{proof}[Proof of Lemma \ref{lem:gen-stab}]
Let $S'=\{z'_1,\ldots,z'_n\}$ be drawn independently from $\rho$. For any $i\in[n]$, define $S^{(i)}=\{z_1,\ldots,z_{i-1},z'_i,z_{i+1},\ldots,z_n\}$. According to the symmetry between $z_i$ and $z_i'$ we have
\begin{align*}
  \ebb[F_S(A(S))-F(A(S))] &= \ebb\big[F_S(A(S))-\frac{1}{n}\sum_{i=1}^{n}F(A(S^{(i)}))\big]\\
   & = \frac{1}{n}\sum_{i=1}^{n}\ebb\big[f(A(S);z_i)-f(A(S^{(i)});z_i)\big],
\end{align*}
where the last identity holds since $A(S^{(i)})$ is independent of $z_i$.
It then follows that
\[
\big|\ebb[F_S(A(S))-F(A(S))]\big|\leq
\frac{1}{n}\sum_{i=1}^{n}\ebb\big[|f(A(S);z_i)-f(A(S^{(i)});z_i)|\big]\leq\epsilon.
\]
The proof is completed.
\end{proof}

An essential argument in proving Lemma \ref{lem:gen-stab} is to use the identity \[\ebb_{S,A}[F(A(S))]=\frac{1}{n}\sum_{i=1}^{n}\ebb_{S,A}[f(A(S^{(i)});z_i)].\] However, if we consider gradients of population risks we can only get
\[\ebb_{S,A}\big[\|\nabla F(A(S))\|_2\big]\!=\!\frac{1}{n}\sum_{i=1}^{n}\ebb_{S,A}\big[\|\ebb_{z_i}[\nabla f(A(S^{(i)});z_i)]\|_2\big],\]
where the summation is outside of $\|\cdot\|_2$. As a comparison, if we consider gradients of empirical risks we get $\|\nabla F_S(A(S))\|_2=\big\|\frac{1}{n}\sum_{i=1}^{n}\nabla f(A(S);z_i)\big\|_2$, where the summation is inside the norm. Since we cannot exchange the norm and the summation, we cannot use the argument in the proof of Lemma \ref{lem:gen-stab} to prove Theorem \ref{thm:stab-gen-grad}.

\noindent\textbf{Intuition}.
We use an error decomposition in \citet{bousquet2020sharper} to handle this. Our intuitive \emph{idea} is to show that
\[
\big\|\nabla F(A(S))-\nabla F_S(A(S))\big\|_2\leq 2\epsilon+\frac{1}{n}\big\|\sum_{i=1}^{n}\xi_i\big\|_2,
\]
where $\xi_i$ is a sequence of mean-zero variables satisfying $\ebb[\langle\xi_i,\xi_j\rangle]\leq 4\epsilon^2$ for any $i\neq j$. Then one can show that
\[
\frac{1}{n^2}\ebb[\big\|\sum_{i=1}^{n}\xi_i\big\|_2^2]\leq\frac{1}{n^2}\sum_{i=1}^{n}\ebb[\|\xi_i\|_2^2]+4\epsilon^2=O(1/n+\epsilon^2).
\]
\begin{proof}[Proof of Theorem \ref{thm:stab-gen-grad}]
Let $S,S'$ and $S^{(i)}$ be defined as in the proof of Lemma \ref{lem:gen-stab}.
We have the following error decomposition
\begin{multline*}
  n\big(\nabla F(A(S))-\nabla F_S(A(S))\big) = \sum_{i=1}^{n}\ebb_{Z,z_i'}\Big[\nabla f(A(S);Z)-\nabla f(A(S^{(i)});Z)\Big]+\\
  \sum_{i=1}^{n}\!\ebb_{z_i'}\Big[\ebb_Z[\nabla f(A(S^{(i)});Z)]\!-\!\nabla f(A(S^{(i)});z_i)\Big]
  +\sum_{i=1}^{n}\ebb_{z_i'}\Big[\nabla f(A(S^{(i)});z_i)-\nabla f(A(S);z_i)\Big],
\end{multline*}
where we have used $\ebb_Z[\nabla f(A(S);Z)]=\nabla F(A(S))$.
It then follows that
\begin{multline*}
  n\big\|\nabla F(A(S))-\nabla F_S(A(S))\big\|_2 \leq
  \sum_{i=1}^{n}\ebb_{Z,z_i'}\Big[\|\nabla f(A(S);Z)-\nabla f(A(S^{(i)});Z)\|_2\Big]\\
  +\Big\|\sum_{i=1}^{n}\xi_i(S)\Big\|_2 +\sum_{i=1}^{n}\ebb_{z_i'}\Big[\Big\|\nabla f(A(S^{(i)});z_i)-\nabla f(A(S);z_i)\Big\|_2\Big],
\end{multline*}
where we introduce $\xi_i$ as a function of $S$ as follows
\[
\xi_i(S)=\ebb_{z_i'}\Big[\ebb_Z[\nabla f(A(S^{(i)});Z)]-\nabla f(A(S^{(i)});z_i)\Big],\forall i\in[n].
\]
Note $S$ and $S^{(i)}$ differ by a single example.
By the assumption on stability, we further get
\begin{equation}\label{stab-gen-grad-0}
n\ebb\big[\big\|\nabla F(A(S))-\nabla F_S(A(S))\big\|_2\big]\leq 2n\epsilon+\ebb\Big[\big\|\sum_{i=1}^{n}\xi_i(S)\big\|_2\Big].
\end{equation}
Due to the symmetry between $Z$ and $z_i$, one can see that
\begin{equation}\label{mean-zero}
\ebb_{z_i}[\xi_i(S)]=0,\quad\forall i\in[n].
\end{equation}
Introduce $S''=\{z_1'',\ldots,z_n''\}$ which are drawn independently from $\rho$. For each $i,j\in[n]$ with $i\neq j$, introduce
\begin{gather*}
  S_j=\{z_1,\ldots,z_{j-1},z_j'',z_{j+1},\ldots,z_n\}, \\
   S^{(i)}_j=\{z_1,\ldots,z_{i-1},z_i',z_{i+1},\ldots,z_{j-1},z_j'',z_{j+1},\ldots,z_n\}.
\end{gather*}
That is, $S_j$ is formed by replacing the $j$-th element of $S$ with $z_j''$, while $S^{(i)}_j$ is formed by replacing the $j$-th element of $S^{(i)}$ with $z_j''$.
If $i\neq j$, then
\[
\ebb\big[\langle\xi_i(S_j),\xi_j(S)\rangle\big]=\ebb\ebb_{z_j}\big[\langle\xi_i(S_j),\xi_j(S)\rangle\big]
=\ebb\Big[\langle\xi_i(S_j),\ebb_{z_j}[\xi_j(S)]\rangle\Big]=0,
\]
where the second identity holds since $\xi_i(S_j)$ is independent of $z_j$ and the last identity follows from $\ebb_{z_j}\big[\xi_j(S)]=0$ due to \eqref{mean-zero}. In a similar way, one can show the following inequalities for $i\neq j$
\[
\ebb\big[\langle\xi_i(S),\xi_j(S_i)\rangle\big]=\ebb\ebb_{z_i}\big[\langle\xi_i(S),\xi_j(S_i)\rangle\big]
=\ebb\Big[\langle\xi_j(S_i),\ebb_{z_i}[\xi_i(S)]\rangle\Big]=0
\]
and
\begin{align*}
\ebb\big[\langle\xi_i(S_j),\xi_j(S_i)\rangle\big]&=\ebb\ebb_{z_i}\big[\langle\xi_i(S_j),\xi_j(S_i)\rangle\big]
=\ebb\big[\langle\xi_j(S_i),\ebb_{z_i}[\xi_i(S_j)]\rangle\big]=0.
\end{align*}
As a combination of the above identities we have ($i\neq j$)
\begin{align}
   \ebb\big[\langle\xi_i(S),\xi_j(S)\rangle\big]
   & = \ebb\Big[\big\langle\xi_i(S)-\xi_i(S_j),\xi_j(S)-\xi_j(S_i)\big\rangle\Big] \notag\\
   & \leq \ebb\Big[\big\|\xi_i(S)-\xi_i(S_j)\big\|_2\big\|\xi_j(S)-\xi_j(S_i)\big\|_2\Big] \notag\\
   & \leq \frac{1}{2}\ebb\Big[\big\|\xi_i(S)-\xi_i(S_j)\big\|_2^2\Big]+\frac{1}{2}\ebb\Big[\big\|\xi_j(S)-\xi_j(S_i)\big\|_2^2\Big],\label{stab-gen-grad-1}
\end{align}
where we have used $ab\leq\frac{1}{2}(a^2+b^2)$.
According to the definition of $\xi_i(S)$ and $\xi_i(S_j)$ we know the following identity for $i\neq j$
\begin{multline*}
  \ebb\big[\big\|\xi_i(S)-\xi_i(S_j)\big\|_2^2\big]
  = \ebb\Big[ \Big\|\ebb_{z_i'}\ebb_Z\big[\nabla f(A(S^{(i)});Z)-\nabla f(A(S^{(i)}_j);Z)\big]\\
  +\ebb_{z_i'}\big[\nabla f(A(S^{(i)}_j);z_i)-\nabla f(A(S^{(i)});z_i)\big]\Big\|_2^2\Big].
\end{multline*}
It then follows from the elementary inequality $(a+b)^2\leq2(a^2+b^2)$ and the Jensen's inequality that
\begin{multline*}
  \ebb\big[\big\|\xi_i(S)-\xi_i(S_j)\big\|_2^2\big]\leq 2\ebb\Big[ \Big\|\nabla f(A(S^{(i)});Z)- \nabla f(A(S^{(i)}_j);Z)\Big\|_2^2\Big]\\
  +2\ebb\Big[\Big\|\nabla f(A(S^{(i)}_j);z_i)-\nabla f(A(S^{(i)});z_i)\Big\|_2^2\Big].
\end{multline*}
Since $S^{(i)}$ and $S^{(i)}_j$ differ by one example, it follows from the definition of stability that
\[
\ebb\big[\big\|\xi_i(S)-\xi_i(S_j)\big\|_2^2\big]\leq 4\epsilon^2,\qquad\forall i\neq j.
\]
In a similar way, one can show that
\[
\ebb\Big[\big\|\xi_j(S)-\xi_j(S_i)\big\|_2^2\Big]\leq 4\epsilon^2,\qquad\forall i\neq j.
\]
We can plug the above two inequalities back into \eqref{stab-gen-grad-1} and derive the following inequality if $i\neq j$
\[
\ebb\big[\langle\xi_i(S),\xi_j(S)\rangle\big]\leq 4\epsilon^2.
\]
Furthermore, according to the definition of $\xi_i(S)$ and Jensen inequality we know
\begin{align*}
   \ebb\big[\|\xi_i(S)\|_2^2\big]
   & = \ebb\Big[\Big\|\ebb_{z_i'}\Big[\ebb_Z[\nabla f(A(S^{(i)});Z)]-\nabla f(A(S^{(i)});z_i)\Big]\Big\|_2^2\Big] \\
   & \leq \ebb\Big[\Big\|\ebb_Z[\nabla f(A(S^{(i)});Z)]-\nabla f(A(S^{(i)});z_i)\Big\|_2^2\Big] \\
   & = \ebb\Big[\Big\|\ebb_Z[\nabla f(A(S);Z)]-\nabla f(A(S);z_i')\Big\|_2^2\Big] \\
   & = \ebb_S\Big[\mathbb{V}_Z(\nabla f(A(S);Z))\Big],
\end{align*}
where we have used the symmetry between $z_i$ and $z_i'$ ($z_i'$ has the same distribution of $Z$).
It then follows that
\begin{align*}
\ebb\big[\big\|\sum_{i=1}^{n}\xi_i(S)\big\|_2^2\big]
& =\ebb\big[\sum_{i=1}^{n}\|\xi_i(S)\|_2^2\big]+\sum_{i,j\in[n]:i\neq j}\ebb[\langle\xi_i(S),\xi_j(S)\rangle]\\
&\leq n\ebb_S\Big[\mathbb{V}_Z(\nabla f(A(S);Z))\Big]+4n(n-1)\epsilon^2.
\end{align*}
We can plug the above inequality back into \eqref{stab-gen-grad-0} and get
\[
n\ebb\big[\big\|\nabla F(A(S))-\nabla F_S(A(S))\big\|_2\big]\leq 2n\epsilon
+\sqrt{n\ebb_S\Big[\mathbb{V}_Z(\nabla f(A(S);Z))\Big]}+2n\epsilon.
\]
The proof is completed.
\end{proof}

\subsection{Proofs of Theorem \ref{thm:gen-argument} and Theorem \ref{thm:gen-argument-hp}\label{sec:proof-gen-argument}}
In this section, we provide the proof of Theorem \ref{thm:gen-argument} and Theorem \ref{thm:gen-argument-hp}.

\noindent\textbf{Intuition}.
Before giving the detailed proof, we first sketch the intuition. For any $S$, define
\begin{equation}\label{alg:weak-erm}
  \bw_S=\arg\min_{\bv\in\rbb^d} \big\{F_S(\bv)+\rho\|\bv-A(S)\|_2^2\big\},
\end{equation}
\begin{equation}\label{alg:tilde}
  \tilde{\bw}_S=\arg\min_{\bv\in\rbb^d} \big\{F(\bv)+\rho\|\bv-A(S)\|_2^2\big\}.
\end{equation}
According to the definition of $F_{S,1/(2\rho)}$ and $F_{1/(2\rho)}$, we know
\begin{align*}
  \nabla F_{S,1/(2\rho)}(A(S))&=2\rho\big(A(S)-\bw_S\big),\\ 
  \nabla F_{1/(2\rho)}(A(S))&=2\rho\big(A(S)-\tilde{\bw}_S\big). 
\end{align*}
Then we know
\begin{equation}\label{weak-0}
  \nabla F_{S,1/(2\rho)}(A(S))-\nabla F_{1/(2\rho)}(A(S))=2\rho(\tilde{\bw}_S-\bw_S).
\end{equation}
It remains to control $\|\tilde{\bw}_S-\bw_S\|_2$.
According to the definition of $\bw_S$, we know
\begin{equation}\label{idea-1}
  F_S(\bw_S)+\rho\|\bw_S-A(S)\|_2^2\leq F_S(\tilde{\bw}_S)+\rho\|\tilde{\bw}_S-A(S)\|_2^2.
\end{equation}
Let $A$ be $\epsilon$-uniformly argument stable. We then show that the algorithm defined in Eq. \eqref{alg:weak-erm} is $O(\epsilon+1/(n\rho))$-uniformly stable, and the algorithm defined in Eq. \eqref{alg:tilde} is $O(\epsilon)$-uniformly stable. It then follows from the connection between generalization and stability that
\begin{gather*}
\ebb[F(\bw_S)-F_S(\bw_S)]=O(\epsilon+1/(n\rho)),\\
 \ebb[F_S(\tilde{\bw}_S)-F(\tilde{\bw}_S)]=O(\epsilon).
\end{gather*}
We then can replace $F_S$ in Eq. \eqref{idea-1} by $F$ to get
\[
\big(\ebb[F(\bw_S)+\rho\|\bw_S-A(S)\|_2^2]\big)-\big( \ebb[F(\tilde{\bw}_S)+\rho\|\tilde{\bw}_S-A(S)\|_2^2]\big)=O(\epsilon+1/(n\rho)).
\]
Furthermore, the weak-convexity and the optimality of $\tilde{\bw}_S$ show that the left-hand side of the above inequality is larger than $\frac{\rho}{2}\ebb[\|\bw_S-\tilde{\bw}_S\|_2^2]$. We then get the desired bound
$\ebb[\|\bw_S-\tilde{\bw}_S\|_2]=O(\sqrt{\epsilon/\rho}+1/(\sqrt{n}\rho))$.

We now give the detailed proof. We first introduce two lemmas.
Lemma \ref{lem:weak-erm} shows the argument stability of the algorithm $S\mapsto\prox_{F_S/(2\rho)}(A(S))$ via the argument stability of $A$. For any $g:\wcal\mapsto\rbb$, let $\partial g(\bw)$ denote the subdifferential of $g$ at $\bw$.
\begin{lemma}\label{lem:weak-erm}
Let $A$ be an algorithm.
Assume for any $S$, the function $F_S$ is $\rho$-weakly-convex.
For any $S$, let $\bw_S$ be defined in Eq. \eqref{alg:weak-erm} and assume $\sup_{z}\|\nabla f(\bw_S;z)\|_2\leq G$.
Let $S$ and $S'$ be neighboring datasets. Then
  \[
  \|\bw_S-\bw_{S'}\|_2\leq \frac{2G}{\rho n}+2\|A(S)-A(S')\|_2.
  \]
\end{lemma}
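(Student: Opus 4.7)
The key fact is that the penalized objective $\Phi_S(\bv) := F_S(\bv) + \rho\|\bv - A(S)\|_2^2$ is $\rho$-strongly convex, since adding a quadratic whose Hessian is $2\rho I$ to a $\rho$-weakly convex function leaves a net modulus of $\rho$. My plan is to apply two-point strong-convexity inequalities at the minimizers $\bw_S$ and $\bw_{S'}$ of $\Phi_S$ and $\Phi_{S'}$ respectively, and then separately control the resulting data-change and prox-center-change terms.

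Specifically, strong convexity at the minimizers gives $\Phi_S(\bw_{S'}) - \Phi_S(\bw_S) \geq \tfrac{\rho}{2}\|\bw_S - \bw_{S'}\|_2^2$ and, symmetrically, $\Phi_{S'}(\bw_S) - \Phi_{S'}(\bw_{S'}) \geq \tfrac{\rho}{2}\|\bw_S - \bw_{S'}\|_2^2$. Summing and rearranging yields
\[
\rho\|\bw_S - \bw_{S'}\|_2^2 \leq \bigl[(\Phi_S - \Phi_{S'})(\bw_{S'}) - (\Phi_S - \Phi_{S'})(\bw_S)\bigr].
\]
I would then split $\Phi_S - \Phi_{S'}$ into a data piece $F_S - F_{S'}$ and a prox-center piece $\rho(\|\cdot - A(S)\|_2^2 - \|\cdot - A(S')\|_2^2)$. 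The latter is affine in $\bv$ because the $\|\bv\|_2^2$ terms cancel, so its contribution to the right-hand side is exactly $2\rho\langle A(S') - A(S), \bw_{S'} - \bw_S\rangle$, which Cauchy--Schwarz bounds by $2\rho\|A(S) - A(S')\|_2 \cdot \|\bw_S - \bw_{S'}\|_2$.

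For the data piece, taking WLOG $S$ and $S'$ to differ only in the $n$-th example, the contribution equals $\tfrac{1}{n}\bigl[(f(\bw_{S'};z_n) - f(\bw_S;z_n)) - (f(\bw_{S'};z_n') - f(\bw_S;z_n'))\bigr]$. To upper-bound each single-point difference $|f(\bw_{S'};z) - f(\bw_S;z)|$, I would invoke the weakly-convex subgradient inequality twice: one application with a subgradient at $\bw_S$ (using $\|\nabla f(\bw_S;z)\|_2 \leq G$) gives the lower bound, and a second application with a subgradient at $\bw_{S'}$ gives the upper bound, where I use that the hypothesis applied to the dataset $S'$ yields $\sup_z \|\nabla f(\bw_{S'};z)\|_2 \leq G$ as well. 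This produces $|f(\bw_{S'};z) - f(\bw_S;z)| \leq G\|\bw_S - \bw_{S'}\|_2 + \tfrac{\rho}{2}\|\bw_S - \bw_{S'}\|_2^2$, so the data piece is at most $\tfrac{2G}{n}\|\bw_S - \bw_{S'}\|_2 + \tfrac{\rho}{n}\|\bw_S - \bw_{S'}\|_2^2$. Combining and dividing by $\|\bw_S - \bw_{S'}\|_2$ (the case $\bw_S = \bw_{S'}$ being trivial) yields the claim.

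The main obstacle is obtaining a clean Lipschitz-type control on $|f(\bw_{S'};z) - f(\bw_S;z)|$: weak convexity plus a bound on subgradients only at $\bw_S$ and $\bw_{S'}$ inevitably produces an extra $\tfrac{\rho}{2}\|\bw_S - \bw_{S'}\|_2^2$ correction, which after division leaves an $O(1/n)$ slack that must be absorbed into the leading $\rho\|\bw_S - \bw_{S'}\|_2^2$ term on the left (either by using $n$ sufficiently large or by treating the subgradient bound as a proxy for genuine $G$-Lipschitz continuity, matching the Lipschitz hypothesis used in Theorem~\ref{thm:gen-argument}).
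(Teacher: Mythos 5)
Your argument is correct in substance but takes a genuinely different route from the paper. You work at the level of \emph{function values}: two-point strong convexity of the penalized objectives at their respective minimizers, summed to isolate $(\Phi_S-\Phi_{S'})(\bw_{S'})-(\Phi_S-\Phi_{S'})(\bw_S)$, with the prox-center difference handled as an affine perturbation and the data difference handled pointwise. The paper instead works at the level of \emph{subgradients}: it writes the monotonicity inequality $\langle\bw_S-\bw_{S'},\partial F_S(\bw_S)-\partial F_S(\bw_{S'})\rangle\geq-\rho\|\bw_S-\bw_{S'}\|_2^2$, substitutes the first-order optimality conditions $-2\rho(\bw_S-A(S))\in\partial F_S(\bw_S)$ and $-2\rho(\bw_{S'}-A(S'))\in\partial F_{S'}(\bw_{S'})=\partial F_S(\bw_{S'})+\frac{1}{n}\partial f(\bw_{S'};z_n')-\frac{1}{n}\partial f(\bw_{S'};z_n)$, and finishes with Cauchy--Schwarz. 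The payoff of the paper's route is exactly the issue you flag at the end: the single-sample perturbation enters only through the subgradient difference $\frac{1}{n}\big(\partial f(\bw_{S'};z_n')-\partial f(\bw_{S'};z_n)\big)$, which is bounded by $\frac{2G}{n}$ using the hypothesis at $\bw_{S'}$ alone, so no extra $\frac{\rho}{2}\|\bw_S-\bw_{S'}\|_2^2$ correction ever appears and the stated constants come out exactly. Your route, under the lemma's literal hypothesis (a subgradient norm bound only at the prox points), needs weak convexity of the \emph{individual} losses $f(\cdot;z_n),f(\cdot;z_n')$ (not just of $F_S$) to run the two-sided subgradient inequality, and even then the residual $\frac{\rho}{n}\|\bw_S-\bw_{S'}\|_2^2$ degrades the final bound by a factor $\frac{n}{n-1}$; your alternative fix of assuming genuine $G$-Lipschitzness of $f(\cdot;z)$ recovers the exact constants and is harmless in the only place the lemma is used (Theorem~\ref{thm:gen-argument} assumes Lipschitzness), but it does strengthen the lemma's stated hypothesis. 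So: a valid and instructive alternative proof, at the cost of either a slightly stronger assumption or a slightly worse constant, both of which the paper's subgradient argument avoids.
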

\begin{proof}
  Without loss of generality, we assume $S$ and $S'$ differ by the last element, i.e., $S=\{z_1,\ldots,z_n\}$ and $S=\{z_1,\ldots,z_{n-1},z_n'\}$.
  Since $F_S$ is $\rho$-weakly convex, we know
  \begin{equation}\label{weak-erm-1}
  \langle\bw_S-\bw_{S'},\partial F_S(\bw_S) - \partial F_S(\bw_{S'})\rangle\geq -\rho\|\bw_S-\bw_{S'}\|_2^2.
  \end{equation}
  According to the first-order optimality condition we know
  \[
  -2\rho\big(\bw_S-A(S)\big)\in \partial F_S(\bw_S)
  \]
  and
  \[
  -2\rho\big(\bw_{S'}-A(S')\big)\in\partial F_{S'}(\bw_{S'})=
  \partial F_S(\bw_{S'}) + \frac{1}{n}\partial f(\bw_{S'};z_n') - \frac{1}{n}\partial f(\bw_{S'};z_n),
  \]
  where we have used the addition property of subdifferential and the definition of $F_S, F_S'$. We can plug the above two expressions into Eq \eqref{weak-erm-1} and get
  \begin{equation*}
  \Big\langle\bw_S-\bw_{S'},-2\rho(\bw_S-A(S))+2\rho(\bw_{S'}-A(S'))+ \frac{1}{n}\partial f(\bw_{S'};z_n') - \frac{1}{n}\partial f(\bw_{S'};z_n)\Big\rangle \geq -\rho\|\bw_S-\bw_{S'}\|_2^2.
  \end{equation*}
  It then follows from the Lipschitz continuity that
  \begin{align*}
     \rho\|\bw_S-\bw_{S'}\|_2^2
     & \leq \Big\langle\bw_S-\bw_{S'},2\rho(A(S)-A(S'))+ \frac{\partial f(\bw_{S'};z_n')-\partial f(\bw_{S'};z_n)}{n}\Big\rangle\\
     & \leq \|\bw_S\!-\!\bw_{S'}\|_2\Big\|2\rho(A(S)\!-\!A(S'))\!+\! \frac{\partial f(\bw_{S'};z_n')\!-\!\partial f(\bw_{S'};z_n)}{n}\Big\|_2\\
     & \leq \|\bw_S-\bw_{S'}\|_2\Big(2\rho\|A(S)-A(S')\|_2+ \frac{2G}{n}\Big).
  \end{align*}
  The stated bound then follows. The proof is completed.
\end{proof}
The following lemma connects the argument stability of the algorithm $S\mapsto\prox_{F/(2\rho)}(A(S))$ via that of $A$.
\begin{lemma}\label{lem:stab-tilde}
Let $A$ be an algorithm and $F$ be $\rho$-weakly-convex.
For any $S$, let $\tilde{\bw}_S$ be defined in Eq. \eqref{alg:tilde}.
Let $S$ and $S'$ be neighboring datasets. Then
  \[
  \|\tilde{\bw}_S-\tilde{\bw}_{S'}\|_2\leq 2\|A(S)-A(S')\|_2.
  \]
\end{lemma}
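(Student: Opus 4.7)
The plan is to mirror the proof of Lemma \ref{lem:weak-erm}, with a crucial simplification: since $F$ is the population risk, it has no dependence on the dataset. Consequently the subgradient-difference term $\frac{1}{n}\bigl(\partial f(\bw_{S'};z_n')-\partial f(\bw_{S'};z_n)\bigr)$ that appeared in Lemma \ref{lem:weak-erm} vanishes entirely, and the whole stability gap between $\tilde{\bw}_S$ and $\tilde{\bw}_{S'}$ comes through the perturbation of the anchor point from $A(S)$ to $A(S')$.

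Concretely, I would proceed in three short steps. First, write down the first-order optimality conditions for the two proximal problems: $-2\rho\bigl(\tilde{\bw}_S-A(S)\bigr)\in\partial F(\tilde{\bw}_S)$ and $-2\rho\bigl(\tilde{\bw}_{S'}-A(S')\bigr)\in\partial F(\tilde{\bw}_{S'})$. Second, invoke the $\rho$-weak convexity of $F$, which equivalently says that $\bv\mapsto F(\bv)+\tfrac{\rho}{2}\|\bv\|_2^2$ is convex and hence its subgradients satisfy the monotonicity inequality $\langle\tilde{\bw}_S-\tilde{\bw}_{S'},\partial F(\tilde{\bw}_S)-\partial F(\tilde{\bw}_{S'})\rangle\geq-\rho\|\tilde{\bw}_S-\tilde{\bw}_{S'}\|_2^2$ for the particular subgradients identified in step one. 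Third, substitute the optimality expressions into this inequality to obtain
\[
\bigl\langle\tilde{\bw}_S-\tilde{\bw}_{S'},\,-2\rho(\tilde{\bw}_S-\tilde{\bw}_{S'})+2\rho(A(S)-A(S'))\bigr\rangle\geq-\rho\|\tilde{\bw}_S-\tilde{\bw}_{S'}\|_2^2,
\]
which rearranges to $\rho\|\tilde{\bw}_S-\tilde{\bw}_{S'}\|_2^2\leq 2\rho\langle\tilde{\bw}_S-\tilde{\bw}_{S'},A(S)-A(S')\rangle$, and an application of Cauchy--Schwarz followed by cancellation of $\rho\|\tilde{\bw}_S-\tilde{\bw}_{S'}\|_2$ yields the claimed bound $\|\tilde{\bw}_S-\tilde{\bw}_{S'}\|_2\leq 2\|A(S)-A(S')\|_2$.

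I do not expect any significant obstacle; the proof is essentially a cleaner, dataset-free version of Lemma \ref{lem:weak-erm}. The only point worth being careful about is the bookkeeping of which subgradients we use when we apply the weak-convexity monotonicity inequality: one must choose the specific subgradients coming from the optimality conditions (that is, $-2\rho(\tilde{\bw}_S-A(S))$ and $-2\rho(\tilde{\bw}_{S'}-A(S'))$), rather than arbitrary elements of the subdifferentials, so that the cross terms line up. Once this is done the arithmetic is one line, and the factor of $2$ in the final bound arises naturally from the ratio between the $2\rho$ coefficient in the proximal gradient and the $\rho$ coefficient in the weak-convexity inequality.
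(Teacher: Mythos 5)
Your proposal is correct and follows essentially the same route as the paper's proof: first-order optimality conditions for the two proximal subproblems, the weak-convexity monotonicity inequality applied to those specific subgradients, and a Cauchy--Schwarz step to cancel one factor of $\|\tilde{\bw}_S-\tilde{\bw}_{S'}\|_2$. Your remark about choosing the particular subgradients from the optimality conditions is exactly the right care point, and the arithmetic matches the paper's line for line.
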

\begin{proof}
  By the weak convexity of $F$ we know
  \[
  \langle\tilde{\bw}_S-\tilde{\bw}_{S'},\partial F(\tilde{\bw}_S) - \partial F(\tilde{\bw}_{S'})\rangle\geq -\rho\|\tilde{\bw}_S-\tilde{\bw}_{S'}\|_2^2.
  \]
  According to the first-order optimality condition we know
  \begin{gather*}
  -2\rho\big(\tilde{\bw}_S-A(S)\big)\in \partial F(\tilde{\bw}_S),\\
  -2\rho\big(\tilde{\bw}_{S'}-A(S')\big)\in \partial F(\tilde{\bw}_{S'}).
  \end{gather*}
  As a combination of the above three inequalities, we get
  \[
  \Big\langle\tilde{\bw}_S-\tilde{\bw}_{S'},2\rho\big(\tilde{\bw}_{S'}-A(S')\big)-2\rho\big(\tilde{\bw}_S-A(S)\big)\Big\rangle
  \geq -\rho\|\tilde{\bw}_S-\tilde{\bw}_{S'}\|_2^2.
  \]
  It then follows from the Lipschitz continuity that
  \begin{align*}
    \rho\|\tilde{\bw}_S-\tilde{\bw}_{S'}\|_2^2 & \leq \Big\langle\tilde{\bw}_S-\tilde{\bw}_{S'},2\rho A(S)-2\rho A(S')\Big\rangle \\
     & \leq 2\rho\|\tilde{\bw}_S-\tilde{\bw}_{S'}\|_2\|A(S)-A(S')\|_2.
  \end{align*}
  The stated inequality then follows directly. 
\end{proof}
\begin{proof}[Proof of Theorem \ref{thm:gen-argument}]
For any $S$, define $\bw_S$ and $\tilde{\bw}_S$ according to Eq. \eqref{alg:weak-erm} and Eq. \eqref{alg:tilde}, respectively.
According to Lemma \ref{lem:stab-tilde} and the Lipschitz continuity assumption ($A$ is $\epsilon$-argument stable), we know that the algorithm defined by \eqref{alg:tilde} is $2G\epsilon$-uniformly stable in function values. It then follows from Lemma \ref{lem:gen-stab} that
\begin{equation}\label{weak-1}
  \ebb\big[F_S(\tilde{\bw}_S)-F(\tilde{\bw}_S)\big] \leq 2G\epsilon.
\end{equation}
According to Lemma \ref{lem:weak-erm}, we know that the algorithm defined by \eqref{alg:weak-erm} is $(\frac{2G^2}{n\rho}+2G\epsilon)$-uniformly stable. It then follows from Lemma \ref{lem:gen-stab} that
\[
\ebb[F(\bw_S)-F_S(\bw_S)]\leq \frac{2G^2}{n\rho}+2G\epsilon.
\]
It then follows  that
\begin{multline}
  \ebb[F(\bw_S)+\rho\|\bw_S-A(S)\|_2^2]-\ebb[F_S(\bw_S)+\rho\|\bw_S-A(S)\|_2^2]\\
  =\ebb[F(\bw_S)-F_S(\bw_S)]
  \leq \frac{2G^2}{n\rho}+2G\epsilon.\label{weak-2}
\end{multline}
Furthermore, according to the definition of $\bw_S$ we know
\[
F_S(\bw_S)+\rho\|\bw_S-A(S)\|_2^2\leq F_S(\tilde{\bw}_S)+\rho\|\tilde{\bw}_S-A(S)\|_2^2
\]
and therefore it follows from \eqref{weak-1} that
\begin{align*}
\ebb[F_S(\bw_S)+\rho\|\bw_S-A(S)\|_2^2] & \leq \ebb[F_S(\tilde{\bw}_S)+\rho\|\tilde{\bw}_S-A(S)\|_2^2]\\
& \leq \ebb[F(\tilde{\bw}_S)+\rho\|\tilde{\bw}_S-A(S)\|_2^2]+2G\epsilon.
\end{align*}
We can combine \eqref{weak-2} and the above inequality together, and derive
\[
\ebb[F(\bw_S)+\rho\|\bw_S-A(S)\|_2^2]-\ebb[F(\tilde{\bw}_S)+\rho\|\tilde{\bw}_S-A(S)\|_2^2]
\leq \frac{2G^2}{n\rho}+4G\epsilon.
\]
According to the $\rho$-strong convexity of $\bv\mapsto F(\bv)+\rho\|\bv-A(S)\|_2^2$ (this strong convexity follows from the weak convexity of $F$) and the definition of $\tilde{\bw}_S$ as a minimizer, we know
\[
\ebb[F(\bw_S)+\rho\|\bw_S-A(S)\|_2^2]-\ebb[F(\tilde{\bw}_S)+\rho\|\tilde{\bw}_S-A(S)\|_2^2]
\geq \frac{\rho}{2}\ebb[\|\bw_S-\tilde{\bw}_S\|_2^2].
\]
We can combine the above two inequalities together and derive
\[
\frac{\rho}{2}\ebb[\|\bw_S-\tilde{\bw}_S\|_2^2]\leq \frac{2G^2}{n\rho}+4G\epsilon.
\]
It then follows that
\begin{equation}\label{weak-3}
\ebb[\|\bw_S-\tilde{\bw}_S\|_2]\leq \frac{2G}{\sqrt{n}\rho}+\sqrt{8G\epsilon/\rho}.
\end{equation}
It then follows from Eq. \eqref{weak-0} that
\[
  \ebb\big[\big\|\nabla F_{S,1/(2\rho)}(A(S))-\nabla F_{1/(2\rho)}(A(S))\big\|_2\big]
   = 2\rho\ebb\big[\big\|\bw_S-\tilde{\bw}_S\big\|_2\big]
   \leq \frac{4G}{\sqrt{n}}+\sqrt{32G\epsilon\rho}.
\]
The proof is completed.
\end{proof}

\section{Proof of Theorem \ref{thm:gen-argument-hp}}
In this section, we prove the high probability bounds. To this aim, we first introduce a useful lemma.
\begin{lemma}[\citealt{bousquet2020sharper}\label{lem:bousquet}]
  Let $A$ be an $\epsilon$-uniformly stable algorithm. Assume $f(A(S);z)\leq R$ almost surely. Then for any $\delta\in(0,1)$ with probability at least $1-\delta$ we have
  \[
  \big|F_S(A(S))-F(A(S))\big|\!=\!O\Big(\epsilon\log(n)\log(1/\delta)\!+\!R\sqrt{n^{-1}\log(1/\delta)}\Big).
  \]
\end{lemma}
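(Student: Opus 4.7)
The plan is to mirror the proof of Theorem \ref{thm:gen-argument}, but with Lemma \ref{lem:gen-stab} replaced everywhere by its high-probability strengthening Lemma \ref{lem:bousquet}, and to propagate the resulting two-term (fast rate plus slow rate) deviation through the same weak-convexity / strong-convexity comparison already established for the in-expectation statement.

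Concretely, I would first reintroduce the proximal anchors $\bw_S=\prox_{F_S/(2\rho)}(A(S))$ and $\tilde{\bw}_S=\prox_{F/(2\rho)}(A(S))$ from Eq.~\eqref{alg:weak-erm}--\eqref{alg:tilde}. By Lemma \ref{lem:weak-erm} combined with the almost-sure bound $\|A(S)-A(S')\|_2\leq\epsilon$, the map $S\mapsto\bw_S$ is argument-stable with parameter $2G/(\rho n)+2\epsilon$ almost surely, which by the $G$-Lipschitz continuity of $f$ translates into uniform stability in function values of order $O(G\epsilon+G^2/(n\rho))$. Lemma \ref{lem:stab-tilde} analogously gives $O(G\epsilon)$ uniform stability for $S\mapsto\tilde{\bw}_S$. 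Before invoking Lemma \ref{lem:bousquet} I must check the boundedness hypothesis $f(\bw_S;z),f(\tilde{\bw}_S;z)=O(1)$: the first-order optimality condition for $\bw_S$ gives $-2\rho(\bw_S-A(S))\in\partial F_S(\bw_S)$, and since $F_S$ is $G$-Lipschitz this forces $\|\bw_S-A(S)\|_2\leq G/(2\rho)$ (the same estimate holds for $\tilde{\bw}_S$), so the hypothesis $f(A(S);z)=O(1)$ plus Lipschitzness upgrades to the required bound.

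Applying Lemma \ref{lem:bousquet} at confidence $\delta/2$ to each auxiliary algorithm and union-bounding produces, with probability at least $1-\delta$,
\[
F(\bw_S)-F_S(\bw_S)=O\bigl((G\epsilon+G^2/(n\rho))\log(n)\log(1/\delta)+\sqrt{\log(1/\delta)/n}\bigr)
\]
and the symmetric inequality for $\tilde{\bw}_S$. Combining these two deviations with the defining optimality $F_S(\bw_S)+\rho\|\bw_S-A(S)\|_2^2\leq F_S(\tilde{\bw}_S)+\rho\|\tilde{\bw}_S-A(S)\|_2^2$ converts the comparison into one in the population risks, and the $\rho$-strong convexity of $\bv\mapsto F(\bv)+\rho\|\bv-A(S)\|_2^2$ at its minimizer $\tilde{\bw}_S$ lower-bounds the resulting gap by $(\rho/2)\|\bw_S-\tilde{\bw}_S\|_2^2$. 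Dividing by $\rho/2$, taking square roots, using $\sqrt{a+b}\leq\sqrt{a}+\sqrt{b}$ to split the fast-rate and variance contributions, and finally multiplying by $2\rho$ (as dictated by Eq.~\eqref{weak-0}) recovers the three promised terms: the $G\epsilon$ piece becomes $\sqrt{G\epsilon\rho}\sqrt{\log(n)\log(1/\delta)}$, the $G^2/(n\rho)$ piece becomes $Gn^{-1/2}\sqrt{\log(n)\log(1/\delta)}$ after the $\rho$ factor cancels, and the $\sqrt{\log(1/\delta)/n}$ piece becomes $(n^{-1}\rho^2\log(1/\delta))^{1/4}$.

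The main obstacle is the boundedness hypothesis of Lemma \ref{lem:bousquet}: neither $f(\bw_S;z)$ nor $f(\tilde{\bw}_S;z)$ is directly assumed bounded, so the $G/(2\rho)$ proximity estimate from the subgradient argument is essential and its cost is absorbed into the hidden constants of the last term. Beyond that the argument is bookkeeping: the decomposition structure, the optimality inequality, and the strong-convexity lower bound are all copied verbatim from the proof of Theorem \ref{thm:gen-argument}; only the generalization step is swapped out, and the slower $\sqrt{\log(1/\delta)/n}$ variance contribution (absent in expectation) is what produces the new $(n^{-1}\rho^2\log(1/\delta))^{1/4}$ term in the bound.
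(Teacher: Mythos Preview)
You have proved the wrong statement. The target is Lemma~\ref{lem:bousquet}: the high-probability generalization bound $|F_S(A(S))-F(A(S))|=O(\epsilon\log n\log(1/\delta)+R\sqrt{n^{-1}\log(1/\delta)})$ for an $\epsilon$-uniformly stable algorithm. This is a result the paper imports from \citet{bousquet2020sharper} and does not prove; in particular there is no ``paper's own proof'' to compare against. Your proposal does not address this statement at all --- it \emph{invokes} Lemma~\ref{lem:bousquet} as a black box, which would be circular.

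What you have actually sketched is a proof of Theorem~\ref{thm:gen-argument-hp} (the high-probability bound on $\|\nabla F_{S,1/(2\rho)}(A(S))-\nabla F_{1/(2\rho)}(A(S))\|_2$). On that different target your outline is correct and matches the paper's argument essentially step for step: define $\bw_S,\tilde{\bw}_S$, transfer argument stability via Lemmas~\ref{lem:weak-erm} and~\ref{lem:stab-tilde}, apply Lemma~\ref{lem:bousquet} to each auxiliary algorithm, combine with the optimality inequality and $\rho$-strong convexity, and finish via Eq.~\eqref{weak-0}. Your explicit verification of the boundedness hypothesis (using $\|\bw_S-A(S)\|_2\leq G/(2\rho)$ from the first-order condition) is a detail the paper glosses over, so that part is a small improvement. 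But none of this constitutes a proof of the lemma actually stated.
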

\begin{proof}[Proof of Theorem \ref{thm:gen-argument-hp}]
For any $S$, define $\bw_S$ and $\tilde{\bw}_S$ according to Eq. \eqref{alg:weak-erm} and Eq. \eqref{alg:tilde}, respectively.
According to Lemma \ref{lem:stab-tilde} and the Lipschitz continuity assumption, we know that the algorithm defined by \eqref{alg:tilde} is $2G\epsilon$-uniformly stable in function values. The following inequality then follows from Lemma \ref{lem:bousquet} with probability at least $1-\delta/2$
\begin{equation}\label{weak-4}
  F_S(\tilde{\bw}_S)-F(\tilde{\bw}_S) = O\Big(\epsilon\log(n)\log(1/\delta)+\sqrt{n^{-1}\log(1/\delta)}\Big).
\end{equation}
According to Lemma \ref{lem:weak-erm}, we know that the algorithm defined by \eqref{alg:weak-erm} is $\big(\frac{2G^2}{n\rho}+2G\epsilon\big)$-uniformly stable. The following inequality then follows from Lemma \ref{lem:bousquet} with probability at least $1-\delta/2$
\[
F(\bw_S)-F_S(\bw_S)=O\Big(\big(G^2(n\rho)^{-1}+G\epsilon\big)\log(n)\log(1/\delta)+\sqrt{n^{-1}\log(1/\delta)}\Big).
\]
It then follows  that
\begin{multline}
  \big(F(\bw_S)+\rho\|\bw_S-A(S)\|_2^2\big)-\big(F_S(\bw_S)+\rho\|\bw_S-A(S)\|_2^2\big)\\ 
  =O\Big(\big(G^2(n\rho)^{-1}+G\epsilon\big)\log(n)\log(1/\delta)+\sqrt{n^{-1}\log(1/\delta)}\Big).\label{weak-5}
\end{multline}
Furthermore, according to the definition of $\bw_S$ and \eqref{weak-4} we know
\begin{align*}
F_S(\bw_S)+\rho\|\bw_S-A(S)\|_2^2 &\leq F_S(\tilde{\bw}_S)+\rho\|\tilde{\bw}_S-A(S)\|_2^2\\
& \leq F(\tilde{\bw}_S)+\rho\|\tilde{\bw}_S-A(S)\|_2^2
+O\Big(\epsilon\log(n)\log(1/\delta)+\sqrt{n^{-1}\log(1/\delta)}\Big).
\end{align*}
We can combine Eq. \eqref{weak-5} and the above inequality together, and derive the following inequality with probability at least $1-\delta$
\begin{multline*}
\big(F(\bw_S)+\rho\|\bw_S-A(S)\|_2^2\big)-\big(F(\tilde{\bw}_S)+\rho\|\tilde{\bw}_S-A(S)\|_2^2\big)\\
=O\Big(\big(G^2(n\rho)^{-1}+G\epsilon\big)\log(n)\log(1/\delta)+\sqrt{n^{-1}\log(1/\delta)}\Big).
\end{multline*}
According to the $\rho$-strong convexity of $\bv\mapsto F(\bv)+\rho\|\bv-A(S)\|_2^2$ and the definition of $\tilde{\bw}_S$, we know the following inequality 
\[
\big(F(\bw_S)+\rho\|\bw_S-A(S)\|_2^2\big)-\big(F(\tilde{\bw}_S)+\rho\|\tilde{\bw}_S-A(S)\|_2^2\big)
\geq \frac{\rho}{2}\|\bw_S-\tilde{\bw}_S\|_2^2.
\]
We can combine the above two inequalities together and derive the following inequality with probability at least $1-\delta$
\[
\frac{\rho}{2}\|\bw_S-\tilde{\bw}_S\|_2^2=
O\Big(\big(G^2(n\rho)^{-1}+G\epsilon\big)\log(n)\log(1/\delta)+\sqrt{n^{-1}\log(1/\delta)}\Big),
\]
from which we derive
\[
\|\bw_S-\tilde{\bw}_S\|_2=O\Big(\big(Gn^{-\frac{1}{2}}\rho^{-1}+\sqrt{G\epsilon/\rho}\big)\sqrt{\log(n)\log(1/\delta)}+\big(n^{-1}\rho^{-2}\log(1/\delta)\big)^{\frac{1}{4}}\Big).
\]
The stated bound then follows from Eq. \eqref{weak-0}.
The proof is completed.
\end{proof}
\section{Proofs on Uniform Stability Bounds\label{sec:proof-stab-gen}}
In this section, we present the proofs on the uniform stability bounds of sampling-determined algorithms. Our proof follows the idea in \citet{hardt2016train}.
\begin{proof}[Proof of Theorem \ref{thm:sampling-determined}]
  Let $S=\{z_1,\ldots,z_n\}$ and $S'=\{z_1',\ldots,z_n'\}$. Without loss of generality, we assume $S$ and $S'$ differ only by the last example, i.e., $z_n\neq z_n'$.
  Let $I(A)=\{i_1,\ldots,i_T\}$ be the set of indices selected in the implementation of $A$.
  We first prove Part (a). According to the property of conditional expectation, we know
  \begin{multline*}
     \ebb_A\big[f(A(S);z)-f(A(S');z)\big]
     = \ebb_A\big[f(A(S);z)-f(A(S');z)|n\not\in I(A)\big]\mathrm{Pr}\{n\not\in I(A)\}\\+\ebb_A\big[f(A(S);z)-f(A(S');z)|n\in I(A)\big]\mathrm{Pr}\{n\in I(A)\}.
  \end{multline*}
  Since $A$ is a sampling-determined algorithm, $A(S)$ is independent of $z_n$ under the condition $n\not\in I(A)$. Therefore, under the condition $n\not\in I(A)$ we have
  $A(S)=A(S')$.
  Therefore,
  \begin{align*}
  \ebb_A\big[f(A(S);z)-f(A(S');z)\big] & = \ebb_A\big[f(A(S);z)-f(A(S');z)|n\in I(A)\big]\mathrm{Pr}\{n\in I(A)\}\\
  & \leq 2B\mathrm{Pr}\{n\in I(A)\},
  \end{align*}
  where we have used the assumption $\ebb_A\big[f(A(S);z)|n\in I(A)\big]\leq B$ for any $S$.

  We now turn to Part (b). It is clear
  \begin{multline*}
     \ebb_A\big[\|\nabla f(A(S);z)-\nabla f(A(S');z)\|_2^2\big]
     = \ebb_A\big[\|\nabla f(A(S);z)-\nabla f(A(S');z)\|_2^2|n\not\in I(A)\big]\mathrm{Pr}\{n\not\in I(A)\}\\+\ebb_A\big[\|\nabla f(A(S);z)-\nabla f(A(S');z)\|_2^2|n\in I(A)\big]\mathrm{Pr}\{n\in I(A)\}.
  \end{multline*}
  It then follows that
  \begin{align*}
  \ebb_A\big[\|\nabla f(A(S);z)-\nabla f(A(S');z)\|_2^2\big] & = \ebb_A\big[\|\nabla f(A(S);z)-\nabla f(A(S');z)\|_2^2|n\in I(A)\big]\mathrm{Pr}\{n\in I(A)\}\\
  & \leq 4G^2\mathrm{Pr}\{n\in I(A)\},
  \end{align*}
  where we have used the assumption $\ebb_A[\|\nabla f(A(S);z)|n\in I(A)\|_2^2]\leq G^2$ for any $S$.

  Finally, we consider Part (c). It is clear
  \begin{align*}
     &\ebb_A\big[\|A(S)-A(S')\|_2\big]\\
     &= \ebb_A\big[\|A(S)-A(S')\|_2|n\not\in I(A)\big]\mathrm{Pr}\{n\not\in I(A)\}+\ebb_A\big[\|A(S)-A(S')\|_2|n\in I(A)\big]\mathrm{Pr}\{n\in I(A)\}\\
     &=\ebb_A\big[\|A(S)-A(S')\|_2|n\in I(A)\big]\mathrm{Pr}\{n\in I(A)\}
     \leq 2R\mathrm{Pr}\{n\in I(A)\},
  \end{align*}
  where we have used the assumption $\ebb_A[\|A(S)\|_2|n\in I(A)]\leq R$ for any $S$.
  The proof is completed.
\end{proof}
\begin{proof}[Proof of Corollary \ref{cor:gen}]
We consider only SGD (the arguments of AdaGrad-Norm are the same).
It is clear that
  \[
  \mathrm{Pr}\{n\in I(A)\}\leq \sum_{t=1}^{T}\mathrm{Pr}\{i_t=n\}\leq \frac{T}{n}.
  \]
It is clear that the algorithm $A$ is sampling-determined, and therefore one can apply Theorem \ref{thm:sampling-determined} to derive the stated bounds.
  The proof is completed.
\end{proof}

\section{Proofs on Stochastic Gradient Descent\label{sec:proof-sgd}}
The following lemma establishes the optimization error bounds of SGD. Part (a) is a standard result in optimization. Part (b) is due to \citet{ghadimi2013stochastic}, Part (c) is due to \citet{vaswani2019fast} and Part (d) is due to \citet{davis2019stochastic}.
\begin{lemma}[Optimization Error Bound for SGD]\label{lem:opt-sgd}
Let $\{\bw_t\}_t$ be produced by SGD and \[\ebb_A[\|\nabla f(\bw_t;z_{i_t})\|_2^2]\leq G^2,\quad\forall t\in[T].\]
\begin{enumerate}[label=(\alph*)]
\item If $F_S$ is convex, then for all $t\in\nbb$ and $\bw$
    \[
    \ebb_A\Big[F_S\Big(\frac{\sum_{t=1}^{T}\eta_t\bw_t}{\sum_{t=1}^{T}\eta_t}\Big)\Big]-F_S(\bw)\leq \frac{G^2\sum_{t=1}^{T}\eta_t^2+\|\bw\|_2^2}{2\sum_{t=1}^{T}\eta_t}.
    \]
\item If for any $z$, the function $\bw\mapsto f(\bw;z)$ is $L$-smooth, then
    \[
    \sum_{t=1}^{T}\eta_t\ebb_A\big[\|\nabla F_S(\bw_t)\|_2^2\big]\leq F_S(\bw_1)+\frac{LG^2}{2}\sum_{t=1}^{T}\eta_t^2.
    \]
\item Assume for all $z$, the function $\bw\mapsto f(\bw;z)$ is $L$-smooth and SGC holds with the parameter $\rho$. If  $\eta_t=1/(\rho L)$, then
\[
\sum_{t=1}^{T}\ebb_A\big[\|\nabla F_S(\bw_t)\|_2^2\big]\leq 2\rho Lf(\bw_1).
\]
\item If $F_S$ is $\rho$-weakly convex, then
\[
\sum_{t=1}^{T}\eta_t\ebb_A\big[\|\nabla F_{S,1/(2\rho)}(\bw_t)\|_2^2\big]=O\Big(1+G^2\rho\sum_{t=1}^{T}\eta_t^2\Big).
\]
\end{enumerate}
\end{lemma}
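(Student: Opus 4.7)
The common framework is a one-step recursion from the SGD update followed by telescoping. Throughout, let $\bg_t:=\nabla f(\bw_t;z_{i_t})$ and condition on the $\sigma$-algebra generated by $i_1,\ldots,i_{t-1}$; the two ingredients used repeatedly are unbiasedness $\ebb_{i_t}[\bg_t]=\nabla F_S(\bw_t)$ together with the second-moment bound $\ebb[\|\bg_t\|_2^2]\leq G^2$. When the projection $\Pi_{\wcal}$ enters, I will use its nonexpansiveness $\|\Pi_{\wcal}(\bu)-\bv\|_2\leq \|\bu-\bv\|_2$ for $\bv\in\wcal$.

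For Part (a), I would start from $\|\bw_{t+1}-\bw\|_2^2\leq \|\bw_t-\eta_t\bg_t-\bw\|_2^2$, expand the square to isolate $\langle\bw_t-\bw,\bg_t\rangle$, take conditional expectation in $i_t$ and use convexity of $F_S$ to replace this inner product by the functional gap $F_S(\bw_t)-F_S(\bw)$. Multiplying by $\eta_t$, summing, taking total expectation and applying Jensen's inequality to $F_S\bigl(\sum_t\eta_t\bw_t/\sum_t\eta_t\bigr)$ delivers the stated bound.

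For Part (b), the $L$-smoothness descent lemma gives $F_S(\bw_{t+1})\leq F_S(\bw_t)-\eta_t\langle\nabla F_S(\bw_t),\bg_t\rangle+\tfrac{L\eta_t^2}{2}\|\bg_t\|_2^2$. Taking conditional expectation, invoking unbiasedness and the second-moment bound, and telescoping using $F_S\geq 0$ yields the result. Part (c) runs the same calculation, but the SGC assumption lets me replace $\ebb[\|\bg_t\|_2^2]$ by $\rho\|\nabla F_S(\bw_t)\|_2^2$, so with $\eta_t=1/(\rho L)$ the coefficient of $\|\nabla F_S(\bw_t)\|_2^2$ becomes $-\tfrac{1}{\rho L}+\tfrac{L\rho}{2(\rho L)^2}=-\tfrac{1}{2\rho L}$, and telescoping gives the claimed bound.

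The main obstacle is Part (d), where no iterate-dependent function value of $F_S$ is available as a Lyapunov quantity. Following Davis--Drusvyatskiy, I would use the Moreau envelope $F_{S,1/(2\rho)}$ itself as the potential: with $\bv_t:=\prox_{F_S/(2\rho)}(\bw_t)$, the definition of the envelope gives $F_{S,1/(2\rho)}(\bw_{t+1})\leq F_S(\bv_t)+\rho\|\bv_t-\bw_{t+1}\|_2^2$. Expanding $\|\bv_t-\bw_{t+1}\|_2^2$ using the SGD update together with nonexpansiveness of the projection, taking conditional expectation in $i_t$, and invoking the weak-convexity inequality $\ebb_{i_t}\langle\bw_t-\bv_t,\bg_t\rangle\geq F_S(\bw_t)-F_S(\bv_t)-\tfrac{\rho}{2}\|\bw_t-\bv_t\|_2^2$ cancels the $F_S(\bv_t)$ contribution. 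Using the identity $\|\nabla F_{S,1/(2\rho)}(\bw_t)\|_2=2\rho\|\bw_t-\bv_t\|_2$, what remains is a recursion of the form $\ebb[F_{S,1/(2\rho)}(\bw_{t+1})]\leq \ebb[F_{S,1/(2\rho)}(\bw_t)]-c\eta_t\ebb[\|\nabla F_{S,1/(2\rho)}(\bw_t)\|_2^2]+C\rho G^2\eta_t^2$. Telescoping and lower-bounding $F_{S,1/(2\rho)}$ by $\inf F_S$ yields the stated bound. The delicate point is the balance between the negative linear term extracted from weak convexity and the positive quadratic penalty in the envelope; this cancellation is what forces the envelope parameter $1/(2\rho)$ (rather than $1/\rho$) and produces the factor $G^2\rho$ in the final term.
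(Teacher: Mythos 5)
Your proof is correct and follows essentially the same route as the paper, which for this lemma simply cites the standard arguments (Ghadimi--Lan for (b), Vaswani et al.\ for (c), Davis--Drusvyatskiy for (d)) and whose own Theorem F.1 proves a generalization of Part (d) via exactly the Moreau-envelope potential recursion you describe, including the quadratic-growth cancellation $F_S(\bw_t)-F_S(\hat{\bw}_t)\geq\frac{3\rho}{2}\|\bw_t-\hat{\bw}_t\|_2^2$ that you flag as the delicate point. The only cosmetic caveat is that parts (b)--(d) implicitly take $\wcal=\rbb^d$ (as the paper itself assumes in Theorem F.1), since the descent-lemma and prox-potential steps do not directly accommodate the projection.
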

\begin{proof}[Proof of Proposition \ref{prop:sgd}]
  According to Lemma \ref{lem:opt-sgd}, Part (a), we have the following optimization error bounds
  \[
   \ebb_A[F_S(\bar{\bw}_T)]-F_S(\bw^*)=O\Big(\frac{T\eta^2G^2+\|\bw^*\|_2^2}{T\eta}\Big).
  \]
  Furthermore, by Corollary \ref{cor:gen}, Part (a), we have the following stability bounds
  \[
  \sup_z\ebb\big[f(\bar{\bw}_T;z)-f(\bar{\bw}_T';z)\big]\leq \frac{2BT}{n},
  \]
  where $\{\bw_t'\}$ is a sequence of iterates produced by SGD based on a neighboring dataset $S'$.
  This together with Lemma \ref{lem:gen-stab} on the connection between uniform stability and generalization further implies
  \[
  \ebb\big[F(\bar{\bw}_T)-F_S(\bar{\bw}_T)\big]=O\big(BT/n\big).
  \]
  We can plug the above generalization error and optimization error bounds into \eqref{decomposition}, and derive \eqref{sgd-a}.

  If $\eta\asymp \frac{\|\bw^*\|_2}{Gn^{\frac{1}{3}}}$ and $T\asymp \frac{n^{\frac{2}{3}}G\|\bw^*\|_2}{B}$, we have
  \begin{gather*}
  G^2\eta\asymp \frac{G\|\bw^*\|_2}{n^{\frac{1}{3}}},\quad T\eta\asymp \frac{n^{\frac{2}{3}}G\|\bw^*\|_2}{B}\frac{\|\bw^*\|_2}{Gn^{\frac{1}{3}}}=\frac{n^{\frac{1}{3}}\|\bw^*\|_2^2}{B},\\
  \frac{BT}{n}\asymp \frac{n^{\frac{2}{3}}G\|\bw^*\|_2}{n}=\frac{G\|\bw^*\|_2}{n^{\frac{1}{3}}}.
  \end{gather*}
  The bound $\ebb[F(\bar{\bw}_T)]-F(\bw^*)=O((B+G\|\bw^*\|_2)n^{-\frac{1}{3}})$ follows directly from the choice of $\eta$ and $T$. The proof is completed.
\end{proof}

\begin{proof}[Proof of Proposition \ref{prop:sgd-nonconvex}]
According to Lemma \ref{lem:opt-sgd}, Part (b), we have the following optimization error bounds
\[
\ebb_A\big[\|\nabla F_S(\bw_r)\|_2^2\big]=O\Big(\frac{T\eta^2G^2+1}{T\eta}\Big)
\]
and therefore
\begin{equation}\label{sgd-nonconvex-1}
\ebb_A\big[\|\nabla F_S(\bw_r)\|_2\big]=O\big(G\sqrt{\eta}+1/\sqrt{T\eta}\big).
\end{equation}
It is clear that $A$ is sampling-determined and one can apply Corollary \ref{cor:gen}, Part (b) to show the following uniform stability bounds
\[
\sup_z\ebb\big[\|\nabla f(\bw_r;z)-\nabla f(\bw'_r;z)\|_2^2\big]\leq \frac{4G^2T}{n},
\]
where $\{\bw'_t\}$ is a sequence of iterates produced by SGD based on a neighboring dataset $S'$.
This together with \eqref{decomposition-grad} and the connection between uniform stability and generalization established in Theorem \ref{thm:stab-gen-grad} gives
\[
\ebb\big[\|\nabla F(\bw_r)\|_2\big] \leq 8G\sqrt{T/n}+G/\sqrt{n} +  \ebb\big[\|\nabla F_S(\bw_r)\|_2\big].
\]
We can plug the optimization error\ bounds \eqref{sgd-nonconvex-1} into the above bound, and get
\[
\ebb\big[\|\nabla F(\bw_r)\|_2\big] = O\Big(\frac{G\sqrt{T}}{\sqrt{n}} +  \frac{G\sqrt{T}\eta+1}{\sqrt{T\eta}}\Big).
\]
If we choose $\eta\asymp 1/(G\sqrt{T})$, we get
\[
\ebb\big[\|\nabla F(\bw_r)\|_2\big] = O\Big(\frac{G\sqrt{T}}{\sqrt{n}} +  \frac{\sqrt{G}}{T^{\frac{1}{4}}}\Big).
\]
We can choose $T\asymp n^{\frac{2}{3}}/G^{\frac{2}{3}}$ to derive the stated bound $\ebb\big[\|\nabla F(\bw_r)\|_2\big]=O(G^{\frac{2}{3}}n^{-\frac{1}{6}})$.
\end{proof}
\begin{proof}[Proof of Proposition \ref{prop:sgd-sgc}]
Analogous to the proof of Proposition \ref{prop:sgd-nonconvex}, we have
\[
\ebb\big[\|\nabla F(\bw_r)\|_2\big] \leq 8G\sqrt{T/n}+G/\sqrt{n} +  \ebb\big[\|\nabla F_S(\bw_r)\|_2\big].
\]
Furthermore, Lemma \ref{lem:opt-sgd}, Part (c) implies
\[
\ebb\big[\|\nabla F_S(\bw_r)\|_2\big]=O(\sqrt{L\rho}/\sqrt{T}).
\]
We can combine the above two bounds together and get
\[
\ebb\big[\|\nabla F(\bw_r)\|_2\big]=O\Big(G\sqrt{T/n}+\sqrt{L\rho}/\sqrt{T}\Big).
\]
Therefore, we can choose $T\asymp \sqrt{L\rho n}/G$ and get $\ebb\big[\|\nabla F(\bw_r)\|_2\big]=O((L\rho G^2/n)^{\frac{1}{4}})$. The proof is completed.
\end{proof}

\begin{proof}[Proof of Proposition \ref{prop:sgd-wc}]
According to Lemma \ref{lem:opt-sgd}, Part (d), we have the following optimization error bounds
\[
\ebb_A\big[\|\nabla F_{S,1/(2\rho)}(\bw_r)\|_2^2\big]=O\Big(\frac{\rho TG^2\eta^2+1}{T\eta}\Big)
\]
and therefore
\begin{equation}\label{sgd-nonconvex-1}
\ebb_A\big[\|\nabla F_{S,1/(2\rho)}(\bw_r)\|_2\big]=O\big(G\sqrt{\rho\eta}+1/\sqrt{T\eta}\big).
\end{equation}
We can apply Corollary \ref{cor:gen}, Part (c) to show the following argument stability bounds
\[
\ebb_A\big[\|A(S)-A(S')\|_2\big]\leq \frac{2RT}{n}.
\]
This together with \eqref{decomposition-envelope} and the connection between argument stability and generalization established in Theorem \ref{thm:gen-argument} gives
\begin{equation}\label{proof-stab-gen-argument}
\ebb\big[\big\|\nabla F_{1/(2\rho)}(A(S))\big\|_2\big]\leq
\ebb\big[\big\|\nabla F_{S,1/(2\rho)}(A(S))\big\|_2\big]+
\frac{4G}{\sqrt{n}}+\sqrt{64GRT\rho n^{-1}}.
\end{equation}
We can plug the optimization error bounds \eqref{sgd-nonconvex-1} into the above bound, and get
\[
\ebb\big[\big\|\nabla F_{1/(2\rho)}(A(S))\big\|_2\big] = O\Big(G\sqrt{\rho\eta}+\sqrt{GR\rho T/n}+1/\sqrt{T\eta}+G/\sqrt{n}\Big).
\]
If we choose $\eta\asymp 1/(G\sqrt{\rho T})$, we get
\[
\ebb\big[\|\nabla F(\bw_r)\|_2\big] = O\Big(\sqrt{G}(\rho/T)^{\frac{1}{4}} +  \sqrt{GR\rho T/n}+G/\sqrt{n}\Big).
\]
We can choose $T\asymp  n^{\frac{2}{3}}/(R^{\frac{2}{3}}\rho^{\frac{1}{3}})$ to derive the stated bound $\ebb\big[\|\nabla F(\bw_r)\|_2\big]=O(\sqrt{G}\rho^{\frac{1}{3}}R^{\frac{1}{6}}n^{-\frac{1}{6}})$.
The proof is completed.
\end{proof}

\section{AdaGrad-Norm\label{sec:adagrad}}
\vspace*{-0.066cm}
\subsection{Generalization Bounds of AdaGrad-Norm}
\vspace*{-0.066cm}

We now turn to the generalization analysis of AdaGrad-Norm. Proposition \ref{prop:adagrad} presents the risk bounds in terms of function values for convex and nonsmooth problems, while Proposition \ref{prop:adagrad-nonconvex} presents the risk bounds in terms of gradients for nonconvex and smooth problems. Note that these bounds match the corresponding results for SGD (w.r.t. $n$) in Section \ref{sec:sgd} up to a logarithmic factor. All the proofs are given in Section \ref{sec:proof-adagrad}.
\begin{proposition}[Convex and Nonsmooth Case]\label{prop:adagrad}
Let $\{\bw_t\}_t$ be produced by \eqref{adagrad}, $\ebb[\|\nabla f(\bw_t;z_{i_t})\|_2^2]\leq G^2$ for all $t\in[T]$ and $\sup_{\bw\in\wcal}\|\bw\|_2\leq R$. Let $A$ output $\bar{\bw}_T=\frac{1}{T}\sum_{t=1}^{T}\bw_t$. If $F_S$ is convex we have
  \[
  \ebb_{S,A}[F(\bar{\bw}_T)]-F(\bw^*)=O\big(GRT/n\big)+O(G(R+\|\bw^*\|_2)/\sqrt{T}).
  \]
  If $T\asymp n^{\frac{2}{3}}$ we have
  \[
  \ebb_{S,A}[F(\bar{\bw}_T)]-F(\bw^*)=O(G(R+\|\bw^*\|_2)n^{-\frac{1}{3}}).
  \]
\end{proposition}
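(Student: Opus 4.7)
The plan is to follow the same two-term decomposition used for SGD in Proposition~\ref{prop:sgd}: write
\[
\ebb_{S,A}[F(\bar{\bw}_T)]-F(\bw^*) = \ebb_{S,A}[F(\bar{\bw}_T)-F_S(\bar{\bw}_T)] + \ebb_{S,A}[F_S(\bar{\bw}_T)-F_S(\bw^*)],
\]
and bound the generalization term via uniform stability in function values and the optimization term via a deterministic AdaGrad-Norm analysis on $F_S$.

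For the generalization term I would first verify that AdaGrad-Norm is a sampling-determined algorithm (as already observed in the remark after \eqref{adagrad}), so Corollary~\ref{cor:gen}(a) applies and yields $\sup_z\ebb_A[f(A(S);z)-f(A(S');z)]\leq 2BT/n$ provided $\sup_z\ebb_A[f(A(S);z)\mid n\in I(A)]\leq B$. I would obtain such a $B$ of order $GR$ as follows. From $\ebb[\|\nabla f(\bw_t;z_{i_t})\|_2^2]\leq G^2$ and the $\bw$-convexity of $f(\cdot;z)$ implicit in the convexity of $F_S$ (or, more cleanly, by invoking Lipschitz continuity of the losses with constant $G$), the bound $\|A(S)\|_2\leq R$ gives $f(A(S);z)\leq f(0;z)+GR=O(GR)$ after normalizing (or assuming $f(0;\cdot)$ bounded, as is standard). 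Plugging $B\asymp GR$ into Lemma~\ref{lem:gen-stab} then yields $\ebb[F(\bar{\bw}_T)-F_S(\bar{\bw}_T)] = O(GRT/n)$.

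For the optimization term I would use the standard AdaGrad-Norm regret analysis for convex objectives. The key step is the projection inequality
\[
\|\bw_{t+1}-\bw^*\|_2^2 \leq \|\bw_t-\bw^*\|_2^2 - \frac{2\eta}{b_t}\langle \nabla f(\bw_t;z_{i_t}),\bw_t-\bw^*\rangle + \frac{\eta^2}{b_t^2}\|\nabla f(\bw_t;z_{i_t})\|_2^2,
\]
followed by summation, convexity of $F_S$, and the telescoping-plus-log identity $\sum_{t=1}^T \|\nabla f(\bw_t;z_{i_t})\|_2^2/b_t^2 = O(\log(b_T/b_0))$, together with $\sum_{t=1}^T \|\nabla f(\bw_t;z_{i_t})\|_2^2/b_t \leq 2(b_T - b_0)$. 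After taking expectations and using $\ebb[b_T]\leq b_0+G\sqrt{T}$ via Jensen's inequality, this produces
\[
\ebb_A[F_S(\bar{\bw}_T)]-F_S(\bw^*) = O\!\left(\frac{G(R+\|\bw^*\|_2)}{\sqrt{T}}\right),
\]
where the $R+\|\bw^*\|_2$ factor comes from bounding $\|\bw_1-\bw^*\|_2\leq R+\|\bw^*\|_2$.

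Combining the two bounds gives the first claim, and substituting $T\asymp n^{2/3}$ balances $GRT/n \asymp GR\, n^{-1/3}$ against $G(R+\|\bw^*\|_2)/\sqrt{T}\asymp G(R+\|\bw^*\|_2)\,n^{-1/3}$, producing the second claim. The main obstacle is the adaptive optimization bound: unlike SGD with a fixed $\eta$, the data-dependent step size $\eta/b_t$ is coupled with the random gradients, so controlling $\ebb_A[\sum_t \eta\langle \nabla f(\bw_t;z_{i_t}),\bw_t-\bw^*\rangle/b_t]$ is not an immediate unbiased-gradient argument; the cleanest route is to use the standard AdaGrad trick of replacing $\eta/b_t$ with $\eta/b_{t-1}$ up to an additive correction handled by the $\sum_t \|\nabla f\|_2^2/b_t^2$ telescoping, and then invoke conditional unbiasedness of $\nabla f(\bw_t;z_{i_t})$ given $\bw_t$.
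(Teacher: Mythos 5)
Your proposal follows the paper's proof essentially verbatim in structure: the same error decomposition \eqref{decomposition}, the same use of Corollary~\ref{cor:gen}(a) with $B\asymp GR$ (justified, as you do, by Lipschitzness on the bounded domain) plugged into Lemma~\ref{lem:gen-stab} for the $O(GRT/n)$ generalization term, and a standard AdaGrad-Norm regret bound (Lemma~\ref{lem:opt-adagrad}(a)) for the $O(G(R+\|\bw^*\|_2)/\sqrt{T})$ optimization term, then balancing at $T\asymp n^{2/3}$. The only place you diverge is inside the optimization lemma: you handle the coupling between the data-dependent step size $\eta/b_t$ and the stochastic gradient via the $b_{t-1}$-replacement trick plus a telescoping correction, whereas the paper simply multiplies the one-step inequality through by $b_t/\eta$ so that $\langle\bw_t-\bw,\nabla f(\bw_t;z_{i_t})\rangle$ appears with no step-size factor (making conditional unbiasedness immediate), and then Abel-sums the $1/\eta_t-1/\eta_{t-1}$ increments against the bounded domain and controls $\sum_t\eta_t\|\nabla f(\bw_t;z_{i_t})\|_2^2$ by the integral inequality $\sum_t a_t/\sqrt{\sum_{j\le t}a_j}\le 2\sqrt{\sum_t a_t}$; both routes are standard and reach the same $O(G(R+\|\bw^*\|_2)/\sqrt{T})$ rate, the paper's being slightly cleaner since it avoids the sign issues that arise when pulling $1/b_{t-1}$ out of $\sum_t(F_S(\bw_t)-F_S(\bw^*))/b_{t-1}$.
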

\begin{proposition}[Nonconvex and Smooth Case]\label{prop:adagrad-nonconvex}
Let $\{\bw_t\}_t$ be produced by \eqref{adagrad} and $\ebb[\|\nabla f(\bw_t;z_{i_t})\|_2^2]\leq G^2$ for all $t\in[T]$.
If $A(S)=\bw_r$ and $F_S$ is $L$-smooth, then
\[
\ebb_{S,A,r}\big[\|\nabla F(\bw_r)\|_2\big] = O\Big(G\sqrt{T/n}+GT^{-\frac{1}{4}}\log^{\frac{1}{2}} T\Big).
\]
If $T\asymp n^{\frac{2}{3}}$, one gets
\[
\ebb\big[\|\nabla F(\bw_r)\|_2\big]=O(Gn^{-\frac{1}{6}}\log^{\frac{1}{2}}n).
\]
\end{proposition}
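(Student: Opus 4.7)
\textbf{Proof Proposal for Proposition \ref{prop:adagrad-nonconvex}.}
The plan is to follow the same three-step template used in Proposition \ref{prop:sgd-nonconvex}: (i) invoke the error decomposition \eqref{decomposition-grad} at the random iterate $\bw_r$; (ii) bound the generalization gap $\ebb[\|\nabla F(\bw_r)-\nabla F_S(\bw_r)\|_2]$ by combining the gradient-stability bound of Corollary \ref{cor:gen}(b) with the stability-to-generalization transfer of Theorem \ref{thm:stab-gen-grad}; and (iii) cite an existing convergence-rate result for AdaGrad-Norm in the nonconvex smooth setting to control the empirical gradient term.

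First I would observe that AdaGrad-Norm is a symmetric sampling-determined algorithm with $I(A)=\{i_1,\ldots,i_T\}$, as already noted in the remark following Corollary \ref{cor:gen}, and that $\Pr\{n\in I(A)\}\leq T/n$ by a union bound. Applying Corollary \ref{cor:gen}(b) under the assumption that $\sup_z\ebb_A[\|\nabla f(A(S);z)\|_2^2\mid n\in I(A)]\leq G^2$ (which follows from the boundedness hypothesis on the stochastic gradients, possibly combined with a projection as in Remark \ref{rem:symmetric}) yields uniform gradient stability with parameter $\epsilon\leq 2G\sqrt{T/n}$. Theorem \ref{thm:stab-gen-grad} then gives
\[
\ebb_{S,A}\big[\|\nabla F(\bw_r)-\nabla F_S(\bw_r)\|_2\big]
\;\leq\; 4\epsilon + \sqrt{n^{-1}\ebb_S\big[\mathbb{V}_Z(\nabla f(A(S);Z))\big]}
\;=\;O\!\left(G\sqrt{T/n}+G/\sqrt{n}\right),
\]
where the variance term is dominated by $G^2$ via the second-moment assumption.

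For the optimization error I would invoke the standard convergence result for AdaGrad-Norm on nonconvex $L$-smooth objectives with bounded stochastic gradients, due to \citet{ward2020adagrad}, which yields
\[
\ebb_{S,A,r}\big[\|\nabla F_S(\bw_r)\|_2\big]\;=\;O\!\left(G\,T^{-1/4}\log^{1/2}T\right).
\]
Substituting both estimates into \eqref{decomposition-grad} gives the claimed bound $O(G\sqrt{T/n}+GT^{-1/4}\log^{1/2}T)$. Balancing the two terms by setting $\sqrt{T/n}\asymp T^{-1/4}$, i.e.\ $T\asymp n^{2/3}$, makes each term $\Theta(n^{-1/6})$ up to the logarithmic factor, giving the stated rate $O(Gn^{-1/6}\log^{1/2}n)$.

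The main obstacle, and the only step that is not purely bookkeeping, is the optimization-error step: the adaptive stepsize $\eta/b_t$ is a random quantity correlated with the stochastic gradient $\nabla f(\bw_t;z_{i_t})$, which breaks the clean descent-lemma telescoping that works for SGD. However, since this analysis has already been carried out in \citet{ward2020adagrad} under the same $L$-smoothness and bounded second-moment hypotheses, I would simply cite that bound rather than rederive it. A minor secondary subtlety is that Theorem \ref{thm:stab-gen-grad} requires gradient stability, not merely stability in function values, so one must check that the conditional second-moment bound $\sup_z\ebb_A[\|\nabla f(A(S);z)\|_2^2\mid n\in I(A)]\leq G^2$ is available; as in the SGD case this follows either from a uniform gradient bound or by working on a bounded parameter set with a final projection.
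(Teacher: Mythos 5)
Your proposal is correct and follows essentially the same route as the paper: the paper likewise combines Corollary \ref{cor:gen}(b) with Theorem \ref{thm:stab-gen-grad} to obtain the generalization bound $8G\sqrt{T/n}+G/\sqrt{n}$, cites the AdaGrad-Norm convergence rate of \citet{ward2020adagrad} (stated as Lemma \ref{lem:opt-adagrad}, Part (b)) for the optimization term, and balances with $T\asymp n^{2/3}$. Your explicit remark about needing the conditional second-moment bound for gradient stability matches the hypothesis the paper carries over from Proposition \ref{prop:sgd-nonconvex}.
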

\begin{remark}\normalfont
  Generalization behavior of adaptive gradient descent was recently studied by \citet{zhou2020towards}. They considered minibatch adaptive algorithms with a sufficiently large batch size, while the algorithms we consider here use only a single example to compute a stochastic gradient and is therefore more computationally efficient. Their analysis is based on a connection between generalization and differential privacy, and requires to add noise to achieve differential privacy. This in turn leads to a dimension-dependent bound. As a comparison, we do not require to introduce noise in algorithms and our bounds are dimension-free.
\end{remark}

\subsection{Proofs on AdaGrad-Norm\label{sec:proof-adagrad}}
The following lemma establishes the convergence rates of AdaGrad-Norm. Part (a) is for convex and nonsmooth problems, while Part (b) is for nonconvex and smooth problems. We give a simple proof of Part (a), while the proof of Part (b) can be found in \citet{ward2020adagrad}.
\begin{lemma}[Optimization Error Bound for AdaGrad-Norm\label{lem:opt-adagrad}]
  Let $\{\bw_t\}$ be the sequence produced by AdaGrad-Norm.
  \begin{enumerate}[label=(\alph*)]
    \item Let $F_S$ be convex. Assume $\|\bw\|_2\leq R$ for all $\bw\in\wcal$. Then the following bound holds for all $\bw\in\wcal$
    \[
    \ebb_A\big[F_S(\bar{\bw}_T)\big]-F_S(\bw)=O(G(R+\|\bw\|_2)/\sqrt{T}).
    \]
    \item Assume $F_S$ is $L$-smooth, $\ebb\big[\|\nabla f(\bw_t;z_{i_t})\|_2^2\big]\leq G^2$ for all $\bw_t$. Then
    \[
    \ebb_{A,r}\big[\|\nabla F_S(\bw_r)\|_2\big] = O\Big(GT^{-\frac{1}{4}}\log^{\frac{1}{2}} T\Big),
    \]
    where $r$ follows from the uniform distribution over $[T]$.
  \end{enumerate}
\end{lemma}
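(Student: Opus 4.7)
The plan is to follow the classical regret-based argument for AdaGrad-Norm, instantiated in the stochastic convex setting. Let $g_t = \nabla f(\bw_t; z_{i_t})$ denote the stochastic subgradient used at iteration $t$, so that $\ebb_A[g_t \mid \bw_t]$ is a subgradient of $F_S$ at $\bw_t$. First, I would invoke the non-expansiveness of the Euclidean projection to obtain
\[
\|\bw_{t+1} - \bw\|_2^2 \le \|\bw_t - \bw\|_2^2 - \frac{2\eta}{b_t}\langle g_t, \bw_t - \bw\rangle + \frac{\eta^2}{b_t^2}\|g_t\|_2^2.
\]
Multiplying through by $b_t/(2\eta)$ and rearranging gives the one-step regret inequality
\[
\langle g_t, \bw_t - \bw\rangle \le \frac{b_t}{2\eta}\big(\|\bw_t - \bw\|_2^2 - \|\bw_{t+1} - \bw\|_2^2\big) + \frac{\eta}{2 b_t}\|g_t\|_2^2.
\]
Summing over $t$, taking expectations, and using convexity of $F_S$ together with the tower property (so that $\ebb[\langle g_t, \bw_t - \bw\rangle] \ge \ebb[F_S(\bw_t) - F_S(\bw)]$) reduces the goal to bounding the two sums on the right in expectation.

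Two standard adaptive-stepsize tricks are then needed. For the second sum, the identity $b_t^2 - b_{t-1}^2 = \|g_t\|_2^2$ combined with $b_t + b_{t-1} \le 2b_t$ yields the path-wise inequality $\|g_t\|_2^2 / b_t \le 2(b_t - b_{t-1})$, which telescopes to $\sum_{t=1}^T \|g_t\|_2^2 / b_t \le 2 b_T$. For the first sum, because $b_t$ is nondecreasing and random, I would apply Abel's summation to rewrite
\[
\sum_{t=1}^T b_t\big(\|\bw_t - \bw\|_2^2 - \|\bw_{t+1} - \bw\|_2^2\big) = b_1\|\bw_1 - \bw\|_2^2 + \sum_{t=2}^T (b_t - b_{t-1}) \|\bw_t - \bw\|_2^2 - b_T \|\bw_{T+1} - \bw\|_2^2,
\]
and then invoke the boundedness $\|\bw\|_2 \le R$ on all of $\wcal$ (which applies to the projected iterates $\bw_t$ as well) to bound $\|\bw_t - \bw\|_2 \le R + \|\bw\|_2$ uniformly in $t$. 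Dropping the negative last term and telescoping the differences $b_t - b_{t-1}$ yields the clean upper bound $(R + \|\bw\|_2)^2 b_T$.

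Combining the two estimates yields, in expectation,
\[
\sum_{t=1}^T \ebb\big[F_S(\bw_t) - F_S(\bw)\big] \le \frac{(R + \|\bw\|_2)^2}{2\eta}\,\ebb[b_T] + \eta\, \ebb[b_T].
\]
Jensen's inequality applied to the concave function $\sqrt{\cdot}$ gives $\ebb[b_T] \le \sqrt{\ebb[b_T^2]} = \sqrt{b_0^2 + \sum_{t=1}^T \ebb[\|g_t\|_2^2]} \le \sqrt{b_0^2 + T G^2}$, so $\ebb[b_T] = O(G\sqrt{T})$. Dividing by $T$ and applying Jensen once more, this time via convexity of $F_S$ to replace the average over iterates by $\bar{\bw}_T = T^{-1}\sum_t \bw_t$, and then balancing the two terms by taking $\eta$ of constant order (say $\eta \asymp R + \|\bw\|_2$), produces the claimed rate $O\big(G(R + \|\bw\|_2)/\sqrt{T}\big)$.

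The main obstacle is managing the interaction between the adaptive, history-dependent stepsize $\eta/b_t$ and the randomness of the stochastic subgradient $g_t$, since $b_t$ itself depends on $g_t$. A naive attempt to pass a conditional expectation across the factor $1/b_t$ fails because $1/b_t$ is not measurable with respect to the history up to time $t-1$. The rearrangement of the one-step inequality so that $b_t$ appears as a \emph{multiplier} (rather than a divisor) of the telescoping quadratic term side-steps this issue, and the remaining $\|g_t\|_2^2/b_t$ term is handled path-wise via the telescoping identity before any expectation is taken. This is essentially the same device underlying the nonconvex analysis of \citet{ward2020adagrad}.
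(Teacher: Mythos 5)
Your argument for Part~(a) is correct and is essentially the paper's own proof: the same one-step inequality, the same Abel summation exploiting the monotonicity of $b_t$ together with the boundedness of $\wcal$, the same path-wise telescoping bound $\sum_{t}\|g_t\|_2^2/b_t\leq 2b_T$ (the paper obtains it by an integral comparison rather than your $b_t^2-b_{t-1}^2$ factoring, which is immaterial), and the same final Jensen step giving $\ebb[b_T]=O(G\sqrt{T})$. Note that you do not address Part~(b), but neither does the paper, which simply cites \citet{ward2020adagrad} for it.
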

\begin{proof}
  Denote $\eta_t=\eta/b_t$, then \eqref{adagrad} can be written as $\bw_{t+1}=\Pi_{\wcal}(\bw_t-\eta_t\nabla f(\bw_t;z_{i_t}))$. It then follows that
  \[
  \|\bw_{t+1}-\bw\|_2^2 \leq \|\bw_t-\bw\|_2^2 - 2\eta_t\langle\bw_t-\bw,\nabla f(\bw_t;z_{i_t})\rangle + \eta_t^2\|\nabla f(\bw_t;z_{i_t})\|_2^2.
  \]
  Re-arranging the above inequality gives
  \[
  \langle\bw_t-\bw,\nabla f(\bw_t;z_{i_t})\rangle \leq \frac{1}{2\eta_t}\Big(\|\bw_t-\bw\|_2^2-
  \|\bw_{t+1}-\bw\|_2^2\Big) + \frac{\eta_t}{2}\|\nabla f(\bw_{t};z_{i_t})\|_2^2.
  \]
  We take conditional expectation w.r.t. $z_{i_t}$ over both sides and get
  \[
  \langle\bw_t-\bw,\nabla F_S(\bw_t)\rangle \leq \ebb_{i_t}\Big[\frac{1}{2\eta_t}\Big(\|\bw_t-\bw\|_2^2-
  \|\bw_{t+1}-\bw\|_2^2\Big)\Big] + \ebb_{i_t}\Big[\frac{\eta_t}{2}\|\nabla f(\bw_{t};z_{i_t})\|_2^2\Big].
  \]
  It then follows from the convexity of $F_S$ that
  \[
  F_S(\bw_t)-F_S(\bw)\leq \ebb_{i_t}\Big[\frac{1}{2\eta_t}\Big(\|\bw_t-\bw\|_2^2-
  \|\bw_{t+1}-\bw\|_2^2\Big)\Big] + \ebb_{i_t}\Big[\frac{\eta_t}{2}\|\nabla f(\bw_{t};z_{i_t})\|_2^2\Big].
  \]
  We can take an expectation followed with a summation of the above inequality from $t=1$ to $t=T$, and get
  \begin{align*}
     & \sum_{t=1}^{T}\ebb_A\big[F_S(\bw_t)-F_S(\bw)\big] - \ebb_A\big[\frac{1}{2\eta_1}\|\bw_1-\bw\|_2^2\big] \\
     & \leq \frac{1}{2}\sum_{t=2}^{T}\ebb_A\Big[\|\bw_t-\bw\|_2^2\Big(\frac{1}{\eta_t}-\frac{1}{\eta_{t-1}}\Big)\Big]
     +\frac{1}{2}\sum_{t=1}^{T}\ebb_A\Big[\eta_t\|\nabla f(\bw_t;z_{i_t})\|_2^2\Big] \\
     & \leq (R^2+\|\bw\|_2^2)\sum_{t=2}^{T}\ebb_A\Big[\frac{1}{\eta_t}-\frac{1}{\eta_{t-1}}\Big]
     +\frac{\eta}{2}\sum_{t=1}^{T}\ebb_A\bigg[\frac{\|\nabla f(\bw_t;z_{i_t})\|_2^2}{\sqrt{\sum_{\tau=1}^{t}\|\nabla f(\bw_\tau;z_{i_\tau})\|_2^2}}\bigg] \\
     & \leq (R^2+\|\bw\|_2^2)\eta^{-1}\ebb_A\Big[\Big(\sum_{t=1}^{T}\|\nabla f(\bw_\tau;z_{i_t})\|_2^2\Big)^{\frac{1}{2}}\Big]+\eta\ebb_A\Big[\Big(\sum_{t=1}^{T}\|\nabla f(\bw_\tau;z_{i_t})\|_2^2\Big)^{\frac{1}{2}}\Big],
  \end{align*}
  where we have used the following inequality in the last step
  \[
  \sum_{t=1}^{T}\frac{a_t}{\sqrt{\sum_{j=1}^{t}a_j}} \leq \sum_{t=1}^{T}\int_{\sum_{j=1}^{t-1}a_j}^{\sum_{j=1}^{t}a_j}\frac{1}{\sqrt{x}}dx=\int_0^{\sum_{j=1}^{T}a_j}\frac{1}{\sqrt{x}}dx= 2\sqrt{\sum_{t=1}^{T}a_t}.
  \]
  It then follows from the convexity of $F_S$ that
  \[
  \ebb_A\Big[F_S(\bar{\bw}_T)-F_S(\bw)\Big]= O\Big(\frac{1}{T}\ebb_A\Big[\Big(\sum_{t=1}^{T}\|\nabla f(\bw_\tau;z_{i_t})\|_2^2\Big)^{\frac{1}{2}}\Big]
  \Big((R^2+\|\bw\|_2^2)\eta^{-1}+\eta\Big)\Big).
  \]
  The stated bound then follows.
\end{proof}

\begin{proof}[Proof of Proposition \ref{prop:adagrad}]
  Analogous to the proof of Proposition \ref{prop:sgd}, we have the following generalization error bound
  \[
  \ebb\big[F(\bar{\bw}_T)-F_S(\bar{\bw}_T)\big]=O\big(GRT/n\big).
  \]
  Lemma \ref{lem:opt-adagrad}, Part (a), implies the following optimization error bound
  \[
    \ebb_A\big[F_S(\bar{\bw}_T)\big]-F_S(\bw^*)=O(G(R+\|\bw^*\|_2)/\sqrt{T}).
  \]
  We can plug the above two inequalities back into \eqref{decomposition} and get
  \[
  \ebb\big[F(\bar{\bw}_T)\big]-F(\bw^*)=O\big(GRT/n\big)+O(G(R+\|\bw^*\|_2)/\sqrt{T}).
  \]
  One can derive the stated bound by setting $T\asymp n^{\frac{2}{3}}$. The proof is completed.
\end{proof}

\begin{proof}[Proof of Proposition \ref{prop:adagrad-nonconvex}]
Analogous to the Proof of Proposition \ref{prop:sgd-nonconvex}, we have the following generalization error bound
\[
\ebb\big[\|\nabla F(\bw_r)\|_2\big] \leq 8G\sqrt{T/n}+G/\sqrt{n} +  \ebb\big[\|\nabla F_S(\bw_r)\|_2\big].
\]
Lemma \ref{lem:opt-adagrad}, Part (b), implies the following optimization error bound
\[
\ebb_{A,r}\big[\|\nabla F_S(\bw_r)\|_2\big] = O\Big(GT^{-\frac{1}{4}}\log^{\frac{1}{2}} T\Big).
\]
We can combine the above two inequalities together and get
\[
\ebb\big[\|\nabla F(\bw_r)\|_2\big]=O\Big(G\sqrt{T/n}+GT^{-\frac{1}{4}}\log^{\frac{1}{2}} T\Big).
\]
One can choose $T\asymp n^{\frac{2}{3}}$ to get the stated bound. The proof is completed.
\end{proof}

\section{Differentially Private SGD\label{sec:dp-sgd}}
\subsection{Utility and Privacy Guarantee}
In this section, we use our stability analysis to develop a differentially private SGD with generalization guarantee for weakly-convex problems, which is useful to handle data with sensitive information~\citep{dwork2008differential}. We first introduce the definition of \emph{differential privacy}, which is a well-accepted mathematical definition of privacy.
\begin{definition}[Differential Privacy]
Let $\epsilon>0$ and $\delta\in(0,1)$.
A randomized mechanism $\acal$ provides $(\epsilon,\delta)$-differential privacy (DP) if for any two neighboring datasets $S$ and $S'$ and any set $E$ in the range of $\acal$ there holds
\[
\pbb(\acal(S)\in E)\leq e^\epsilon\pbb(\acal(S')\in E)+\delta.
\]
\end{definition}

\begin{algorithm2e}
\caption{\small Differentially Private SGD\label{alg:dp-sgd}}
\DontPrintSemicolon 
\KwIn{$\bw_1=0$, learning rates $\{\eta_t\}_t$, parameter $\beta,\epsilon,\delta>0$ and dataset $S=\{z_1,\ldots,z_n\}$}
\For{$t=1,2,\ldots,T$}{
compute $\sigma$ by Eq. \eqref{sigma}\\
draw $i_t$ uniformly from $[n]$ and $b_t\sim \ncal(0,\sigma^2\ibb_{d})$ \\
update $\bw_{t+1}$ according to Eq. \eqref{dp-sgd}
}
\KwOut{$\bw_r$ where $r\sim\mathrm{unif}[T]$}
\end{algorithm2e}


Our basic idea to develop differentially private algorithms is to inject noise in the learning process to mask the influence of any single datapoint. In particular, at the $t$-th iteration we randomly sample a noise $b_t$ from a Gaussian distribution with a variance $\sigma^2\ibb_{d}$ and build a new stochastic gradient as $\nabla f(\bw_t;z_{i_t})+b_t$. Then we move along the negative direction of this stochastic gradient as follows
\begin{equation}\label{dp-sgd}
  \bw_{t+1}=\Pi_{\wcal}\big(\bw_t-\eta_t(\nabla f(\bw_t;z_{i_t})+b_t)\big),
\end{equation}
where $\beta$ is a parameter and
\begin{equation}\label{sigma}
  \sigma^2=\frac{14G^2T}{\beta n^2\epsilon}\Big(\frac{\log(1/\delta)}{(1-\beta)\epsilon}+1\Big).
\end{equation}
We refer to our algorithm as DP-SGD and summarize the implementation in Algorithm \ref{alg:dp-sgd}. Proposition \ref{prop:privacy-dp-sgd} shows that Algorithm \ref{alg:dp-sgd} achieves the $(\epsilon,\delta)$-privacy guarantee, while Proposition \ref{prop:utility-dp-sgd} gets the utility guarantee as measured by $\|\nabla F(\bw_r)\|_2$. The proofs are given in Section \ref{sec:proof-dp-sgd}.
\begin{proposition}[Privacy guarantee\label{prop:privacy-dp-sgd}]
  Let $\epsilon>0$ and $\delta\in(0,1)$. Assume for any $z$, the function $\bw\mapsto f(\bw;z)$ is $G$-Lipschitz. If
  \begin{equation}\label{privacy-dp-sgd-condition}
  \epsilon\geq\frac{14T}{3n^2}\quad\text{and}\quad\frac{\log(1/\delta)}{\epsilon}\leq \frac{\sqrt{n}}{3\sqrt{3}}-\frac{5}{3},
  \end{equation}
  then we can choose $\beta=\frac{7T}{3n^2\epsilon}$ and Algorithm \ref{alg:dp-sgd} satisfies $(\epsilon,\delta)$-DP.
\end{proposition}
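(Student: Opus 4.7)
The plan is to derive the privacy guarantee through the Rényi differential privacy (RDP) framework and the subsampled Gaussian mechanism. At iteration $t$, Algorithm~\ref{alg:dp-sgd} releases $\nabla f(\bw_t;z_{i_t}) + b_t$ with $b_t\sim\ncal(0,\sigma^2\ibb_d)$ and $i_t$ drawn uniformly from $[n]$; the output $\bw_r$ is a post-processing of this trajectory, so any trajectory-level privacy bound descends to it. Under the $G$-Lipschitz assumption, the $L_2$-sensitivity of the query $\bw\mapsto \nabla f(\bw;z)$ to a single datapoint is at most $2G$, while the effective sampling rate is $q=1/n$. Each iteration is therefore a subsampled Gaussian mechanism whose RDP profile is well understood.

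First I would invoke a subsampled-Gaussian RDP bound: for an integer order $\alpha\geq 1$, each iteration satisfies $(\alpha,\rho_1)$-RDP with $\rho_1$ of order $\alpha G^2/(n^2\sigma^2)$, valid provided $\alpha$ is not too large relative to $\sigma n/G$. Next, by the additive composition property of RDP at fixed order $\alpha$, the $T$-fold composition is $(\alpha, T\rho_1)$-RDP. I would then convert back to $(\epsilon,\delta)$-DP via the standard identity that any $(\alpha,\rho)$-RDP mechanism is $\bigl(\rho+\log(1/\delta)/(\alpha-1),\,\delta\bigr)$-DP. This yields a total budget of the form
\[
\epsilon \;\leq\; \frac{c\,T\alpha G^2}{n^2\sigma^2} + \frac{\log(1/\delta)}{\alpha-1}
\]
for an absolute constant $c$ that ultimately gets absorbed into the factor $14$ in \eqref{sigma}.

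To match the specific noise scale in \eqref{sigma} I would split the budget via $\beta$, imposing that the first summand is at most $\beta\epsilon$ and the second at most $(1-\beta)\epsilon$. The second inequality uniquely determines $\alpha-1=\log(1/\delta)/((1-\beta)\epsilon)$. Substituting into the first inequality and solving for $\sigma^2$ gives
\[
\sigma^2 \;\geq\; \frac{c'\,TG^2}{\beta n^2\epsilon}\Big(\frac{\log(1/\delta)}{(1-\beta)\epsilon} + 1\Big),
\]
which is precisely the choice in \eqref{sigma} with $c'=14$. The prescribed value $\beta = 7T/(3n^2\epsilon)$ is the one that balances the two summands inside the parentheses.

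The most delicate step will be verifying that the chosen order $\alpha$ and sampling rate $q=1/n$ lie in the regime where the subsampled-Gaussian RDP bound holds with constants consistent with $c'=14$; this is exactly the purpose of the two side conditions in \eqref{privacy-dp-sgd-condition}. The bound $\epsilon\geq 14T/(3n^2)$ enforces $\beta\leq 1/2$, so $1-\beta$ stays bounded away from zero and neither factor in the noise formula blows up. The bound $\log(1/\delta)/\epsilon \leq \sqrt{n}/(3\sqrt{3}) - 5/3$ upper bounds $\alpha-1$ and thereby keeps $\alpha$ small compared to $\sigma n/G$, which is the regime where the amplification-by-subsampling RDP bound is valid with the required constants. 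Carefully tracking those constants through the per-iteration bound, the composition, and the RDP-to-DP conversion is the main bookkeeping obstacle.
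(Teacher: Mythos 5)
Your proposal follows essentially the same route as the paper's proof: per-iteration RDP of the subsampled Gaussian mechanism with sensitivity $2G$ and sampling rate $1/n$, additive composition over $T$ steps, conversion to $(\epsilon,\delta)$-DP with the order set to $\lambda-1=\log(1/\delta)/((1-\beta)\epsilon)$, and the two side conditions in \eqref{privacy-dp-sgd-condition} used exactly as you describe (the first forces $\beta\leq 1/2$, the second keeps the order in the validity regime of the amplification bound). The constant $14=3.5\times(2G)^2/G^2$ and the choice of $\beta$ normalizing $\frac{7T}{3\beta n^2\epsilon}=1$ are the only bookkeeping details left, and they work out as the paper's proof shows.
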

\begin{proposition}[Utility guarantee\label{prop:utility-dp-sgd}]
Let $\epsilon>0$ and $\delta\in(0,1)$. Assume for any $z$, the function $\bw\mapsto f(\bw;z)$ is $G$-Lipschitz and $F_S$ is $\rho$-weakly convex.
Let $\{\bw_t\}_t$ be produced by Algorithm \ref{alg:dp-sgd} with $\eta_t=\eta$ and $A(S)=\bw_r$. Assume $\ebb_{S,A}[\|\nabla f(\bw_t;z_{i_t})\|_2^2]\leq G^2$ and $\ebb_A[\|A(S)\|_2|n\in I(A)]\leq R$.
Let Eq. \eqref{privacy-dp-sgd-condition} hold and $\beta=\frac{7T}{3n^2\epsilon}$. If we choose $\eta\asymp1/((G+\sqrt{d}\sigma)\sqrt{\rho T})$ and $T\asymp (1+d^{\frac{1}{3}}G^{\frac{2}{3}}\log^{\frac{2}{3}}(1/\delta)\epsilon^{-\frac{2}{3}})n^{\frac{2}{3}}/(R^{\frac{2}{3}}\rho^{\frac{1}{3}})$, then
\begin{equation}\label{utility-dp-sgd}
\ebb\big[\|\nabla F_{1/(2\rho)}(\bw_r)\|_2\big]=
O(\sqrt{G}R^{\frac{1}{6}}\rho^{\frac{1}{3}}(1+d^{\frac{1}{6}}G^{\frac{1}{3}}\log^{\frac{1}{3}}(1/\delta)\epsilon^{-\frac{1}{3}})n^{-\frac{1}{6}}).
\end{equation}
\end{proposition}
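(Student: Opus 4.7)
The proof proceeds in parallel to the proof of Proposition \ref{prop:sgd-wc}, with two modifications to accommodate the Gaussian noise injected at each step. First, the effective stochastic gradient used by the update is $\nabla f(\bw_t;z_{i_t}) + b_t$. Conditional on $\bw_t$ this is an unbiased estimator of $\nabla F_S(\bw_t)$ (since $i_t$ is uniform on $[n]$ and $\ebb[b_t]=0$), and its second moment is bounded by $G^2 + d\sigma^2$. Lemma \ref{lem:opt-sgd}(d) then applies with the constant $G^2$ replaced by $G^2 + d\sigma^2$, yielding
\begin{equation*}
\ebb\big[\|\nabla F_{S,1/(2\rho)}(\bw_r)\|_2\big] = O\big((G+\sqrt{d}\sigma)\sqrt{\rho\eta}+1/\sqrt{T\eta}\big).
\end{equation*}

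Second, I handle stability by coupling the random indices $\{i_t\}$ and the Gaussian noise $\{b_t\}$ when running Algorithm \ref{alg:dp-sgd} on neighboring datasets $S,S'$. Under this coupling DP-SGD is sampling-determined: if the differing index is never selected, then identical updates are applied at each step and the two trajectories coincide. Corollary \ref{cor:gen}(c) therefore gives $\ebb_A[\|A(S)-A(S')\|_2] \leq 2RT/n$, and Theorem \ref{thm:gen-argument} applied with this stability parameter produces
\begin{equation*}
\ebb\big[\|\nabla F_{S,1/(2\rho)}(A(S)) - \nabla F_{1/(2\rho)}(A(S))\|_2\big] \leq 4G/\sqrt{n} + \sqrt{64GR\rho T/n}.
\end{equation*}

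Combining these via the decomposition \eqref{decomposition-envelope}, I obtain
\begin{equation*}
\ebb\big[\|\nabla F_{1/(2\rho)}(\bw_r)\|_2\big] = O\Big((G+\sqrt{d}\sigma)\sqrt{\rho\eta}+\tfrac{1}{\sqrt{T\eta}}+\sqrt{\tfrac{GR\rho T}{n}}+\tfrac{G}{\sqrt{n}}\Big).
\end{equation*}
The choice $\eta \asymp 1/((G+\sqrt{d}\sigma)\sqrt{\rho T})$ balances the first two terms, collapsing them to $O((G+\sqrt{d}\sigma)^{1/2}\rho^{1/4}T^{-1/4})$, and balancing this in turn with $\sqrt{GR\rho T/n}$ (as a function of $T$) produces the stated iteration count. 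Finally I substitute the value of $\sigma$, which under the choice $\beta = 7T/(3n^2\epsilon)$ from Proposition \ref{prop:privacy-dp-sgd} reduces to $\sigma^2 = O(G^2(1+\log(1/\delta)/\epsilon))$, yielding the advertised utility bound \eqref{utility-dp-sgd}.

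The main obstacle is extending Lemma \ref{lem:opt-sgd}(d) of \citet{davis2019stochastic} to updates driven by $\nabla f(\bw_t;z_{i_t}) + b_t$ rather than $\nabla f(\bw_t;z_{i_t})$; this statement is not in the paper as cited, but the original Davis--Drusvyatskiy analysis depends on the stochastic gradient only through its unbiasedness and a second-moment bound, so the extension is essentially automatic with $G^2$ replaced by $G^2 + d\sigma^2$. A secondary subtlety is verifying that after plugging the prescribed $T$ into the formula for $\sigma$ the constraint $\beta = 7T/(3n^2\epsilon) < 1$ from \eqref{privacy-dp-sgd-condition} is still respected, so that the approximation $1-\beta \asymp 1$ used in the final simplification of $\sigma^2$ is justified.
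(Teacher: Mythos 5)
Your proposal follows essentially the same route as the paper's proof: extend Lemma \ref{lem:opt-sgd}(d) to the noise-augmented update with second moment $G^2+d\sigma^2$, observe that the injected noise does not change the sampling-determined stability bound (your explicit coupling of $\{i_t\}$ and $\{b_t\}$ makes rigorous what the paper asserts in one line), combine via \eqref{decomposition-envelope} and \eqref{proof-stab-gen-argument}, and then tune $\eta$, $T$ and bound $\sigma^2$ using $\beta\leq 1/2$. The two subtleties you flag (the extension of the Davis--Drusvyatskiy bound and the consistency of $\beta<1$ under the prescribed $T$) are exactly the points the paper handles via the phrase ``analogous to Lemma \ref{lem:opt-sgd}, Part (d)'' and the standing assumption \eqref{privacy-dp-sgd-condition}, so the argument is correct as written.
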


\subsection{Proofs on Differentially Private SGD\label{sec:proof-dp-sgd}}

In this section, we prove privacy and utility guarantee for DP-SGD. To this aim, we first study the R\'enyi differential privacy~\citep{mironov2017renyi}, and then transform it to $(\epsilon,\delta)$-DP. 
\begin{definition}
  For $\lambda>1,\rho>0$, a randomized mechanism $\acal$ satisfies $(\lambda,\rho)$-R\'enyi differential privacy (RDP) if for all neighboring datasets $S$ and $S'$ we have
  \[
  D_\lambda(\acal(S)\|\acal(S')):=\frac{1}{\lambda-1}\log\int\Big(\frac{P_{\acal(S)}(\bw)}{P_{\acal(S')}(\bw)}\Big)^\lambda dP_{\acal(S')}(\bw)\leq\rho,
  \]
  where $P_{\acal(S)}(\bw)$ and $P_{\acal(S')}(\bw)$ are the density of $\acal(S)$ and $\acal(S')$, respectively.
\end{definition}
We first introduce some necessary lemmas.
The following lemma establishes the RDP of a Gaussian mechanism together with subsampling~\citep{liang2020exploring}.
\begin{lemma}[\citealt{liang2020exploring}\label{lem:gauss-mec}]
  Consider a mechanism $\mcal:\zcal^m\mapsto\rbb^d$ and let $\Delta$ be its $\ell_2$-sensitivity, i.e., $\Delta=\sup_{S\sim S'}\|\mcal(S)-\mcal(S')\|_2$. The Gaussian mechanism $\acal=\mcal+\ncal(0,\sigma^2\ibb_d)$ applied to a subset of samples that are drawn uniformly without replacement with subsampling rate $p$ satisfies $(\lambda,3.5p^2\lambda\Delta^2/\sigma^2)$-RDP if
  \[
  \sigma^2\geq0.67\Delta^2\quad\text{and}\quad \lambda-1\leq\frac{2\sigma^2}{3\Delta^2}\log\Big(\frac{1}{\lambda p(1+\sigma^2/\Delta^2)}\Big).
  \]
\end{lemma}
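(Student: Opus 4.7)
The plan is to combine the closed-form Rényi divergence between shifted Gaussians with a privacy-amplification-by-subsampling argument, which together give exactly the $(\lambda, 3.5 p^2 \lambda \Delta^2/\sigma^2)$ bound. The skeleton is Mironov-style RDP for the base Gaussian mechanism, refined by the Abadi et al. / Wang et al. sampling amplification analysis.

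First I would dispose of the base case without subsampling. For neighboring datasets $S \sim S'$, the outputs $\mcal(S) + \ncal(0,\sigma^2\ibb_d)$ and $\mcal(S') + \ncal(0,\sigma^2\ibb_d)$ are two Gaussians with identical covariance whose means differ by at most $\Delta$ in $\ell_2$-norm. The standard closed form $D_\lambda(\ncal(\mu_1,\sigma^2\ibb_d)\|\ncal(\mu_2,\sigma^2\ibb_d)) = \lambda\|\mu_1-\mu_2\|_2^2/(2\sigma^2)$ immediately gives $(\lambda, \lambda\Delta^2/(2\sigma^2))$-RDP. This is the vanilla statement; the work is all in the amplification.

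Next I would amplify via subsampling. For neighboring $S,S'$ differing in one index, the subsampled output distribution can be written as a mixture $Q_S = (1-p) Q_0 + p Q_1$ and $Q_{S'} = (1-p) Q_0 + p Q_1'$, where $Q_0$ is the conditional distribution when the differing index is \emph{not} drawn (identical under $S$ and $S'$), while $Q_1,Q_1'$ are the conditional Gaussians when it is drawn. To bound $D_\lambda(Q_S\|Q_{S'})$ I would study $\ebb_{Q_{S'}}\big[(Q_S/Q_{S'})^\lambda\big]$ by writing $Q_S/Q_{S'} = 1 + p(Q_1/Q_0 - Q_1'/Q_0)\cdot (Q_0/Q_{S'})$ and expanding in a binomial/Taylor series around $1$. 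The critical observation is that the linear-in-$p$ term vanishes after integrating against $Q_{S'}$, which is exactly why the final bound carries a $p^2$ factor rather than a $p$ factor.

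The main obstacle will be controlling the higher-order terms $p^k \binom{\lambda}{k}\ebb[(Q_1/Q_0 - 1)^k]$ uniformly in $k$. Each such moment can be reduced to a moment generating function of a Gaussian shifted by at most $\Delta$, which produces quantities of the form $\exp(k(k-1)\Delta^2/(2\sigma^2))$. The two hypotheses $\sigma^2\geq 0.67\Delta^2$ and $\lambda-1\leq \tfrac{2\sigma^2}{3\Delta^2}\log\!\big(1/(\lambda p(1+\sigma^2/\Delta^2))\big)$ are precisely what make the geometric series telescopes converge and ensure that the quadratic $p^2\lambda\Delta^2/\sigma^2$ term dominates, with the residual bookkeeping contributing the explicit constant $3.5$. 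Nailing down that constant through the telescoping, without losing the $p^2$ gain, is the delicate step; everything else is mechanical once the expansion is set up.
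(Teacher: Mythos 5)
The paper does not actually prove this lemma: it is imported verbatim from \citet{liang2020exploring} (and ultimately descends from the subsampled-R\'enyi-DP analyses of Abadi et al.\ and Wang--Balle--Kasiviswanathan), so there is no in-paper argument to compare against. Your sketch reconstructs the standard route to such results, and its skeleton is sound: the closed form $D_\lambda(\ncal(\mu_1,\sigma^2\ibb_d)\|\ncal(\mu_2,\sigma^2\ibb_d))=\lambda\|\mu_1-\mu_2\|_2^2/(2\sigma^2)$ for the base mechanism, the mixture decomposition $Q_S=(1-p)Q_0+pQ_1$, $Q_{S'}=(1-p)Q_0+pQ_1'$ (valid here because, conditional on the differing index not being drawn, the without-replacement subsample is taken from the $n-1$ common elements, so the two conditional laws coincide), and the observation that the first-order term in $p$ integrates to zero, which is what produces the $p^2$ gain.

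The genuine gap is that the entire content of this lemma lives in the part you defer as ``residual bookkeeping.'' The qualitative conclusion $O(p^2\lambda\Delta^2/\sigma^2)$ follows from your expansion, but the lemma asserts the explicit constant $3.5$, and the hypotheses $\sigma^2\geq0.67\Delta^2$ and $\lambda-1\leq\frac{2\sigma^2}{3\Delta^2}\log\big(\frac{1}{\lambda p(1+\sigma^2/\Delta^2)}\big)$ are not generic convergence conditions --- they are reverse-engineered from the tail of the binomial expansion, and verifying that they suffice to make the sum of the terms $k\geq3$ at most a fixed fraction of the $k=2$ term is the actual proof. Two technical points you would have to resolve to get there: (i) your expansion is in powers of $p(Q_1-Q_1')/Q_{S'}$, but the moments you propose to bound are $\ebb[(Q_1/Q_0-1)^k]$; passing from one to the other requires controlling the factor $Q_0/Q_{S'}$ (or reorganizing via a ternary divergence as in the Wang et al.\ analysis), and this is where sampling without replacement differs from Poisson sampling; (ii) the moment bounds of the form $\exp(k(k-1)\Delta^2/(2\sigma^2))$ grow superexponentially in $k$, so the series is not geometric for free --- the constraint on $\lambda$ is exactly what truncates it. As written, your proposal proves a correct statement of the right shape but not the stated lemma; since the paper's downstream use (Proposition \ref{prop:privacy-dp-sgd}) plugs in these exact constants, the quantitative step cannot be waved away.
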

The following lemma shows the RDP of an adaptive composition of several mechanisms.
\begin{lemma}[\citealt{mironov2017renyi}\label{lem:rdp-adaptive}]
  If $\acal_1,\ldots,\acal_k$ are randomized algorithms satisfying, respectively, $(\alpha,\epsilon_1)$-RDP,\ldots,$(\alpha,\epsilon_k)$-RDP, then their composition defined as $(\acal_1(S),\ldots,\acal_k(S))$ is $(\alpha,\epsilon_1+\ldots,+\epsilon_k)$-RDP. Moreover, the $i$th algorithm can be chosen on the basis of the outputs of $\acal_1,\ldots,\acal_{i-1}$.
\end{lemma}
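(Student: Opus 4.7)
The plan is to prove this via a chain-rule decomposition of the joint output density combined with an inductive peeling argument. Fix neighboring datasets $S,S'$ and let $P(y_1,\ldots,y_k)$ and $Q(y_1,\ldots,y_k)$ denote the joint densities of $(\acal_1(S),\ldots,\acal_k(S))$ and $(\acal_1(S'),\ldots,\acal_k(S'))$, respectively. Because each $\acal_i$ is chosen on the basis of the prior outputs, these densities factorize as $P(y_1,\ldots,y_k)=\prod_{i=1}^k P_i(y_i\mid y_{<i})$ and $Q(y_1,\ldots,y_k)=\prod_{i=1}^k Q_i(y_i\mid y_{<i})$, where $P_i(\cdot\mid y_{<i})$ is the conditional density of $\acal_i(S)$ given history $y_{<i}$, and analogously for $Q_i$. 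The likelihood ratio thus decomposes as a product of conditional likelihood ratios.

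Next, I would expand the R\'enyi divergence of order $\alpha$ as an iterated integral and peel off one integration at a time from the inside out:
\[
\int \Big(\frac{P}{Q}\Big)^{\alpha} dQ = \int\cdots\int \prod_{i=1}^{k} \Big(\frac{P_i(y_i\mid y_{<i})}{Q_i(y_i\mid y_{<i})}\Big)^{\alpha} Q_i(y_i\mid y_{<i}) \, dy_k\cdots dy_1.
\]
The per-step hypothesis that $\acal_k$ is $(\alpha,\epsilon_k)$-RDP between $S$ and $S'$ applies for every fixed history $y_{<k}$ and yields
\[
\int Q_k(y_k\mid y_{<k}) \Big(\frac{P_k(y_k\mid y_{<k})}{Q_k(y_k\mid y_{<k})}\Big)^{\alpha} dy_k \leq \exp\big((\alpha-1)\epsilon_k\big).
\]
Since this bound is uniform in $y_{<k}$, it can be pulled out of the remaining $k-1$ integrals. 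Iterating the same step (or, equivalently, induction on $k$) yields $\int (P/Q)^{\alpha}\, dQ \leq \exp((\alpha-1)\sum_{i=1}^{k}\epsilon_i)$, so taking logarithms and dividing by $\alpha-1$ gives $D_\alpha(P\|Q)\leq \sum_{i=1}^{k}\epsilon_i$, which is exactly the required RDP bound for the composition.

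The main obstacle is conceptual rather than technical: one must interpret the per-step assumption as providing a uniform RDP guarantee across every realization of the history on which $\acal_i$ may adaptively depend. With that interpretation in hand, the argument reduces to a product-measure calculation and the subadditivity of the R\'enyi divergence exponents along the chain rule; without it, one would additionally have to control the randomness in the adaptive selection of $\acal_i$ itself, which would obstruct the clean peeling of the inner integrals.
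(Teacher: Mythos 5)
Your proof is correct and follows the standard argument: the paper states this lemma without proof, citing Mironov (2017), and your chain-rule factorization of the joint densities followed by peeling the innermost integral using the history-uniform per-step bound $\int Q_k(y_k\mid y_{<k})\big(P_k(y_k\mid y_{<k})/Q_k(y_k\mid y_{<k})\big)^{\alpha}dy_k\leq\exp((\alpha-1)\epsilon_k)$ is precisely the proof given in that reference. You also correctly identify the one point that makes adaptivity harmless, namely that the RDP hypothesis must be read as holding uniformly over every realization of the preceding outputs on which $\acal_i$ may depend.
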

The following lemma shows the connection between DP and RDP.
\begin{lemma}[\citealt{mironov2017renyi}\label{lem:rdp-dp}]
  If a randomized mechanism $\acal$ satisfies $(\lambda,\rho)$-RDP, then $\acal$ satisfies $(\rho+\log(1/\delta)/(\lambda-1),\delta)$-DP for all $\delta\in(0,1)$.
\end{lemma}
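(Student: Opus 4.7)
The plan is to deploy the standard privacy loss random variable argument. For a fixed pair of neighboring datasets $S,S'$, define $Z(w) := \log\bigl(P_{\acal(S)}(w)/P_{\acal(S')}(w)\bigr)$ and analyze it under $W \sim \acal(S)$. The first step is to connect the moment generating function of $Z$ to the R\'enyi divergence: unfolding the definition,
\[
\ebb_{W\sim\acal(S)}\bigl[e^{(\lambda-1)Z(W)}\bigr] = \int \Bigl(\frac{P_{\acal(S)}}{P_{\acal(S')}}\Bigr)^{\lambda} dP_{\acal(S')} = e^{(\lambda-1)D_\lambda(\acal(S)\|\acal(S'))} \le e^{(\lambda-1)\rho},
\]
where the final inequality is precisely the $(\lambda,\rho)$-RDP hypothesis.

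The second step is to turn this in-expectation bound into a pointwise tail bound. Applying Markov's inequality to the nonnegative variable $e^{(\lambda-1)Z(W)}$ gives $\pbb\bigl(Z(W) > t\bigr) \le e^{(\lambda-1)(\rho - t)}$ for every $t > \rho$, and choosing $t = \epsilon' := \rho + \log(1/\delta)/(\lambda-1)$ collapses the right-hand side to exactly $\delta$. This is the only quantitative calculation in the proof; the target threshold $\epsilon'$ in the lemma statement is reverse-engineered so that the Markov estimate lands on $\delta$ on the nose.

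The final step is an event-wise decomposition. For any measurable $E$ in the range of $\acal$, let $B := \{w : P_{\acal(S)}(w) \le e^{\epsilon'} P_{\acal(S')}(w)\}$. Splitting $E$ into $E\cap B$ and $E\cap B^c$, the contribution of the first piece to $\pbb(\acal(S)\in E)$ is at most $e^{\epsilon'}\pbb(\acal(S')\in E)$ by the pointwise density ratio bound, and the contribution of the second piece is at most $\pbb(\acal(S)\in B^c) = \pbb(Z(W) > \epsilon') \le \delta$ by the tail bound from the previous step. Adding the two pieces, and applying the same argument with $S,S'$ swapped, yields the claimed $(\epsilon',\delta)$-DP inequality in both directions. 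No substantive obstacle is expected here: this is a clean textbook reduction, and the only minor care needed is that $P_{\acal(S)}$ and $P_{\acal(S')}$ be defined with respect to a common dominating measure (automatic for Gaussian mechanisms like the one used in Algorithm \ref{alg:dp-sgd}).
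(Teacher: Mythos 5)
Your argument is correct and yields exactly the stated constant $\epsilon'=\rho+\log(1/\delta)/(\lambda-1)$, but note that the paper does not prove Lemma~\ref{lem:rdp-dp} at all: it is imported as a black box from \citet{mironov2017renyi}. Your route also differs from the proof in that cited source. \citet{mironov2017renyi} argues via a ``probability preservation'' inequality obtained from H\"older: for any event $E$, the RDP hypothesis gives $\pbb(\acal(S)\in E)\leq\big(e^{\rho}\,\pbb(\acal(S')\in E)\big)^{(\lambda-1)/\lambda}$, and a two-case analysis on whether this quantity exceeds $\delta$ then delivers $(\epsilon',\delta)$-DP with the same $\epsilon'$. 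You instead use the privacy-loss random variable $Z$, bound its moment generating function by the R\'enyi divergence, apply Markov to get $\pbb(Z>\epsilon')\leq\delta$, and finish with the standard event decomposition over $B=\{w:P_{\acal(S)}(w)\leq e^{\epsilon'}P_{\acal(S')}(w)\}$; this is the moments-accountant-style reduction and is equally rigorous. What each buys: the H\"older argument is shorter and avoids introducing $Z$, while your tail-bound formulation isolates the reusable intermediate fact $\pbb(Z>\epsilon')\leq\delta$ (a strictly stronger, ``pointwise'' privacy guarantee) and makes transparent why the threshold $\epsilon'$ has the form it does. Your closing caveats --- that the lemma must hold for both orderings of the neighboring pair, and that the densities need a common dominating measure (equivalently, absolute continuity, which finiteness of $D_\lambda$ already forces and which is automatic for the Gaussian mechanism of Algorithm~\ref{alg:dp-sgd}) --- are the right ones. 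No gap.
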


We are now ready to prove the privacy and utility guarantee.
\begin{proof}[Proof of Proposition \ref{prop:privacy-dp-sgd}]
  Consider the mechanism $\acal_t=\mcal_t+b_t$, where $\mcal_t(z)=\nabla f(\bw_t;z)$. Since $f$ is $G$-Lipschitz continuous, we know
  \[
  \sup_{z,z'}\|\nabla f(\bw_t;z)-\nabla f(\bw_t;z')\|_2\leq\Delta:= 2G
  \]
  and therefore the $\ell_2$ sensitivity of $\mcal_t$ is $2G$.
  Note
  \[
  \frac{\sigma^2}{\Delta^2}=\frac{14G^2T}{\beta n^2\epsilon}\Big(\frac{\log(1/\delta)}{(1-\beta)\epsilon}+1\Big)\frac{1}{4G^2}=\frac{7T}{2\beta n^2\epsilon}\Big(\frac{\log(1/\delta)}{(1-\beta)\epsilon}+1\Big).
  \]
  According to Lemma \ref{lem:gauss-mec}, we know $\mcal_t$ satisfies $\Big(\lambda,\frac{\lambda\beta\epsilon}{T\big(\frac{\log(1/\delta)}{(\beta-1)\epsilon}+1\big)}\Big)$-RDP if
  \begin{equation}\label{privacy-dp-sgd-1}
  \frac{7T}{2\beta n^2\epsilon}\Big(\frac{\log(1/\delta)}{(1-\beta)\epsilon}+1\Big)\geq 0.67
  \end{equation}
  and
  \[
  \lambda-1\leq \frac{7T}{3\beta n^2\epsilon}\Big(\frac{\log(1/\delta)}{(1-\beta)\epsilon}+1\Big)\log\Big(\frac{n}{\lambda\big(1+\frac{7T}{2\beta n^2\epsilon}\big(\frac{\log(1/\delta)}{(1-\beta)\epsilon}+1\big)\big)}\Big).
  \]
  Let $\lambda=\frac{\log(1/\delta)}{(1-\beta)\epsilon}+1$.
  Then the above inequality becomes
  \begin{equation}\label{privacy-dp-sgd-2}
    \frac{\log(1/\delta)}{(1-\beta)\epsilon}\leq \frac{7T}{3\beta n^2\epsilon}\Big(\frac{\log(1/\delta)}{(1-\beta)\epsilon}+1\Big)\log\Big(\frac{n}{\big(\frac{\log(1/\delta)}{(1-\beta)\epsilon}+1\big)\big(1+\frac{7T}{2\beta n^2\epsilon}\big(\frac{\log(1/\delta)}{(1-\beta)\epsilon}+1\big)\big)}\Big).
  \end{equation}
  We first suppose Eq. \eqref{privacy-dp-sgd-1}, \eqref{privacy-dp-sgd-2} hold and prove the stated bound under these conditions.
  With our definition of $\lambda$, we know $\mcal_t$ satisfies $\big(\frac{\log(1/\delta)}{(1-\beta)\epsilon}+1,\frac{\beta\epsilon}{T}\big)$-RDP for any $t\in[T]$. By the adaptive composition (Lemma \ref{lem:rdp-adaptive}), we know Algorithm \ref{alg:dp-sgd} satisfies $\big(\frac{\log(1/\delta)}{(1-\beta)\epsilon}+1,\beta\epsilon\big)$-RDP. It then follows from Lemma \ref{lem:rdp-dp} that Algorithm \ref{alg:dp-sgd} satisfies $(\epsilon,\delta)$-DP. We now show that Eq. \eqref{privacy-dp-sgd-1} and Eq. \eqref{privacy-dp-sgd-2} hold. Since $\epsilon\geq14T/(3n^2)$ we know $\beta\leq1/2$.
  It is clear
  \begin{equation}\label{privacy-dp-sgd-3}
  \frac{7T}{3\beta n^2\epsilon}=\frac{7T\cdot 3n^2\epsilon}{21Tn^2\epsilon}=1\geq0.67.
  \end{equation}
  Therefore, Eq. \eqref{privacy-dp-sgd-1} holds.
  Furthermore, the assumption $\frac{\log(1/\delta)}{\epsilon}\leq \frac{\sqrt{n}}{3\sqrt{3}}-\frac{5}{3}$ implies $1+\frac{3}{2}\big(\frac{2\log(1/\delta)}{\epsilon}+1\big)\leq\frac{\sqrt{n}}{\sqrt{3}}$. It then follows that
  \begin{align*}
    & \Big(\frac{\log(1/\delta)}{(1-\beta)\epsilon}+1\Big)\Big(1+\frac{7T}{2\beta n^2\epsilon}\Big(\frac{\log(1/\delta)}{(1-\beta)\epsilon}+1\Big)\Big) \leq \Big(\frac{2\log(1/\delta)}{\epsilon}+1\Big)\Big(1+
    \frac{7T}{2\beta n^2\epsilon}\Big(\frac{2\log(1/\delta)}{\epsilon}+1\Big)\Big) \\
     & = \Big(\frac{2\log(1/\delta)}{\epsilon}+1\Big)\Big(1+\frac{3}{2}\Big(\frac{2\log(1/\delta)}{\epsilon}+1\Big)\Big)  \leq \Big(1+\frac{3}{2}\Big(\frac{2\log(1/\delta)}{\epsilon}+1\Big)\Big)^2\leq \frac{n}{3}.
  \end{align*}
  We can combine the above inequality and Eq. \eqref{privacy-dp-sgd-3} to show Eq. \eqref{privacy-dp-sgd-2}. The proof is completed.
\end{proof}
\begin{proof}[Proof of Proposition \ref{prop:utility-dp-sgd}]
  Analogous to Lemma \ref{lem:opt-sgd}, Part (d), we have the following optimization error bounds for Algorithm \ref{alg:dp-sgd}
  \[
  \ebb_A\big[\|\nabla F_{S,1/(2\rho)}(\bw_r)\|_2^2\big]=O\Big(\frac{\rho T(G^2+\sigma^2d)\eta^2+1}{T\eta}\Big)
  \]
  and therefore
  \[
  \ebb_A\big[\|\nabla F_{S,1/(2\rho)}(\bw_r)\|_2\big]=O\big((G+\sigma\sqrt{d})\sqrt{\rho\eta}+1/\sqrt{T\eta}\big).
  \]
  Adding noise does not affect the stability analysis~\citep{bassily2020stability}, we then use Eq. \eqref{proof-stab-gen-argument} to get
  \[
  \ebb\big[\big\|\nabla F_{1/(2\rho)}(\bw_r)\big\|_2\big] = O\Big((G+\sigma\sqrt{d})\sqrt{\rho\eta}+\sqrt{GR\rho T/n}+1/\sqrt{T\eta}+G/\sqrt{n}\Big).
  \]
  For our choice of $\beta$, we have
  \begin{equation}\label{sigma-bound}
  \sigma^2= \frac{14G^2T\cdot 3n^2\epsilon}{7T n^2\epsilon}\Big(\frac{\log(1/\delta)}{(1-\beta)\epsilon}+1\Big)\leq 6G^2\Big(\frac{2\log(1/\delta)}{\epsilon}+1\Big),
  \end{equation}
  where we have used $\beta\leq1/2$ established in the proof of Proposition \ref{prop:privacy-dp-sgd}.
  If we choose $\eta\asymp1/((G+\sqrt{d}\sigma)\sqrt{\rho T})$ and use Eq. \eqref{sigma-bound}, we get
  \[
  \ebb\big[\|\nabla F_{1/(2\rho)}(\bw_r)\|_2\big] = O\Big(\sqrt{G+\sqrt{d}\sigma}(\rho/T)^{\frac{1}{4}} +  \sqrt{GR\rho T/n}+G/\sqrt{n}\Big).
  \]
  We can choose $T\asymp (1+d^{\frac{1}{3}}G^{\frac{2}{3}}\log^{\frac{2}{3}}(1/\delta)\epsilon^{-\frac{2}{3}})n^{\frac{2}{3}}/(R^{\frac{2}{3}}\rho^{\frac{1}{3}})$ to get Eq. \eqref{utility-dp-sgd}.
  The proof is completed.
\end{proof}

\section{Convergence Rates with Relaxed Bounded Gradient Assumptions\label{sec:proof-bound-var}}
In this section, we study the convergence rates of SGD for solving weakly convex problems. The existing convergence analysis requires a bounded subgradient assumption as $\ebb_{i_t}[\|\nabla f(\bw_t;z_{i_t})\|_2^2]\leq G^2$ for some $G>0$~\citep{davis2019stochastic}. We aim to relax this assumption to a more general assumption as
\begin{equation}\label{self-bounding}
  \ebb_{i_t}[\|\nabla f(\bw_t;z_{i_t})\|_2^2]\leq B_1\ebb_{i_t}[f(\bw_t;z_{i_t})]+B_2,
\end{equation}
where $B_1,B_2\geq0$ are two constants. This assumption implies that the gradients can be bounded in terms of function values, which has been considered in the literature \citep{zhang2004solving}.
\begin{theorem}\label{thm:bound-var}
Let $\wcal=\rbb^d$ and $\sum_{t=1}^{T}\eta_t^2=O(1)$. Let $\{\bw_t\}_t$ be produced by the algorithm $A$ defined by SGD and Eq. \eqref{self-bounding} holds for all $t\in\nbb$.
If $F_S$ is $\rho$-weakly convex, then
\begin{equation}\label{bound-var-a}
  \sum_{t=1}^{T}\eta_t\ebb_A[\|\nabla F_{S,1/2\rho}(\bw_{t})\|_2^2]=O\Big(1+\sum_{t=1}^{T}\eta_t^2\Big).
\end{equation}
\end{theorem}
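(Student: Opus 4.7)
The plan is to extend the Moreau-envelope descent argument underlying Lemma \ref{lem:opt-sgd}(d), replacing the uniform bound $G^2$ on $\ebb_{i_t}[\|\nabla f(\bw_t; z_{i_t})\|_2^2]$ by the data-dependent quantity $B_1 F_S(\bw_t) + B_2$ supplied by the self-bounding condition \eqref{self-bounding}, and then controlling the resulting $F_S$-dependent residual through an auxiliary Lyapunov analysis on the iterate norms. I would first record the standard structural facts: since $F_S$ is $\rho$-weakly convex, the envelope $F_{S,1/(2\rho)}$ is $2\rho$-smooth, $\nabla F_{S,1/(2\rho)}(\bw_t) = 2\rho(\bw_t - \hat{\bw}_t)$ where $\hat{\bw}_t$ is the corresponding proximal point, and $F_{S,1/(2\rho)}(\bw_t) = F_S(\hat{\bw}_t) + \rho\|\bw_t - \hat{\bw}_t\|_2^2$. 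Applying the smoothness descent lemma to $\bw_{t+1} = \bw_t - \eta_t \nabla f(\bw_t; z_{i_t})$ and taking conditional expectation over $i_t$ gives
\[
\ebb_{i_t}[F_{S,1/(2\rho)}(\bw_{t+1})] \leq F_{S,1/(2\rho)}(\bw_t) - \eta_t \langle \nabla F_{S,1/(2\rho)}(\bw_t), \nabla F_S(\bw_t)\rangle + \rho\eta_t^2 \ebb_{i_t}[\|\nabla f(\bw_t; z_{i_t})\|_2^2].
\]

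Next I would lower-bound the cross term by combining weak convexity of $F_S$ with the optimality of $\hat{\bw}_t$ as a minimizer of $\bv \mapsto F_S(\bv) + \rho\|\bv - \bw_t\|_2^2$: the former yields $\langle \nabla F_S(\bw_t), \bw_t - \hat{\bw}_t\rangle \geq F_S(\bw_t) - F_S(\hat{\bw}_t) - \tfrac{\rho}{2}\|\bw_t - \hat{\bw}_t\|_2^2$, and the latter gives $F_S(\bw_t) - F_S(\hat{\bw}_t) \geq \rho\|\bw_t - \hat{\bw}_t\|_2^2$, so together $\langle \nabla F_{S,1/(2\rho)}(\bw_t), \nabla F_S(\bw_t)\rangle \geq \tfrac{1}{4}\|\nabla F_{S,1/(2\rho)}(\bw_t)\|_2^2$. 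Substituting this and \eqref{self-bounding} into the envelope descent, taking total expectation, summing from $t=1$ to $T$, and using that $F_{S,1/(2\rho)}$ is bounded below (which follows from nonnegativity of the loss) deliver
\[
\sum_{t=1}^T \eta_t \ebb[\|\nabla F_{S,1/(2\rho)}(\bw_t)\|_2^2] = O\Big(1 + \sum_{t=1}^T \eta_t^2\Big) + 4\rho B_1 \sum_{t=1}^T \eta_t^2 \ebb[F_S(\bw_t)].
\]

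The principal obstacle is the residual $\sum_t \eta_t^2 \ebb[F_S(\bw_t)]$, since the self-bounding hypothesis does not directly control $\ebb[F_S(\bw_t)]$. My approach would be a companion Lyapunov analysis on $\|\bw_t\|_2^2$. Expanding the square, using weak convexity at the anchor $\bv = 0$ to write $\langle \nabla F_S(\bw_t), \bw_t\rangle \geq F_S(\bw_t) - F_S(0) - \tfrac{\rho}{2}\|\bw_t\|_2^2$, and once more invoking \eqref{self-bounding} produces the companion recursion
\[
\ebb[\|\bw_{t+1}\|_2^2] \leq (1+\rho\eta_t)\ebb[\|\bw_t\|_2^2] - \eta_t(2 - B_1\eta_t)\ebb[F_S(\bw_t)] + 2\eta_t F_S(0) + B_2 \eta_t^2.
\]
When $B_1 \eta_t \leq 1$ the coefficient of $\ebb[F_S(\bw_t)]$ is negative, so rearranging expresses $\eta_t \ebb[F_S(\bw_t)]$ as a telescopic expression in $\ebb[\|\bw_t\|_2^2]$ plus $O(\eta_t + \eta_t^2)$ noise.

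The most delicate step, and the one I expect to be the main technical hurdle, is coupling the two recursions so that the $(1+\rho\eta_t)$ amplification on $\ebb[\|\bw_t\|_2^2]$ does not compound across iterations and spoil the eventual $O(1)$ bound. I would do this by analyzing a joint Lyapunov functional $\Phi_t := F_{S,1/(2\rho)}(\bw_t) + \alpha\|\bw_t\|_2^2$, choosing the weight $\alpha$ (of order $\rho B_1$) so that the $\ebb[F_S(\bw_t)]$ coefficient in $\ebb[\Phi_{t+1} - \Phi_t]$ is non-positive while the residual growth on $\ebb[\|\bw_t\|_2^2]$ is of order $\eta_t^2$. Under the standing assumption $\sum_t \eta_t^2 = O(1)$, a Grönwall-type product $\prod_t(1 + O(\eta_t^2)) = O(1)$ closes the recursion and gives $\sum_t \eta_t^2 \ebb[F_S(\bw_t)] = O\bigl(1 + \sum_t \eta_t^2\bigr)$. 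Substituting this back into the envelope-descent bound from the second paragraph then yields the claimed estimate \eqref{bound-var-a}.
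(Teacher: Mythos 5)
Your first two paragraphs are sound and essentially parallel to the paper's argument: the descent step for the $2\rho$-smooth envelope, the lower bound $\langle\nabla F_{S,1/(2\rho)}(\bw_t),\nabla F_S(\bw_t)\rangle\geq\frac{1}{4}\|\nabla F_{S,1/(2\rho)}(\bw_t)\|_2^2$ obtained from weak convexity plus the strong convexity of $\bv\mapsto F_S(\bv)+\rho\|\bv-\bw_t\|_2^2$, and the substitution of \eqref{self-bounding} all check out, leaving the residual $\sum_t\eta_t^2\ebb[F_S(\bw_t)]$ as the one remaining obstacle. It is precisely there that your proposal has a genuine gap. Your companion recursion on $\|\bw_t\|_2^2$ carries the amplification factor $(1+\rho\eta_t)$, which is \emph{linear} in $\eta_t$: iterating it gives $\prod_t(1+\rho\eta_t)=\exp(\Theta(\rho\sum_t\eta_t))$, and $\sum_t\eta_t$ is exactly the quantity that must be large for the final bound to be nontrivial. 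The joint Lyapunov functional $\Phi_t=F_{S,1/(2\rho)}(\bw_t)+\alpha\|\bw_t\|_2^2$ cannot repair this, because neither the envelope recursion nor the norm recursion produces a negative drift term proportional to $\eta_t\|\bw_t\|_2^2$; the only negative terms available are $-\eta_t\|\nabla F_{S,1/(2\rho)}(\bw_t)\|_2^2$ and $-\eta_t F_S(\bw_t)$, and a weakly convex nonnegative loss gives you no inequality of the form $\|\bw\|_2^2\lesssim F_S(\bw)$ or $\|\bw\|_2^2\lesssim\|\nabla F_{S,1/(2\rho)}(\bw)\|_2^2$. Your claim that the residual growth on $\ebb[\|\bw_t\|_2^2]$ can be made of order $\eta_t^2$ by tuning $\alpha$ is therefore unsubstantiated, and I do not see how to make it true: near a strict local maximum the term $\langle\bw_t,\nabla F_S(\bw_t)\rangle\approx-\frac{\rho}{2}\|\bw_t\|_2^2$ is attained and the iterate norm genuinely inflates geometrically in $\rho\eta_t$.

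The paper closes this step without any auxiliary recursion, and the trick is worth internalizing: write $B_1\eta_t^2F_S(\bw_t)=B_1\eta_t^2\big(F_S(\bw_t)-F_S(\hat\bw_t)\big)+B_1\eta_t^2F_S(\hat\bw_t)$, note that $F_S(\bw_t)-F_S(\hat\bw_t)\geq\frac{3\rho}{2}\|\bw_t-\hat\bw_t\|_2^2\geq 0$ so that under $B_1\eta_t\leq 1$ the first piece is dominated by $\eta_t\big(F_S(\bw_t)-F_S(\hat\bw_t)\big)$ and absorbed into half of the descent term, while the second piece satisfies $F_S(\hat\bw_t)\leq F_{S,1/(2\rho)}(\bw_t)$. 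This turns the whole inequality into a self-contained recursion $\ebb_A[F_{S,1/(2\rho)}(\bw_{t+1})]\leq(1+\rho B_1\eta_t^2)\ebb_A[F_{S,1/(2\rho)}(\bw_t)]+\rho B_2\eta_t^2$, whose amplification factor involves $\eta_t^2$ rather than $\eta_t$ and is therefore controlled by the standing hypothesis $\sum_t\eta_t^2=O(1)$, giving $\ebb_A[F_{S,1/(2\rho)}(\bw_t)]=O(1)$ uniformly in $t$ and hence $\sum_t\eta_t^2\ebb[F_S(\bw_t)]=O(\sum_t\eta_t^2)$. If you replace your third and fourth paragraphs by this absorption argument (which does require the additional step-size condition $\eta_t\leq 1/B_1$), the rest of your derivation goes through.
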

\begin{proof}
For any $t\in\nbb$, denote $\hat{\bw}_t=\mbox{Prox}_{F_S,1/(2\rho)}(\bw_t)$.
According to the definition of Moreau envelope and the definition of $\hat{\bw}_t$, we know
\begin{align}
   & \ebb_{i_t}[F_{S,1/2\rho}(\bw_{t+1})] \leq \ebb_{i_t}\big[F_S(\hat{\bw}_t)+\rho\|\hat{\bw}_t-\bw_{t+1}\|_2^2\big] \notag\\
   & = F_S(\hat{\bw}_t) + \rho\ebb_{i_t}\big[\|\hat{\bw}_t-\bw_t+\eta_t\nabla f(\bw_t;z_{i_t})\|_2^2\big] \notag\\
   & = F_S(\hat{\bw}_t)+\rho\|\hat{\bw}_t-\bw_t\|_2^2 + 2\rho\eta_t\ebb_{i_t}\big[\langle\hat{\bw}_t-\bw_t,\nabla f(\bw_t;z_{i_t})\rangle\big]+\rho\eta_t^2\ebb_{i_t}[\|\nabla f(\bw_t;z_{i_t})\|_2^2] \notag\\
   & \leq F_{S,1/(2\rho)}(\bw_t) + 2\rho\eta_t\langle\hat{\bw}_t-\bw_t,\nabla F_S(\bw_t)\rangle + \rho\eta_t^2\ebb_{i_t}\big[B_1f(\bw_t;z_{i_t})+B_2\big] \notag\\
   & \leq F_{S,1/(2\rho)}(\bw_t) + 2\rho\eta_t\big(F_S(\hat{\bw}_t)-F_S(\bw_t)+\frac{\rho}{2}\|\bw_t-\hat{\bw}_t\|_2^2\big) + \rho\eta_t^2\big[B_1F_S(\bw_t)+B_2\big],\label{bound-var-0}
\end{align}
where in the last second step we have used Eq. \eqref{self-bounding} and in the last inequality we have used the weak convexity of $F_S$. By the weak convexity of $F_S$, we know the function $\bw\mapsto F_S(\bw)+\rho\|\bw-\bv\|_2^2$ is $\rho$-strongly convex. This together with the definition of $\hat{\bw}_t$ implies
\begin{align*}
  & F_S(\bw_t)-F_S(\hat{\bw}_t)-\frac{\rho}{2}\|\bw_t-\hat{\bw}_t\|_2^2 \\
  &= \big(F_S(\bw_t)+\rho\|\bw_t-\bw_t\|_2^2\big)
  -\big(F_S(\hat{\bw}_t)+\rho\|\bw_t-\hat{\bw}_t\|_2^2\big)+\frac{\rho}{2}\|\bw_t-\hat{\bw}_t\|_2^2\\
  & \geq\rho\|\bw_t-\hat{\bw}_t\|_2^2.
\end{align*}
It then follows that
\begin{equation}\label{bound-var-1}
  F_S(\bw_t)-F_S(\hat{\bw}_t)\geq\frac{3\rho}{2}\|\bw_t-\hat{\bw}_t\|_2^2\geq0.
\end{equation}
This together with the assumption $\eta_t\leq1/B_1$ implies
\begin{align*}
  B_1\eta_t^2F_S(\bw_t) & = B_1\eta_t^2\big(F_S(\bw_t)-F_S(\hat{\bw}_t)\big) + B_1\eta_t^2F_S(\hat{\bw}_t) \\
   & \leq \eta_t\big(F_S(\bw_t)-F_S(\hat{\bw}_t)\big) + B_1\eta_t^2F_S(\hat{\bw}_t).
\end{align*}
We can plug the above inequality back into Eq. \eqref{bound-var-0} and derive
\begin{align}
   & \ebb_{i_t}[F_{S,1/2\rho}(\bw_{t+1})] \notag \\
   & \leq  F_{S,1/(2\rho)}(\bw_t) + \rho^2\eta_t\|\bw_t-\hat{\bw}_t\|_2^2 + \big(2\rho\eta_t-\rho\eta_t\big)\big(F_S(\hat{\bw}_t)-F_S(\bw_t)\big)+\rho\eta_t^2\big(B_1F_S(\hat{\bw}_t)+B_2\big)\notag\\
   & = F_{S,1/(2\rho)}(\bw_t) +  \rho^2\eta_t\|\bw_t-\hat{\bw}_t\|_2^2 +\rho\eta_t\big(F_S(\hat{\bw}_t)-F_S(\bw_t)\big)+\rho\eta_t^2\big(B_1F_S(\hat{\bw}_t)+B_2\big)\notag\\
   & \leq F_{S,1/(2\rho)}(\bw_t) - \frac{\rho^2\eta_t\|\bw_t-\hat{\bw}_t\|_2^2}{2} + \rho\eta_t^2\big(B_1F_{S,1/(2\rho)}(\bw_t)+B_2\big),\label{bound-var-2}
\end{align}
where we have used Eq. \eqref{bound-var-1} and the following inequality in the last step
\[
F_{S,1/(2\rho)}(\bw_t)=\inf_{\bv}\big\{F_S(\bv)+\rho\|\bv-\bw_t\|_2^2\big\}=F_S(\hat{\bw}_t)+\rho\|\hat{\bw}_t-\bw_t\|_2^2\geq F_S(\hat{\bw}_t).
\]
It then follows from Eq. \eqref{bound-var-2} that
\[
\ebb_A\big[F_{S,1/2\rho}(\bw_{t+1})\big] \leq \big(1+\rho B_1\eta_t^2\big)\ebb_A\big[F_{S,1/2\rho}(\bw_{t})\big] + \rho\eta_t^2B_2
\]
and therefore ($1+a\leq\exp(a)$)
\begin{align*}
  \ebb_A\big[F_{S,1/2\rho}(\bw_{t+1})\big] & \leq \prod_{k=1}^{t}\big(1+\rho B_1\eta_k^2\big)F_{S,1/2\rho}(\bw_{1}) + \rho B_2\sum_{k=1}^{t}\eta_k^2\prod_{\tilde{k}=k+1}^{t}\big(1+\rho B_1\eta_{\tilde{k}}^2\big) \\
  & \leq \prod_{k=1}^{t}\exp(\rho B_1\eta_k^2\big)F_{S,1/2\rho}(\bw_{1}) + \rho B_2\sum_{k=1}^{t}\eta_k^2\prod_{\tilde{k}=k+1}^{t}\exp(\rho B_1\eta_{\tilde{k}}^2\big)\\
  & = \exp\Big(\rho B_1\sum_{k=1}^t\eta_k^2\Big)F_{S,1/2\rho}(\bw_{1}) + \rho B_2\sum_{k=1}^{t}\eta_k^2\exp\Big(\rho B_1\sum_{\tilde{k}=k+1}^{t}\eta_{\tilde{k}}^2\Big).
\end{align*}
Since $\sum_{t=1}^{T}\eta_t^2=O(1)$, we further get
\begin{equation}\label{bound-var-3}
  \ebb_A\big[F_{S,1/2\rho}(\bw_{t})\big]=O(1),\qquad\forall t\in[T].
\end{equation}
We can plug the above inequality back into Eq. \eqref{bound-var-2} and get
\[
\ebb_{A}[F_{S,1/2\rho}(\bw_{t+1})] = \ebb_{A}[F_{S,1/2\rho}(\bw_{t})]- \frac{\rho^2\eta_t\ebb_A[\|\bw_t-\hat{\bw}_t\|_2^2]}{2} + O(\eta_t^2).
\]
The above inequality can be reformulated as
\[
\frac{\rho^2\eta_t\ebb_A[\|\bw_t-\hat{\bw}_t\|_2^2]}{2}=\ebb_{A}[F_{S,1/2\rho}(\bw_{t})]-\ebb_{A}[F_{S,1/2\rho}(\bw_{t+1})]+O(\eta_t^2).
\]
We can take a summation of the above inequality from $t=1$ to $t=T$ and get
\[
\frac{\rho^2}{2}\sum_{t=1}^{T}\eta_t\ebb_A[\|\bw_t-\hat{\bw}_t\|_2^2]=O\Big(1+\sum_{t=1}^{T}\eta_t^2\Big).
\]
According to the definition of $\hat{\bw}_t$, we know $\nabla F_{S,1/2\rho}(\bw_{t})=2\rho(\bw_t-\hat{\bw}_t)$. It then follows that
\[
\sum_{t=1}^{T}\eta_t\ebb_A[\|\nabla F_{S,1/2\rho}(\bw_{t})\|_2^2]=O\Big(1+\sum_{t=1}^{T}\eta_t^2\Big).
\]
This gives the bound \eqref{bound-var-a}. The proof is completed. 
\end{proof}


\begin{thebibliography}{63}
\providecommand{\natexlab}[1]{#1}
\providecommand{\url}[1]{\texttt{#1}}
\expandafter\ifx\csname urlstyle\endcsname\relax
  \providecommand{\doi}[1]{doi: #1}\else
  \providecommand{\doi}{doi: \begingroup \urlstyle{rm}\Url}\fi

\bibitem[Attia and Koren(2022)]{attia2022uniform}
A.~Attia and T.~Koren.
\newblock Uniform stability for first-order empirical risk minimization.
\newblock In \emph{Conference on Learning Theory}, pages 3313--3332. PMLR,
  2022.

\bibitem[Bartlett and Mendelson(2002)]{bartlett2002rademacher}
P.~Bartlett and S.~Mendelson.
\newblock Rademacher and gaussian complexities: Risk bounds and structural
  results.
\newblock \emph{Journal of Machine Learning Research}, 3:\penalty0 463--482,
  2002.

\bibitem[Bassily et~al.(2020)Bassily, Feldman, Guzm{\'a}n, and
  Talwar]{bassily2020stability}
R.~Bassily, V.~Feldman, C.~Guzm{\'a}n, and K.~Talwar.
\newblock Stability of stochastic gradient descent on nonsmooth convex losses.
\newblock \emph{Advances in Neural Information Processing Systems}, 33, 2020.

\bibitem[Bottou et~al.(2018)Bottou, Curtis, and
  Nocedal]{bottou2018optimization}
L.~Bottou, F.~E. Curtis, and J.~Nocedal.
\newblock Optimization methods for large-scale machine learning.
\newblock \emph{SIAM Review}, 60\penalty0 (2):\penalty0 223--311, 2018.

\bibitem[Bousquet and Bottou(2008)]{bousquet2008tradeoffs}
O.~Bousquet and L.~Bottou.
\newblock The tradeoffs of large scale learning.
\newblock In \emph{Advances in Neural Information Processing Systems}, pages
  161--168, 2008.

\bibitem[Bousquet and Elisseeff(2002)]{bousquet2002stability}
O.~Bousquet and A.~Elisseeff.
\newblock Stability and generalization.
\newblock \emph{Journal of Machine Learning Research}, 2\penalty0
  (Mar):\penalty0 499--526, 2002.

\bibitem[Bousquet et~al.(2020)Bousquet, Klochkov, and
  Zhivotovskiy]{bousquet2020sharper}
O.~Bousquet, Y.~Klochkov, and N.~Zhivotovskiy.
\newblock Sharper bounds for uniformly stable algorithms.
\newblock In \emph{Conference on Learning Theory}, pages 610--626, 2020.

\bibitem[Cesa-Bianchi et~al.(2004)Cesa-Bianchi, Conconi, and
  Gentile]{cesa2004generalization}
N.~Cesa-Bianchi, A.~Conconi, and C.~Gentile.
\newblock On the generalization ability of on-line learning algorithms.
\newblock \emph{IEEE Transactions on Information Theory}, 50\penalty0
  (9):\penalty0 2050--2057, 2004.

\bibitem[Charles and Papailiopoulos(2018)]{charles2018stability}
Z.~Charles and D.~Papailiopoulos.
\newblock Stability and generalization of learning algorithms that converge to
  global optima.
\newblock In \emph{International Conference on Machine Learning}, pages
  744--753, 2018.

\bibitem[Chen et~al.(2018)Chen, Jin, and Yu]{chen2018stability}
Y.~Chen, C.~Jin, and B.~Yu.
\newblock Stability and convergence trade-off of iterative optimization
  algorithms.
\newblock \emph{arXiv preprint arXiv:1804.01619}, 2018.

\bibitem[Davis and Drusvyatskiy(2019)]{davis2019stochastic}
D.~Davis and D.~Drusvyatskiy.
\newblock Stochastic model-based minimization of weakly convex functions.
\newblock \emph{SIAM Journal on Optimization}, 29\penalty0 (1):\penalty0
  207--239, 2019.

\bibitem[Davis and Drusvyatskiy(2021)]{davis2021graphical}
D.~Davis and D.~Drusvyatskiy.
\newblock Graphical convergence of subgradients in nonconvex optimization and
  learning.
\newblock \emph{Mathematics of Operations Research}, 2021.

\bibitem[Defazio et~al.(2014)Defazio, Bach, and
  Lacoste-Julien]{defazio2014saga}
A.~Defazio, F.~Bach, and S.~Lacoste-Julien.
\newblock {SAGA}: A fast incremental gradient method with support for
  non-strongly convex composite objectives.
\newblock In \emph{Advances in Neural Information Processing Systems}, pages
  1646--1654, 2014.

\bibitem[Duchi et~al.(2010)Duchi, Hazan, and Singer]{duchi2010adaptive}
J.~Duchi, E.~Hazan, and Y.~Singer.
\newblock Adaptive subgradient methods for online learning and stochastic
  optimization.
\newblock \emph{Conference on Learning Theory}, page 257, 2010.

\bibitem[Dwork(2008)]{dwork2008differential}
C.~Dwork.
\newblock Differential privacy: A survey of results.
\newblock In \emph{International conference on theory and applications of
  models of computation}, pages 1--19. Springer, 2008.

\bibitem[Elisseeff et~al.(2005)Elisseeff, Evgeniou, and
  Pontil]{elisseeff2005stability}
A.~Elisseeff, T.~Evgeniou, and M.~Pontil.
\newblock Stability of randomized learning algorithms.
\newblock \emph{Journal of Machine Learning Research}, 6\penalty0
  (Jan):\penalty0 55--79, 2005.

\bibitem[Fang et~al.(2018)Fang, Li, Lin, and Zhang]{fang2018near}
C.~Fang, C.~Li, Z.~Lin, and T.~Zhang.
\newblock Near-optimal non-convex optimization via stochastic path integrated
  differential estimator.
\newblock \emph{Advances in Neural Information Processing Systems},
  31:\penalty0 689, 2018.

\bibitem[Feldman and Vondrak(2018)]{feldman2018generalization}
V.~Feldman and J.~Vondrak.
\newblock Generalization bounds for uniformly stable algorithms.
\newblock In \emph{Advances in Neural Information Processing Systems}, pages
  9747--9757, 2018.

\bibitem[Feldman and Vondrak(2019)]{feldman2019high}
V.~Feldman and J.~Vondrak.
\newblock High probability generalization bounds for uniformly stable
  algorithms with nearly optimal rate.
\newblock In \emph{Conference on Learning Theory}, pages 1270--1279, 2019.

\bibitem[Foster et~al.(2018)Foster, Sekhari, and Sridharan]{foster2018uniform}
D.~J. Foster, A.~Sekhari, and K.~Sridharan.
\newblock Uniform convergence of gradients for non-convex learning and
  optimization.
\newblock In \emph{Advances in Neural Information Processing Systems}, pages
  8759--8770, 2018.

\bibitem[Ghadimi and Lan(2013)]{ghadimi2013stochastic}
S.~Ghadimi and G.~Lan.
\newblock Stochastic first-and zeroth-order methods for nonconvex stochastic
  programming.
\newblock \emph{SIAM Journal on Optimization}, 23\penalty0 (4):\penalty0
  2341--2368, 2013.

\bibitem[Guo et~al.(2017)Guo, Lin, and Zhou]{guo2017learning}
Z.-C. Guo, S.-B. Lin, and D.-X. Zhou.
\newblock Learning theory of distributed spectral algorithms.
\newblock \emph{Inverse Problems}, 33\penalty0 (7):\penalty0 074009, 2017.

\bibitem[Hardt et~al.(2016)Hardt, Recht, and Singer]{hardt2016train}
M.~Hardt, B.~Recht, and Y.~Singer.
\newblock Train faster, generalize better: Stability of stochastic gradient
  descent.
\newblock In \emph{International Conference on Machine Learning}, pages
  1225--1234, 2016.

\bibitem[Hazan(2016)]{hazan2016introduction}
E.~Hazan.
\newblock Introduction to online convex optimization.
\newblock \emph{Foundations and Trends{\textregistered} in Optimization},
  2\penalty0 (3-4):\penalty0 157--325, 2016.

\bibitem[Johnson and Zhang(2013)]{johnson2013accelerating}
R.~Johnson and T.~Zhang.
\newblock Accelerating stochastic gradient descent using predictive variance
  reduction.
\newblock In \emph{Advances in Neural Information Processing Systems}, pages
  315--323, 2013.

\bibitem[Kingma and Ba(2015)]{kingma2015adam}
D.~P. Kingma and J.~Ba.
\newblock Adam: A method for stochastic optimization.
\newblock In \emph{International Conference on Learning Representations}, 2015.

\bibitem[Klochkov and Zhivotovskiy(2021)]{klochkov2021stability}
Y.~Klochkov and N.~Zhivotovskiy.
\newblock Stability and deviation optimal risk bounds with convergence rate $ o
  (1/n) $.
\newblock \emph{Advances in Neural Information Processing Systems}, 34, 2021.

\bibitem[Koren et~al.(2022)Koren, Livni, Mansour, and Sherman]{koren2022benign}
T.~Koren, R.~Livni, Y.~Mansour, and U.~Sherman.
\newblock Benign underfitting of stochastic gradient descent.
\newblock In \emph{Advances in Neural Information Processing Systems}, pages
  19605--19617, 2022.

\bibitem[Kuzborskij and Lampert(2018)]{kuzborskij2018data}
I.~Kuzborskij and C.~Lampert.
\newblock Data-dependent stability of stochastic gradient descent.
\newblock In \emph{International Conference on Machine Learning}, pages
  2820--2829, 2018.

\bibitem[Lei and Tang(2021)]{lei2021learning}
Y.~Lei and K.~Tang.
\newblock Learning rates for stochastic gradient descent with nonconvex
  objectives.
\newblock \emph{IEEE Transactions on Pattern Analysis and Machine
  Intelligence}, 43\penalty0 (12):\penalty0 4505--4511, 2021.

\bibitem[Lei and Ying(2020)]{lei2020fine}
Y.~Lei and Y.~Ying.
\newblock Fine-grained analysis of stability and generalization for stochastic
  gradient descent.
\newblock In \emph{International Conference on Machine Learning}, pages
  5809--5819, 2020.

\bibitem[Lei et~al.(2021)Lei, Hu, and Tang]{lei2021generalization}
Y.~Lei, T.~Hu, and K.~Tang.
\newblock Generalization performance of multi-pass stochastic gradient descent
  with convex loss functions.
\newblock \emph{Journal of Machine Learning Research}, 22:\penalty0 1--41,
  2021.

\bibitem[Li et~al.(2020)Li, Luo, and Qiao]{li2019generalization}
J.~Li, X.~Luo, and M.~Qiao.
\newblock On generalization error bounds of noisy gradient methods for
  non-convex learning.
\newblock In \emph{International Conference on Learning Representations}, 2020.

\bibitem[Li and Orabona(2019)]{li2019convergence}
X.~Li and F.~Orabona.
\newblock On the convergence of stochastic gradient descent with adaptive
  stepsizes.
\newblock In \emph{International Conference on Artificial Intelligence and
  Statistics}, pages 983--992. PMLR, 2019.

\bibitem[Liang et~al.(2020)Liang, Wang, Gu, Osher, and Yao]{liang2020exploring}
Z.~Liang, B.~Wang, Q.~Gu, S.~Osher, and Y.~Yao.
\newblock Exploring private federated learning with laplacian smoothing.
\newblock \emph{arXiv preprint arXiv:2005.00218}, 2020.

\bibitem[Lin et~al.(2016)Lin, Camoriano, and Rosasco]{lin2016generalization}
J.~Lin, R.~Camoriano, and L.~Rosasco.
\newblock Generalization properties and implicit regularization for multiple
  passes {SGM}.
\newblock In \emph{International Conference on Machine Learning}, pages
  2340--2348, 2016.

\bibitem[Liu et~al.(2017)Liu, Lugosi, Neu, and Tao]{liu2017algorithmic}
T.~Liu, G.~Lugosi, G.~Neu, and D.~Tao.
\newblock Algorithmic stability and hypothesis complexity.
\newblock In \emph{International Conference on Machine Learning}, pages
  2159--2167, 2017.

\bibitem[London et~al.(2016)London, Huang, and Getoor]{london2016stability}
B.~London, B.~Huang, and L.~Getoor.
\newblock Stability and generalization in structured prediction.
\newblock \emph{The Journal of Machine Learning Research}, 17\penalty0
  (1):\penalty0 7808--7859, 2016.

\bibitem[Maurer(2005)]{maurer2005algorithmic}
A.~Maurer.
\newblock Algorithmic stability and meta-learning.
\newblock \emph{Journal of Machine Learning Research}, 6\penalty0
  (Jun):\penalty0 967--994, 2005.

\bibitem[Mei et~al.(2018)Mei, Bai, and Montanari]{mei2018landscape}
S.~Mei, Y.~Bai, and A.~Montanari.
\newblock The landscape of empirical risk for nonconvex losses.
\newblock \emph{The Annals of Statistics}, 46\penalty0 (6A):\penalty0
  2747--2774, 2018.

\bibitem[Mironov(2017)]{mironov2017renyi}
I.~Mironov.
\newblock R{\'e}nyi differential privacy.
\newblock In \emph{2017 IEEE 30th computer security foundations symposium
  (CSF)}, pages 263--275. IEEE, 2017.

\bibitem[Mohri et~al.(2012)Mohri, Rostamizadeh, and
  Talwalkar]{mohri2012foundations}
M.~Mohri, A.~Rostamizadeh, and A.~Talwalkar.
\newblock \emph{Foundations of Machine Learning}.
\newblock MIT press, 2012.

\bibitem[Mou et~al.(2018)Mou, Wang, Zhai, and Zheng]{mou2018generalization}
W.~Mou, L.~Wang, X.~Zhai, and K.~Zheng.
\newblock Generalization bounds of sgld for non-convex learning: Two
  theoretical viewpoints.
\newblock In \emph{Conference on Learning Theory}, pages 605--638, 2018.

\bibitem[M{\"u}cke et~al.(2019)M{\"u}cke, Neu, and Rosasco]{mucke2019beating}
N.~M{\"u}cke, G.~Neu, and L.~Rosasco.
\newblock Beating sgd saturation with tail-averaging and minibatching.
\newblock In \emph{Advances in Neural Information Processing Systems}, pages
  12568--12577, 2019.

\bibitem[Nesterov(1983)]{nesterov1983method}
Y.~E. Nesterov.
\newblock A method for solving the convex programming problem with convergence
  rate o ($1/k^2$).
\newblock In \emph{Dokl. akad. nauk Sssr}, volume 269, pages 543--547, 1983.

\bibitem[Neu and Lugosi(2022)]{neu2022generalization}
G.~Neu and G.~Lugosi.
\newblock Generalization bounds via convex analysis.
\newblock In \emph{Conference on Learning Theory}, pages 3524--3546, 2022.

\bibitem[Neu et~al.(2021)Neu, Dziugaite, Haghifam, and Roy]{neu2021information}
G.~Neu, G.~K. Dziugaite, M.~Haghifam, and D.~M. Roy.
\newblock Information-theoretic generalization bounds for stochastic gradient
  descent.
\newblock In \emph{Conference on Learning Theory}, pages 3526--3545. PMLR,
  2021.

\bibitem[Pillaud-Vivien et~al.(2018)Pillaud-Vivien, Rudi, and
  Bach]{pillaud2018statistical}
L.~Pillaud-Vivien, A.~Rudi, and F.~Bach.
\newblock Statistical optimality of stochastic gradient descent on hard
  learning problems through multiple passes.
\newblock In \emph{Advances in Neural Information Processing Systems}, pages
  8114--8124, 2018.

\bibitem[Rakhlin et~al.(2005)Rakhlin, Mukherjee, and
  Poggio]{rakhlin2005stability}
A.~Rakhlin, S.~Mukherjee, and T.~Poggio.
\newblock Stability results in learning theory.
\newblock \emph{Analysis and Applications}, 3\penalty0 (04):\penalty0 397--417,
  2005.

\bibitem[Richards and Kuzborskij(2021)]{richards2021stability}
D.~Richards and I.~Kuzborskij.
\newblock Stability \& generalisation of gradient descent for shallow neural
  networks without the neural tangent kernel.
\newblock \emph{Advances in Neural Information Processing Systems}, 34, 2021.

\bibitem[Richards and Rabbat(2021)]{richards2021learning}
D.~Richards and M.~Rabbat.
\newblock Learning with gradient descent and weakly convex losses.
\newblock In \emph{International Conference on Artificial Intelligence and
  Statistics}, pages 1990--1998. PMLR, 2021.

\bibitem[Russo and Zou(2016)]{russo2016controlling}
D.~Russo and J.~Zou.
\newblock Controlling bias in adaptive data analysis using information theory.
\newblock In \emph{Artificial Intelligence and Statistics}, pages 1232--1240.
  PMLR, 2016.

\bibitem[Schmidt et~al.(2017)Schmidt, Le~Roux, and Bach]{schmidt2017minimizing}
M.~Schmidt, N.~Le~Roux, and F.~Bach.
\newblock Minimizing finite sums with the stochastic average gradient.
\newblock \emph{Mathematical Programming}, 162\penalty0 (1-2):\penalty0
  83--112, 2017.

\bibitem[Shalev-Shwartz et~al.(2010)Shalev-Shwartz, Shamir, Srebro, and
  Sridharan]{shalev2010learnability}
S.~Shalev-Shwartz, O.~Shamir, N.~Srebro, and K.~Sridharan.
\newblock Learnability, stability and uniform convergence.
\newblock \emph{Journal of Machine Learning Research}, 11\penalty0
  (Oct):\penalty0 2635--2670, 2010.

\bibitem[Smale and Zhou(2007)]{smale2007learning}
S.~Smale and D.-X. Zhou.
\newblock Learning theory estimates via integral operators and their
  approximations.
\newblock \emph{Constructive approximation}, 26\penalty0 (2):\penalty0
  153--172, 2007.

\bibitem[Vaswani et~al.(2019)Vaswani, Bach, and Schmidt]{vaswani2019fast}
S.~Vaswani, F.~Bach, and M.~Schmidt.
\newblock Fast and faster convergence of sgd for over-parameterized models and
  an accelerated perceptron.
\newblock In \emph{International Conference on Artificial Intelligence and
  Statistics}, pages 1195--1204, 2019.

\bibitem[Ward et~al.(2020)Ward, Wu, and Bottou]{ward2020adagrad}
R.~Ward, X.~Wu, and L.~Bottou.
\newblock Adagrad stepsizes: Sharp convergence over nonconvex landscapes.
\newblock \emph{Journal of Machine Learning Research}, 21:\penalty0 1--30,
  2020.

\bibitem[Xu and Raginsky(2017)]{xu2017information}
A.~Xu and M.~Raginsky.
\newblock Information-theoretic analysis of generalization capability of
  learning algorithms.
\newblock \emph{Advances in Neural Information Processing Systems},
  2017:\penalty0 2525--2534, 2017.

\bibitem[Yuan and Li(2021)]{yuan2021stability}
X.~Yuan and P.~Li.
\newblock Stability and risk bounds of iterative hard thresholding.
\newblock In \emph{International Conference on Artificial Intelligence and
  Statistics}, pages 1702--1710. PMLR, 2021.

\bibitem[Zhang and Zhou(2019)]{DBLP:conf/colt/ZhangZ19}
L.~Zhang and Z.-H. Zhou.
\newblock Stochastic approximation of smooth and strongly convex functions:
  Beyond the {\textdollar}o(1/t){\textdollar} convergence rate.
\newblock In \emph{Conference on Learning Theory}, pages 3160--3179, 2019.

\bibitem[Zhang(2004)]{zhang2004solving}
T.~Zhang.
\newblock Solving large scale linear prediction problems using stochastic
  gradient descent algorithms.
\newblock In \emph{International Conference on Machine Learning}, pages
  919--926, 2004.

\bibitem[Zhou et~al.(2018)Zhou, Chen, Cao, Tang, Yang, and
  Gu]{zhou2018convergenceb}
D.~Zhou, J.~Chen, Y.~Cao, Y.~Tang, Z.~Yang, and Q.~Gu.
\newblock On the convergence of adaptive gradient methods for nonconvex
  optimization.
\newblock \emph{arXiv preprint arXiv:1808.05671}, 2018.

\bibitem[Zhou et~al.(2020)Zhou, Karimi, Yu, Xu, and Li]{zhou2020towards}
Y.~Zhou, B.~Karimi, J.~Yu, Z.~Xu, and P.~Li.
\newblock Towards better generalization of adaptive gradient methods.
\newblock \emph{Advances in Neural Information Processing Systems}, 33, 2020.

\end{thebibliography}
\end{document}